\title[]{Analyzing Monotonic Linear Interpolation\\in Neural Network Loss Landscapes}
\newif\ifcomments
\newif\ifarxiv
  \newcommand{\colornote}[3]{{\color{#1}\bf{#2: #3}\normalfont}}
  \newcommand{\colornote}[3]{}
\newcommand{\ba}{\mathbf{a}}
\newcommand{\bv}{\mathbf{v}}
\newcommand{\bx}{\mathbf{x}}
\newcommand{\by}{\mathbf{y}}
\newcommand{\bz}{\mathbf{z}}
\newcommand{\btheta}{\boldsymbol{\theta}}
\newcommand{\vecop}[1]{\mathrm{vec}(#1)}
\newcommand{\calL}{\mathcal{L}}
\newcommand{\calN}{\mathcal{N}}
\newcommand{\bbR}{\mathbb{R}}
\newcommand{\bbE}{\mathbb{E}}
\newcommand{\bbN}{\mathbb{N}}
\DeclareMathOperator*{\argmin}{arg\,min}
\begin{document}
\maketitle
\begin{abstract}
Linear interpolation between initial neural network parameters and converged parameters after training with stochastic gradient descent (SGD) typically leads to a monotonic decrease in the training objective. This Monotonic Linear Interpolation (MLI) property, first observed by~\citet{goodfellow2014qualitatively}, persists in spite of the non-convex objectives and highly non-linear training dynamics of neural networks. Extending this work, we evaluate several hypotheses for this property that, to our knowledge, have not yet been explored. Using tools from differential geometry, we draw connections between the interpolated paths in function space and the monotonicity of the network --- providing sufficient conditions for the MLI property under mean squared error. While the MLI property holds under various settings (e.g.~network architectures and learning problems), we show in practice that networks violating the MLI property can be produced systematically, by encouraging the weights to move far from initialization. The MLI property raises important questions about the loss landscape geometry of neural networks and highlights the need to further study their global properties.
\end{abstract}

\section{Introduction}
\label{sec:intro}

A simple and lightweight method to probe neural network loss landscapes is to linearly interpolate between the parameters at initialization and the parameters found after training. More formally, consider a neural network with parameters $\btheta \in \bbR^d$ trained with respect to loss function $\calL \colon \bbR^d \rightarrow \bbR$ on a dataset $\mathcal{D}$. Let the neural network be initialized with some parameters $\btheta_0$. Then, using a gradient descent optimizer, the network converges to some final parameters $\btheta_T$. A linear path is then constructed between these two parameters denoted $\btheta_{\alpha}=(1-\alpha)\btheta_{0}+\alpha\btheta_T$. A surprising phenomenon, first observed by \citet{goodfellow2014qualitatively}, is that the function $\calL(\btheta_{\alpha})$ typically monotonically decreases on the interval $\alpha \in [0,1]$. We call this effect the \emph{Monotonic Linear Interpolation (MLI) property} of neural networks.

The MLI property is illustrated in Figure~\ref{fig:title_figure_landscape}. The interpolated path ($\btheta_\alpha$) exhibits the MLI property as the training loss monotonically decreases along this line. Even more surprising, linear interpolation between an unrelated random initialization and the same converged parameters also satisfies the MLI property.

\citet{goodfellow2014qualitatively} showed that the MLI property persists on various architectures, activation functions, and training objectives in neural network training. They conclude their study by stating that ``the reason for the success of SGD on a wide variety of tasks is now clear: these tasks are relatively easy to optimize.'' In our work, we observe that networks violating the MLI property can be produced systematically and are also trained without significant difficulty. Moreover, since the publication of their research, there have been significant developments both in terms of the neural network architectures that we train today \citep{he2016deep, vaswani2017attention,huang2017densely} and our theoretical understanding of them \citep{amari2020does, jacot2018neural, draxler2018essentially, frankle2018lottery, fort2019emergent}. Hence, with a wider lens that addresses these developments, we believe that further investigation of this phenomenon is likely to yield new insights into neural network optimization and their loss landscapes.

\begin{figure}[!t]
    \center{\includegraphics[width=1.0\linewidth]
    {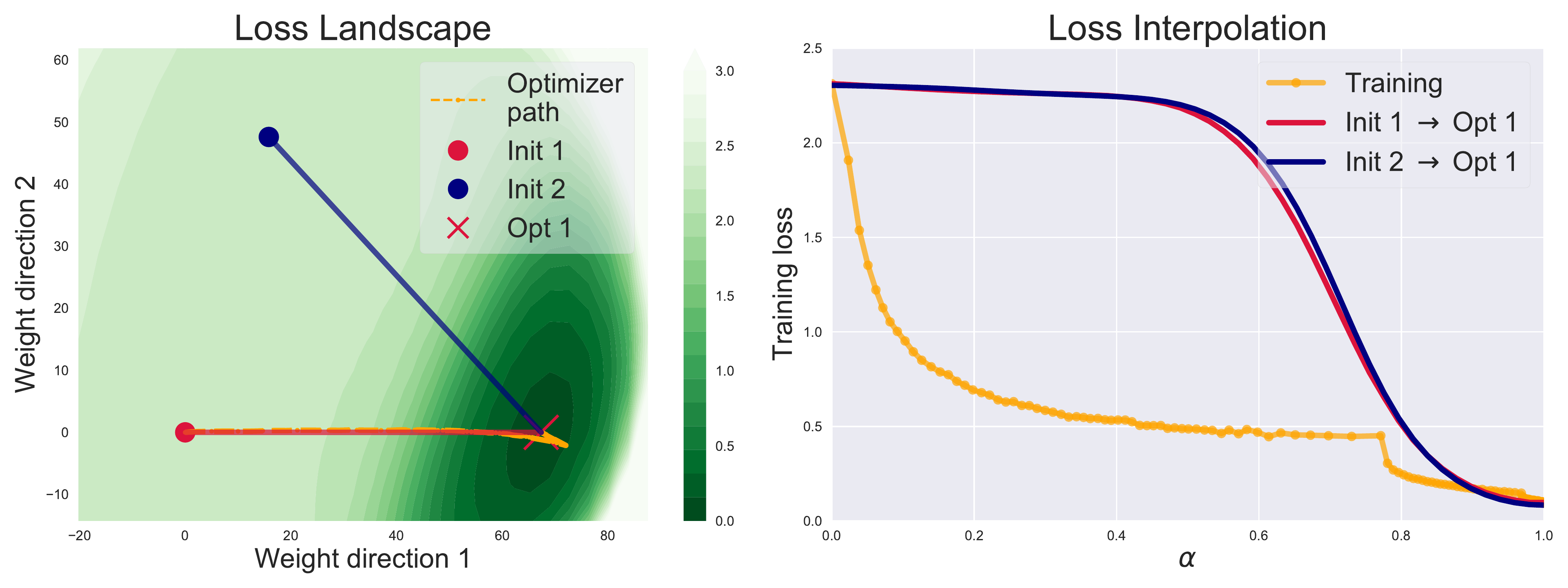}}
    \vspace{-0.6cm}
    \caption{Monotonic linear interpolation for a ResNet-20 trained on CIFAR-10 from initialization to an optimum (\textbf{\textcolor{red}{red}}) and from an unrelated initialization to the same optimum (\textbf{\textcolor{blue}{blue}}). On the left, we show a 2D slice of the loss landscape, defined by the two initializations and optimum, along with the optimization trajectory projected onto the plane (\textbf{\textcolor{orange}{orange}}). On the right, we show the interpolated loss curves, with training loss shown relative to the proportion of distance travelled to the optimum.}
    \label{fig:title_figure_landscape}
    \vspace{-0.6cm}
\end{figure}

We study three distinct questions surrounding the MLI property. 1) How persistent is the MLI property? 2) Why does the MLI property hold? 3) What does the MLI property tell us about the loss landscape of neural networks? To address these questions, we provide an expanded empirical and theoretical study of this phenomenon.

\begin{figure*}[!t]
    \begin{minipage}{0.24\linewidth}
    \centering
    \includegraphics[width=\linewidth]{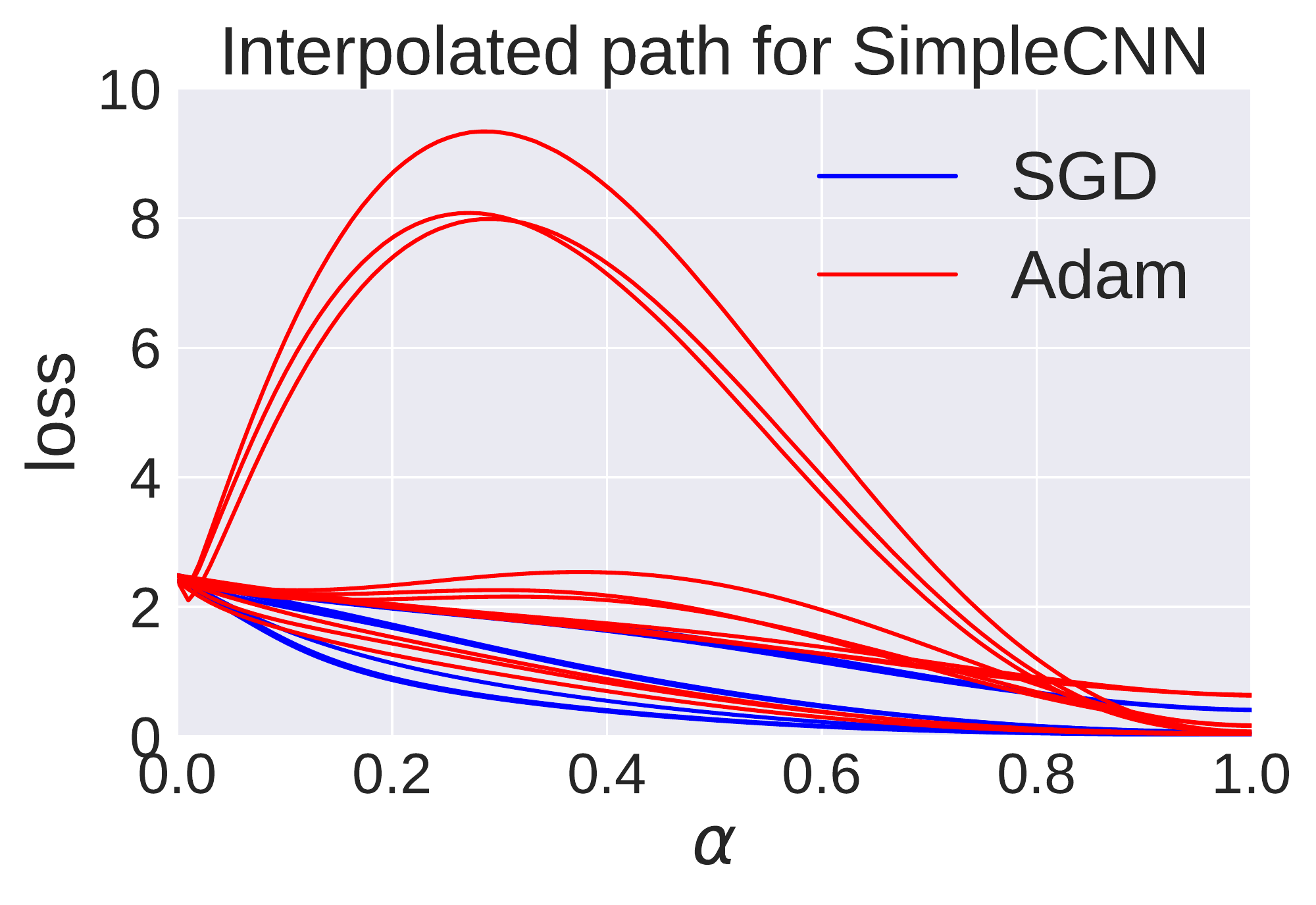}
    \end{minipage}\hfill%
    \begin{minipage}{0.24\linewidth}
    \centering
    \includegraphics[width=\linewidth]{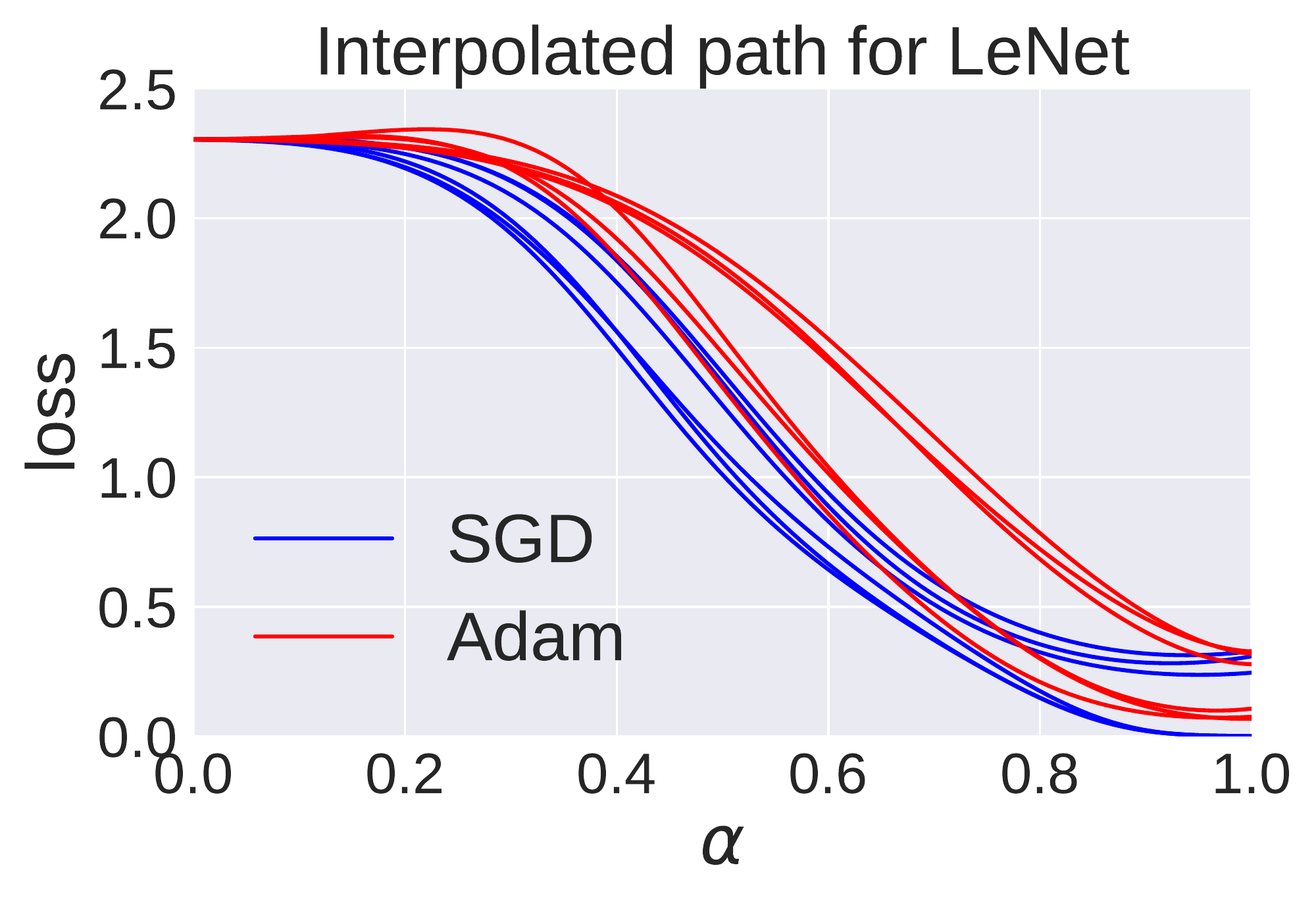}
    \end{minipage}\hfill%
    \begin{minipage}{0.24\linewidth}
    \centering
    \includegraphics[width=\linewidth]{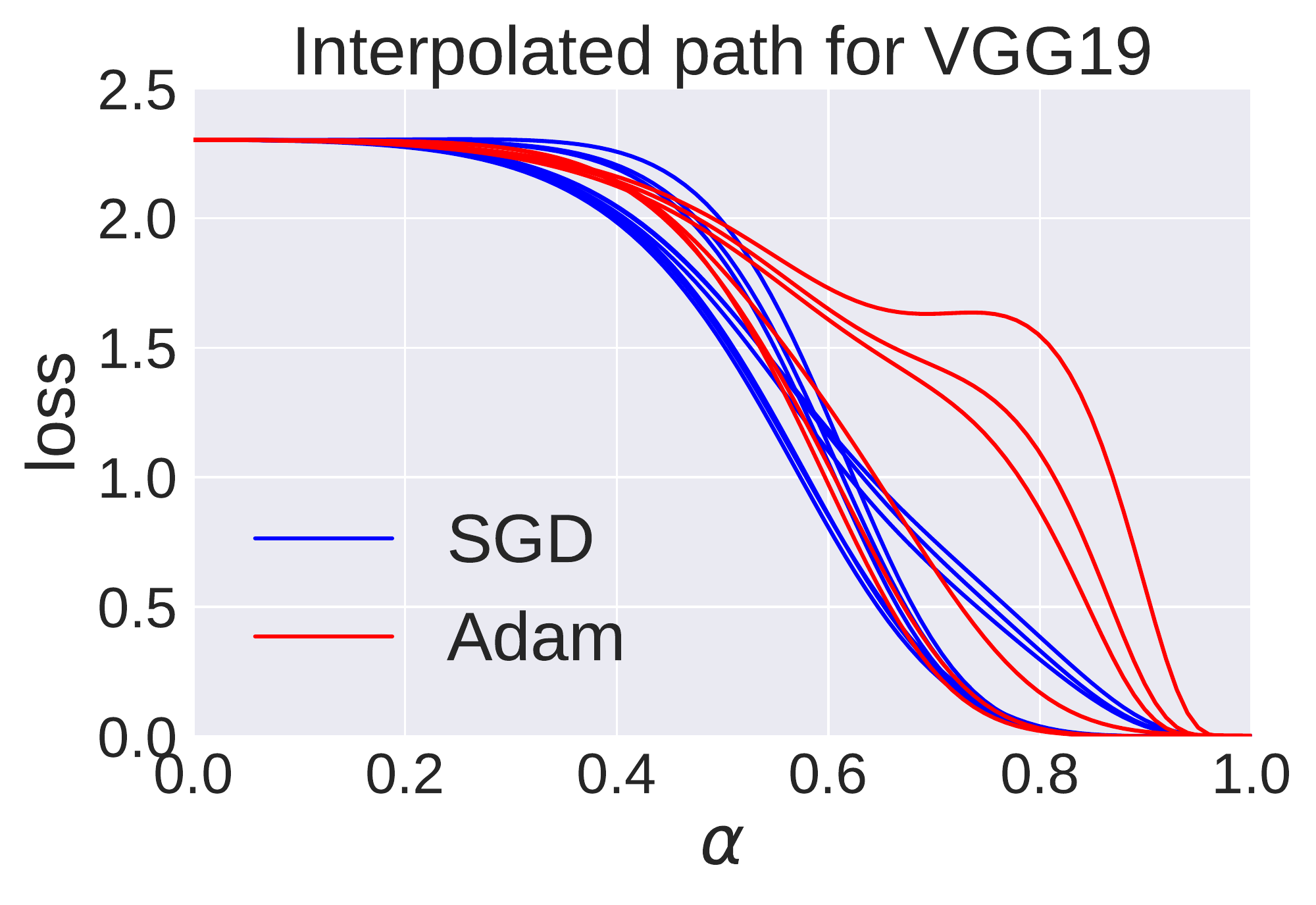}
    \end{minipage}
    \begin{minipage}{0.24\linewidth}
    \centering
    \includegraphics[width=\linewidth]{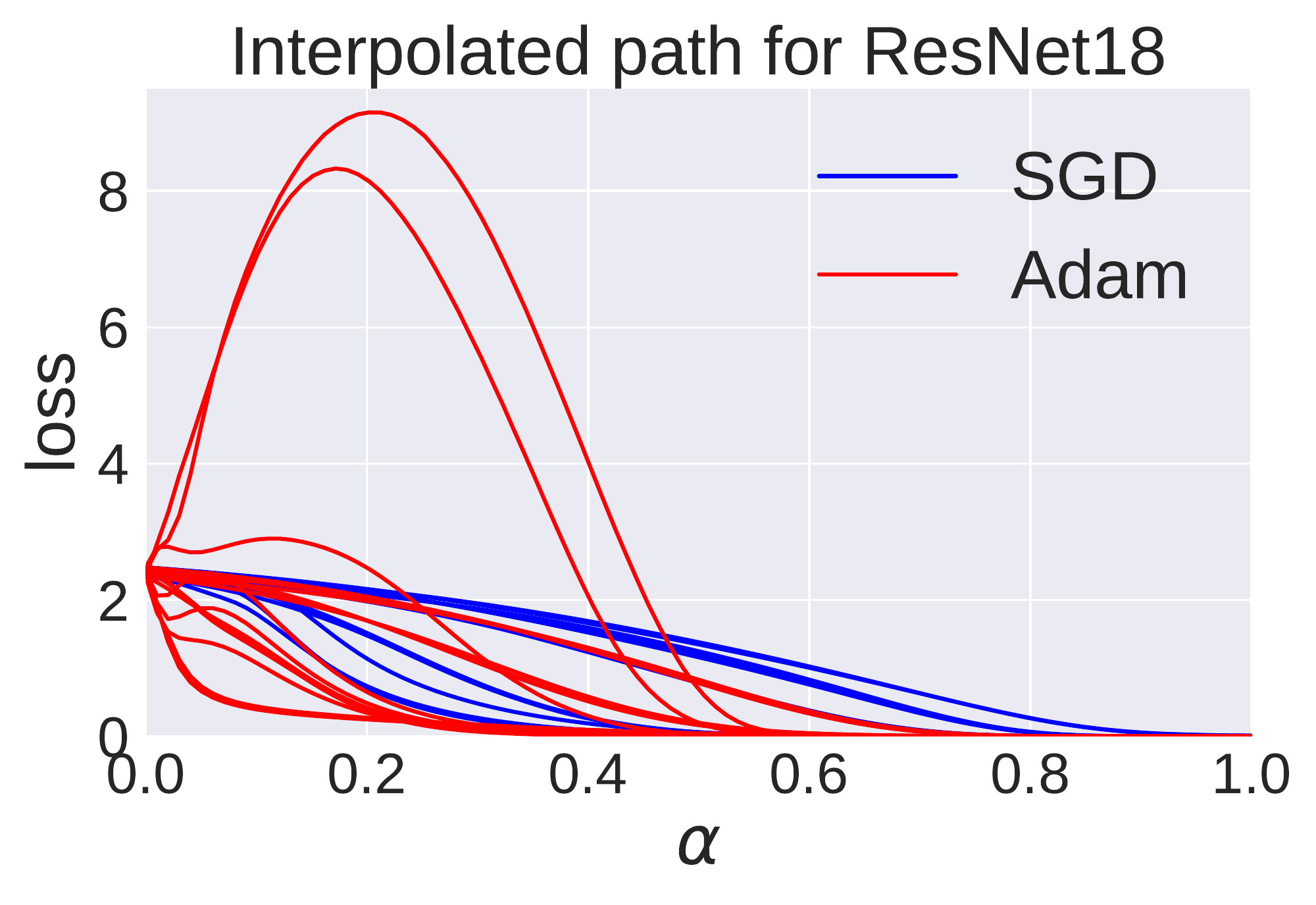}
    \end{minipage}
    \vspace{-0.1cm}
    \caption{Training loss over the linear interpolation connecting initial and final parameters. Each curve represents a network trained on CIFAR-10 with different hyperparameter configurations (achieving at least 1.0 training loss). The MLI property holds for networks trained with SGD, but often fails for networks trained with Adam.}
    \vspace{-0.4cm}
    \label{fig:cnn_vary_opt}
\end{figure*}

To evaluate the persistence of the MLI property, we train neural networks with varying architectures, optimizers, datasets, initialization methods, and training mechanisms (e.g.~batch normalization~\citep{ioffe2015batch}). We find that the MLI property persists for the majority of these settings but can be consistently broken through mechanisms that encourage the weights to move far from initialization. As far as we know, ours is the first work to observe that MLI is not a stable property of the network architecture.

One hypothesis for the MLI property is that the networks are close to linear along the interpolation path. We formalize this notion using tools from differential geometry and provide sufficient conditions for neural networks trained under the MSE loss to satisfy the MLI property. In particular, we prove that if the length under the Gauss map (which we refer to as the \emph{Gauss length}) of the interpolation trajectory in function space is small, then the network is guaranteed to have the MLI property. While the converse does not hold in general, we show that this quantity is correlated with monotonicity in practice. We connect this explanation to our prior observation that large distances moved in weight space encourage non-monotonic interpolations through a surprising power-law relationship between the distance moved and the average Gauss length.

Finally, we investigate the loss landscape of the neural networks we trained by evaluating the MLI property over alternative linear paths. For example, we examine the interpolation path connecting different initializations and final parameters (as in Figure~\ref{fig:title_figure_landscape}). Surprisingly, when the MLI property holds for an initialization $\to$ final solution pair, the MLI property also holds for unrelated initializations to the same solution.

In summary, our primary contributions include:
\begin{itemize}
    \item We prove a sufficient condition for neural networks minimizing MSE to satisfy the MLI property.
    \item We show that the MLI property does not always hold and that we can systematically control for/against it.
    \item We identify several common training mechanisms that provide this control and connect them to our novel theoretical results.
\end{itemize}

\section{Related Work}
\label{sec:related-work}

\paragraph{Monotonic linear interpolation.}~\citet{goodfellow2014qualitatively} were the first to observe that the MLI property persists on various architectures, activation functions, and training objectives in deep learning. In addition to their empirical evaluation, they provided a qualitative analysis of the MLI property in a toy model where they argued that the MLI property holds despite negative curvature about initialization and disconnected optima. Concurrent research \citep{frankle2020revisiting} extends the original work of \citet{goodfellow2014qualitatively} with evaluations on modern architectures trained with SGD.

In this work, we provide an expanded study of the MLI property. We first investigate the persistence of the MLI property on various tasks, including settings with modern architectures and techniques that were not invented at the time of the original investigation. Further, we show that despite the original work's claim, we can train networks that violate the MLI property without significant training difficulty. Our experiments yield new insights into neural networks' loss landscapes and uncover aspects of neural network training that correlate with the MLI property.

\vspace{-0.3cm}\paragraph{Linear connectivity.} This work is connected to empirical and theoretical advancements in understanding the loss landscape of neural networks. Much of this recent work has involved characterizing mode connectivity of neural networks. In general, linear paths between modes cross regions of high loss~\citep{goodfellow2014qualitatively}. However, \citet{garipov2018loss,draxler2018essentially} show that local minima found by stochastic gradient descent (SGD) can be connected via piecewise linear paths. \citet{frankle2019linear} further show that linearly connected solutions may be found if networks share the same initialization. \citet{fort2020deep} demonstrate the connection between linear connectivity and the advantage nonlinear networks enjoy over their linearized version. \citet{kuditipudi2019explaining} posit \textit{dropout stability} as one possible explanation for mode connectivity, with \citet{shevchenko2019landscape} extending these result to show that the loss landscape becomes increasingly connected and more dropout stable with increasing network depth. Finally, \citet{nguyen2019connected} shows that every sublevel set of an overparameterized network is connected, implying that all global minima are connected.

Note that the MLI property we study is distinct from mode connectivity, where paths are drawn between different final solutions instead of initialization $\to$ solution pairs. As far as we are aware, no prior work has explored connections between the MLI property and mode connectivity. This would make for exciting future work.

\paragraph{Loss landscape geometry.} Recent analysis argues that there exists a small subspace at initialization in which the network converges \citep{gur2018gradient, fort2019emergent, papyan2020traces}. \citet{li2018measuring} show that some of these spaces can be identified by learning in a random affine subspace of low dimension. \citet{fort2019goldilocks} show that the success of these random spaces is related to the \emph{Goldilocks zone} that depends on the Hessian at initialization. In a loose sense, the MLI can be considered a special case of these results, wherein a 1D space is sufficient for training to succeed. However, this is not the only mechanism in which neural network training can succeed --- the solutions that violate the MLI property can have good generalization capability and are found without difficulty.

It has long been argued that flatter minima lead to better generalization~\citep{hochreiter1997flat} with some caveats~\citep{dinh2017sharp}. Recent work has shown that (full-batch) gradient descent with a large learning rate is able to find flatter minima by overcoming regions of initial high curvature~\citep{lewkowycz2020large}. Intuitively, gradient descent breaks out of one locally convex region of the space and into another --- suggesting that a barrier in the loss landscape has been surpassed. In this paper, we show that training with larger learning rates can lead to failure of the MLI property. And in doing so, identify a high loss barrier between the initial and converged parameters. Moreover, we show that these barriers do not appear when training with smaller learning rates.

\paragraph{Neural tangent kernel.} Recent research has shown that over-parameterized networks appreciate faster and, in some cases, more linear learning dynamics~\citep{lee2019wide,matthews2018gaussian}. The Neural Tangent Kernel (NTK)~\citep{jacot2018neural} describes the learning dynamics of neural networks in their function space. Existing work argues that the NTK is near-constant in the infinite width setting \citep{sun2019optimization}, however recent work challenges this view in general~\citep{liu2020linearity}. \citet{fort2020deep} recently showed that the NTK evolves quickly early on during training but the rate of change decreases dramatically during training. In Appendix~\ref{app:wide_nets}, we draw connections between the NTK literature and the MLI property and show that sufficiently wide fully-connected networks exhibit the MLI property with high probability.

\paragraph{Optimization algorithms.} In this work, we investigate the role that optimization algorithms have on the MLI property (and thus the explored loss landscape more generally). \citet{amari2020does} recently showed that for linear regression, natural gradient descent~\citep{amari1998natural} travels further in parameter space, as measured by Euclidean distance, compared to gradient descent. We verify this claim empirically for larger networks trained with adaptive optimizers and observe that this co-occurs with non-monotonicity along the interpolating path $\btheta_\alpha$.

\section{The Monotonic Linear Interpolation Property}
\label{sec:mli_property}

The Monotonic Linear Interpolation (MLI) property states that when a network is randomly initialized and then trained to convergence, the linear path connecting the initialization and converged solution is monotonically decreasing in the training loss. Specifically, we say that a network has the MLI property if, for all $\alpha_1, \alpha_2 \in [0,1]$ with $\alpha_1 < \alpha_2$,
\begin{equation}
\calL(\btheta_{\alpha_1}) \geq \calL(\btheta_{\alpha_2}),\textrm{ where } \btheta_\alpha = \btheta_0 + \alpha (\btheta_T - \btheta_0).
\end{equation}
Here, $\btheta_0$ denotes the parameters at initialization and $\btheta_T$ denotes the parameters at convergence.

\subsection{$\Delta$-Monotonicity}
\citet{goodfellow2014qualitatively} found that the MLI property holds for a wide range of neural network architectures and learning problems. They provided primarily qualitative evidence of this fact by plotting $\calL(\btheta_\alpha)$ with discretizations of $[0,1]$ using varying resolutions. We instead propose a simple quantitative measure of non-monotonicity.

\begin{definition}{($\Delta$-monotonicity)}\label{def:delta_mono}
Consider a linear parameter interpolation parameterized by $\alpha, \btheta_0$, and $\btheta_T$ with corresponding loss function $\calL$. The path is $\Delta$-monotonic for $\Delta \geq 0$ if for all $\alpha_1, \alpha_2 \in [0,1]$ with $\alpha_1 < \alpha_2$, we have $\calL(\alpha_2) - \calL(\alpha_1) < \Delta$. 
\end{definition}

Intuitively, the above definition states that any bump due to increasing loss over the interpolation path should have a height upper-bounded by $\Delta$. We are interested in the smallest $\Delta \geq 0$ for which this definition holds. Notably, this minimum $\Delta$ can be approximated well numerically by stepping along the interpolation path in fixed intervals to find $\alpha_1$ and $\alpha_2$ giving the largest positive gap $\calL(\alpha_2) - \calL(\alpha_1)$. 

\subsection{Weight-space perspective}

It is natural to attempt to reason about the MLI property in terms of the parameters of the neural network. Intuitively, the MLI property suggests that, during optimization, the parameters move into a nearby basin of low loss without encountering any high-loss barriers in their path.

We can formalize this intuition for ``Lazy Training'' \citep{chizat2018lazy}, where the weights find a minimum near their initial value. Consider the second-order Taylor series expansion about the converged minimum $\btheta^*$,
\begin{equation}
    \calL(\btheta_0) \approx \calL(\btheta^*) + (\btheta_0 - \btheta^*)^\top \nabla_{\btheta}^2 \calL(\btheta^*)(\btheta_0 - \btheta^*).
\end{equation}
If the difference between the initial and converged parameters, $\Vert \btheta_0 - \btheta^* \Vert$, is sufficiently small, then this quadratic approximation holds well throughout the linear interpolation. In this case, the linear interpolation yields a monotonic decrease in the loss (Lemma~\ref{lemma:linear_convex_mli}, Appendix~\ref{app:additional_theory}).

Experimentally, we investigate the connection between the distance moved in weight space and the monotonicity of the resulting interpolation. We find that networks that move further in weight space during training are significantly more likely to produce non-monotonic initialization$\rightarrow$optimum interpolations. Theoretically, we investigate the MLI property for wide neural networks where lazy training occurs provably \citep{lee2019wide}. In this setting, we prove that the MLI property holds with high probability for networks of sufficient width (Theorem~\ref{thm:inf_width_mli}, Appendix~\ref{app:additional_theory}).

\subsection{Function-space perspective}

We typically train neural networks with a convex loss function applied to the network's output. While the parameter space of neural networks is extremely high-dimensional and exhibits symmetries, the function space is generally simpler and easier to reason about \citep{jacot2018neural}. To that end, we let
\begin{equation}\label{eqn:logit_interpolation}
    \bz(\alpha; \bx) = f(\bx; \btheta_\alpha) \in \bbR^k,~~\alpha \in [0,1]
\end{equation}
denote the \emph{logit interpolation} of a neural network $f$ evaluated on data point $\bx$ with parameters $\btheta_\alpha=\btheta_0 + \alpha(\btheta_T - \btheta_0)$.

One special case that guarantees the MLI property is that of linear functions, $f(\bx; \btheta) = \btheta^\top \bx$ (with $\calL(\btheta_0) > \calL(\btheta_T)$). In this case, the logit interpolations are also linear and, under a convex loss function, $f$ will satisfy the MLI property \citep{boyd2004convex}. In practice, we work with non-linear neural networks that have non-linear logit interpolations. However, we observed that the logit interpolations are often close to linear (in a sense that we formalize soon) and that this coincides with the MLI property (Figure~\ref{fig:784_logit_viz}). Therefore, we raise the question: Can we guarantee the MLI property for logit interpolations that are \emph{close} to linear?

\begin{figure}[!hpt]
\centering
\ifarxiv
\includegraphics[width=0.8\linewidth]{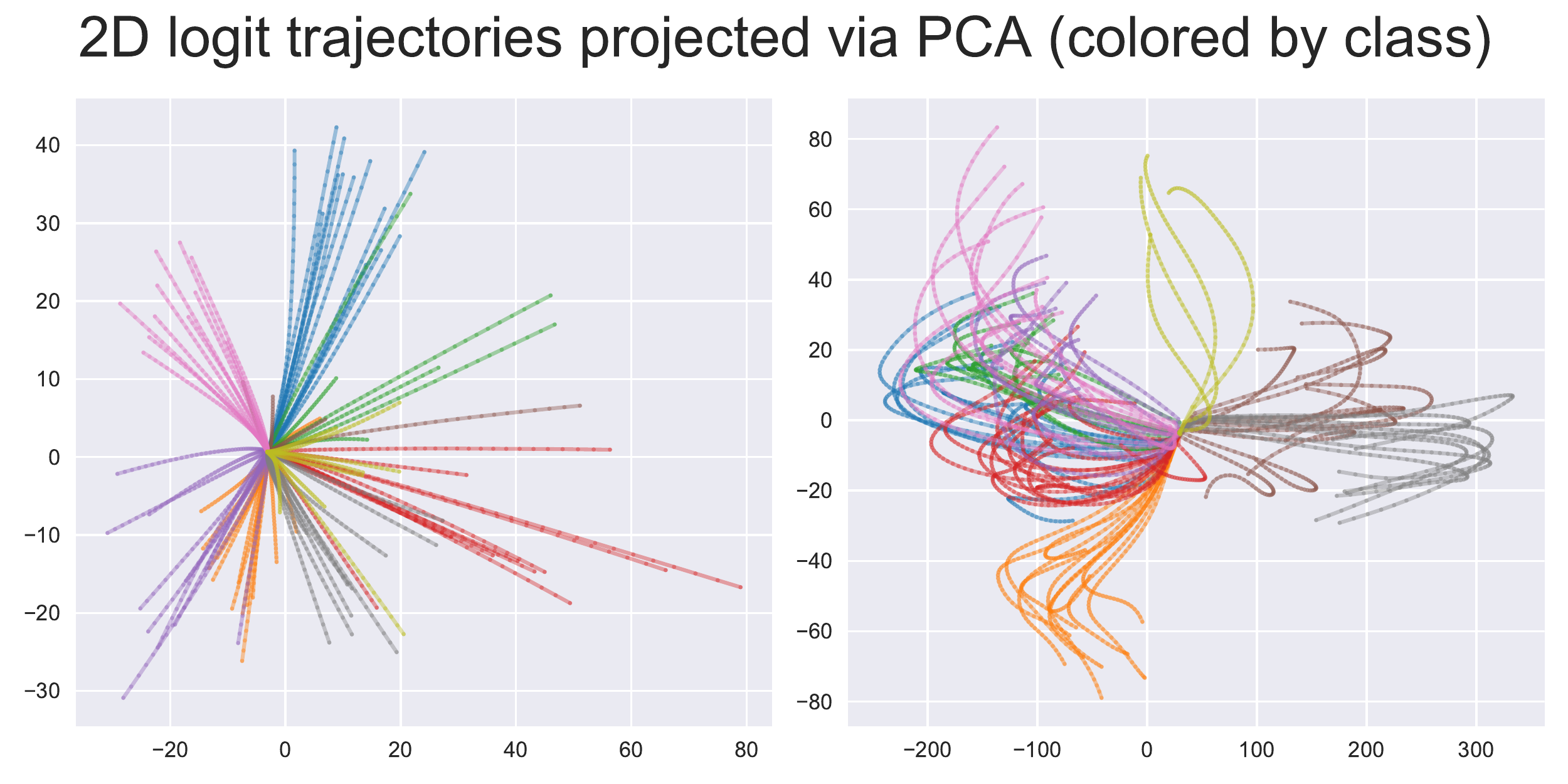}
\else
\includegraphics[width=\linewidth]{figures/784_search/2D_PCA_logit_paths.pdf}
\fi
\vspace{-0.5cm}
\caption{2D projections (computed with PCA) of logit interpolations for fully-connected networks trained on Fashion-MNIST. Both networks achieve near-perfect final training accuracy. However, the first one (left) interpolates monotonically while the second one (right) does not. The only difference between these two networks is that the second was trained using batch normalization while the first was not.}
\vspace{-0.5cm}
\label{fig:784_logit_viz}
\end{figure}

\paragraph{Measuring logit linearity.} There is no standard method to measure the linearity of a curve, but there are several tools from differential geometry that are applicable. In this work, we focus on the length under the Gauss map, which we refer to as the \emph{Gauss length}, a unit-free measure that is related to the curvature. In the case of curves, the Gauss length is computed by mapping the normalized tangent vectors of the curve onto the corresponding projective space (through the so-called Gauss map), and then measuring the length of the curve in this space. This is described formally in the following definition.
\begin{restatable}[Gauss length]{definition}{gausslength}\label{def:gausslength}
Given a curve $\bz: (0,1) \rightarrow \bbR^d$. Let $\hat{\bv}(\alpha) = \frac{\partial\bz}{\partial\alpha} / \Vert\frac{\partial\bz}{\partial\alpha}\Vert_2$ denote the normalized tangent vectors. The length under the Gauss map (Gauss length) is given by:
\[\int_{0}^{1} \sqrt{\langle \partial_\alpha \hat{\bv}(\alpha), \partial_\alpha \hat{\bv}(\alpha)  \rangle} d\alpha,\]
where $\partial_\alpha \hat{\bv}(\alpha)$ denotes the pushforward of the Gauss map acting on the acceleration vector.
\end{restatable}

We refer readers to \citet{lee2006riemannian} or \citet{poole2016exponential} for a more thorough introduction to these concepts. Intuitively, the Gauss length measures how much the curve bends along its path, with a Gauss length of zero indicating a linear path. In Theorem~\ref{thm:small_gauss_mse_mono}, we prove that a sufficiently small Gauss length guarantees the MLI property for MSE loss.

\begin{restatable}[Small Gauss length gives monotonicity]{theorem}{smallgaussmono}\label{thm:small_gauss_mse_mono}
Let $\calL(\bz) = \Vert \bz - \bz^* \Vert_2^2$ for $\bz^* \in \bbR^d$, and let $\bz: (0,1) \rightarrow \bbR^d$ be a smooth curve in $\bbR^d$ with $\bz(1) = \bz^*$ and $\calL(\bz(0)) > 0$. If the Gauss length of $\bz$ is less than $\pi / 2$, then $\calL \circ \bz(\alpha)$ is monotonically decreasing in $\alpha$.
\end{restatable}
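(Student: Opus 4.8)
The plan is to show that the derivative of $\calL\circ\bz$ is strictly negative by rewriting it as an integral of inner products of velocity vectors, and then to use the Gauss length bound to force all those inner products to be positive. Assume, as is implicit in the Gauss length being well defined and finite, that $\bz$ is a regular curve that extends smoothly to $[0,1]$ with $\bz(1)=\bz^*$. Write $g(\alpha)=\calL(\bz(\alpha))=\Vert \bz(\alpha)-\bz^*\Vert_2^2$ and $\hat\bv(\alpha)=\bz'(\alpha)/\Vert\bz'(\alpha)\Vert_2$. Differentiating and using $\bz(\alpha)-\bz^*=\bz(\alpha)-\bz(1)=-\int_\alpha^1\bz'(s)\,ds$,
\[
 g'(\alpha)=2\langle \bz(\alpha)-\bz^*,\,\bz'(\alpha)\rangle
 =-2\int_\alpha^1 \langle\bz'(s),\bz'(\alpha)\rangle\,ds
 =-2\int_\alpha^1 \Vert\bz'(s)\Vert_2\Vert\bz'(\alpha)\Vert_2\,\langle\hat\bv(s),\hat\bv(\alpha)\rangle\,ds .
\]
Hence it suffices to prove that $\langle\hat\bv(s),\hat\bv(\alpha)\rangle>0$ whenever $\alpha\le s\le 1$: the integrand is then strictly positive, so $g'(\alpha)<0$ and $g$ is strictly (hence monotonically) decreasing on $[0,1]$.

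The core of the argument is a geometric estimate. The map $\alpha\mapsto\hat\bv(\alpha)$ is a curve on the unit sphere $S^{d-1}\subset\bbR^d$, and the integrand $\sqrt{\langle\partial_\alpha\hat\bv,\partial_\alpha\hat\bv\rangle}=\Vert\partial_\alpha\hat\bv\Vert_2$ of Definition~\ref{def:gausslength} is exactly its speed, so the Gauss length of $\bz$ equals the arc length of this spherical curve. For fixed $\alpha\le s$, the restriction $\hat\bv|_{[\alpha,s]}$ is a path on $S^{d-1}$ from $\hat\bv(\alpha)$ to $\hat\bv(s)$ of length at most the total Gauss length, which is less than $\pi/2$. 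Since the intrinsic (great-circle) distance on $S^{d-1}$ is at most the length of any connecting path and equals $\arccos\langle\hat\bv(\alpha),\hat\bv(s)\rangle$, we obtain $\arccos\langle\hat\bv(\alpha),\hat\bv(s)\rangle<\pi/2$, i.e.\ $\langle\hat\bv(\alpha),\hat\bv(s)\rangle>0$. Together with the reduction above this proves the theorem. The reference to projective space in Definition~\ref{def:gausslength} is harmless here: the double cover $S^{d-1}\to\mathbb{RP}^{d-1}$ is a local isometry, so it does not change arc lengths, and the unit tangent already provides an honest continuous lift to $S^{d-1}$.

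I expect the only real subtlety to be this second step: identifying the Gauss length with the spherical arc length of $\hat\bv$ (which is why regularity of $\bz$ must be read into the hypotheses), and the comparison ``a spherical path shorter than $\pi/2$ cannot join two unit vectors at angle $\ge\pi/2$,'' which is where the constant $\pi/2$ enters and is sharp --- shrinking a small ``near-miss'' detour of $\bz$ around $\bz^*$ yields non-monotone curves whose Gauss length tends to $\pi/2$ from above. Finally, the hypothesis $\calL(\bz(0))>0$ is not only consistent with the rest but forced by it: if $\bz(0)=\bz^*$ then $\int_0^1\bz'(s)\,ds=0$, which contradicts $\langle\bz'(s),\bz'(0)\rangle>0$ for all $s$, so that degenerate case cannot occur once the Gauss length is below $\pi/2$.
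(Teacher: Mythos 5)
Your proof is correct, and it takes a noticeably more direct route than the paper's. The paper argues by contraposition: it first establishes an auxiliary lemma (Lemma~\ref{lemma:wide_tangents}) showing that if the distance to $\bz^*$ ever increases, then some pair of velocity vectors $\dot\bz(t_1),\dot\bz(t_2)$ has non-positive inner product --- this is proved via the double-integral identity $\int_b^1\int_0^b\langle\dot\bz(t_1),\dot\bz(t_2)\rangle\,dt_1\,dt_2=\langle\bz(b)-\bz(0),\bz^*-\bz(b)\rangle$ together with the Cauchy--Schwarz inequality --- and then concludes that two tangent directions at angle at least $\pi/2$ force the Gauss length to be at least $\pi/2$. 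You instead differentiate the loss directly and write $g'(\alpha)=-2\int_\alpha^1\langle\bz'(s),\bz'(\alpha)\rangle\,ds$, reducing everything to a single integral whose integrand is controlled by the same key geometric fact both proofs share: a spherical path of length less than $\pi/2$ cannot join two unit vectors at angle $\ge\pi/2$, so all pairs of unit tangents have positive inner product. What your version buys is economy and precision --- it dispenses with the auxiliary lemma and Cauchy--Schwarz, yields \emph{strict} monotonicity of $\calL\circ\bz$ rather than the non-strict conclusion the paper's contrapositive technically delivers, makes the sharpness of the constant $\pi/2$ transparent, and explicitly addresses the sphere-versus-projective-space issue in the definition of the Gauss length, which the paper glosses over. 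What the paper's formulation buys is a reusable standalone statement (Lemma~\ref{lemma:wide_tangents}) about when the distance to a target can increase, phrased without reference to the Gauss length. The one hypothesis you add --- regularity of $\bz$ and smooth extension to the closed interval --- is indeed already implicit in the Gauss length being well defined, so this is a clarification rather than a restriction.
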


See Appendix~\ref{app:gauss-length} for the proof. Informally, this theorem can be understood through a simple physical analogy. Imagine that you are standing on the inside surface of a uniform bowl and wish to increase your height before reaching the bottom. To do so, you must walk at an angle that is at least $\sfrac{\pi}{2}$ relative to the line connecting you to the bottom. Now, the smallest total rotation that guarantees your return to the bottom is at least $\sfrac{\pi}{2}$ radians.

Importantly, Theorem~\ref{thm:small_gauss_mse_mono} applies to arbitrary smooth curves including those produced in the function space of neural networks when we interpolate in the weight space ($\bz(\alpha; \bx)$ above). As an application of Theorem~\ref{thm:small_gauss_mse_mono}, in Appendix~\ref{app:two_layer_linear}, we give sufficient conditions for the MLI property to hold for two-layer linear models (whose loss landscape is non-convex with disconnected globally optimal manifolds \citep{pmlr-v97-kunin19a}). Furthermore, we prove that these sufficient conditions hold almost surely for models satisfying the \emph{tabula rasa} assumptions of \citet{saxe2019mathematical}.

One notable departure from the theory in our experiments is that we consider the average loss over the dataset. In this case, individual logit trajectories may be non-monotonic while the network satisfies the MLI property. Nonetheless, we find the average Gauss length to be a good indicator for the monotonicity of the network as a whole.

\section{Exploring \& Explaining the MLI Property}
\label{section:experiments}

In this section, we present our empirical investigation of the following questions:  1) How persistent is the MLI property? 2) Why does the MLI property hold? 3) What does the MLI property tell us about the loss landscape of neural networks?

For all experiments, unless specified otherwise, we discretize $\alpha$ in the interval $[0, 1]$ using 50 uniform steps. Here we report statistics from the training set throughout but note that the same observations hold for held-out datasets. Many additional results can be found in Appendix~\ref{app:experiments}.

\paragraph{A note on batch normalization.} We experiment with networks that use batch normalization during training. These networks require additional care when interpolating network parameters as the running statistics will not align with the activation statistics during interpolation. Therefore, we opt to reset and \emph{warm up} the running statistics for each interpolated set of parameters. This warm-up consists of computing the activation statistics over an epoch of the training data, meaning that each interpolation curve requires an additional 50 epochs (the number of discretizations of $\alpha$) of data consumption to get accurate loss/accuracy estimates. Note that the learned affine transformation is interpolated as usual.

\paragraph{Experiment settings.}  We summarize the main settings here with full details of our experimental procedure given in Appendix~\ref{app:exp_details}. We trained neural networks for reconstruction, classification, and language modeling. For the reconstruction tasks, we trained fully-connected deep autoencoders on MNIST~\citep{lecun2010mnist}. For the classification tasks, we trained networks on MNIST, Fashion-MNIST~\citep{xiao2017fashion}, CIFAR-10, and CIFAR-100~\citep{krizhevsky2009learning}. On these datasets, we explored fully-connected networks, convolutional networks, and residual architectures \citep{he2016deep}. In the above cases, we provide substantial exploration over varying architectures and optimization. We also provide a short study on the language modeling setting by training RoBERTa~\citep{liu2019roberta} on the Esperanto~\citep{conneau2019unsupervised} dataset. There, we verify the MLI property and visualize the loss landscape.

\subsection{How persistent is the MLI property?}
\label{sec:exp:how_persistent}

We first investigate the persistence of the MLI property. \citet{goodfellow2014qualitatively} showed that the MLI property persists in classification and language modeling tasks (with LSTMs \citep{hochreiter1997long}) when trained with SGD. However, several modern advances in neural network training remain unaddressed and the limits of the MLI property have not been characterized. We provide a secondary investigation of the MLI property on reconstruction, classification, and language modelling tasks using modern architectures and methods.

In summary, we found that the MLI property is persistent over most standard neural network training knobs, including (but not limited to): layer width, depth, activation function, initialization method and regularization. However, there were three mechanisms through which we regularly observed the failure of the MLI property: the use of large learning rates, the use of adaptive optimizers such as Adam~\citep{kingma2014adam}, and the use of batch normalization~\citep{ioffe2015batch}. For the remainder of this section, we focus on the effect of these mechanisms but refer readers to Appendix~\ref{app:experiments} for a wider view of our study. We defer further analysis of explanations for the MLI property to Section~\ref{sec:exp:why_mli}.

\paragraph{Using large learning rates.} We found throughout that large learning rates were necessary to train networks that violated the MLI property. However, large learning rates alone were not always sufficient. In Table~\ref{tab:mnist_lr}, we show the proportion of networks with non-monotonic interpolations over varying learning rate (including only those models that achieved better than 0.1 training loss). Models trained with SGD using smaller learning rates always exhibited the MLI property. On the other hand, models trained with SGD with larger learning rates often violated the MLI property. For example, 71\% of the configurations with a learning rate of 1.0 were found to be non-monotonic. One hypothesis for this behaviour is due to the so-called catapult phase \citep{lewkowycz2020large, Jastrzebski2020The}, where large learning rates encourage the parameters to overcome a barrier in the loss landscape. Additional results on the effect of using larger learning rates can be found in Appendix~\ref{app:experiments_lr}.

\begin{table*}[]
    \centering
    \small
    \begin{tabular}{|l|r|l l l l l l l l|}\hline
        & LR: &  0.001 & 0.003 & 0.01 & 0.03 & 0.1 & 0.3 & 1.0 & 3.0\\ \hline
\multirow{2}{*}{\rotatebox[origin=c]{90}{SGD}} & BN & 0.00 (20) & 0.00 (24) & 0.00 (24) & 0.00 (24) & 0.00 (24) & 0.17 (24) & 0.83 (24) & 1.00 (16)\rule{0pt}{2.6ex}\rule[-1.2ex]{0pt}{0pt}\\
 & No BN & 0.00 (4) & 0.00 (8) & 0.00 (12) & 0.00 (20) & 0.20 (20) & 0.00 (12) & 0.00 (4) & 0.00 (4)\rule[-1.2ex]{0pt}{0pt}\\ \hline
\multirow{2}{*}{\rotatebox[origin=c]{90}{Adam}} & BN & 0.17 (24) & 0.68 (22) & 0.83 (24) & 1.00 (24) & 1.00 (16) & 1.00 (16) & 1.00 (4) & -\rule{0pt}{2.6ex}\rule[-1.2ex]{0pt}{0pt}\\
 & No BN & 0.00 (24) & 0.20 (20) & 0.00 (12) & 0.00 (4) & - & - & - & -\rule[-1.2ex]{0pt}{0pt}\\ \hline
    \end{tabular}
    \caption{Proportion of trained MNIST \& Fashion-MNIST classifiers (achieving better than 0.1 training loss) that had non-monotonic interpolations from initialization to final solution. The total number of runs with less than 0.1 training loss is displayed in parentheses next to the proportion. A dashed line indicates that no networks achieved 0.1 loss.}
    \label{tab:mnist_lr}
\end{table*}

\subsubsection{The effect of adaptive optimizers}

Prior work has only investigated the MLI property when training with SGD. To address this gap, we trained a wide variety of networks with adaptive optimizers (RMSProp~\citep{hinton2012neural}, Adam~\citep{kingma2014adam}, and K-FAC~\citep{martens2015optimizing}). Across all settings, we found that adaptive optimizers with large learning rates frequently led to models violating the MLI property.

\paragraph{MNIST autoencoders.} For image reconstruction, we evaluated the MLI property for deep fully-connected autoencoders trained on MNIST. We trained autoencoders with SGD and Adam, with varying learning rates and with a varying number of hidden layer size.
\begin{figure}[!htb]
    \centering
    \includegraphics[width=\linewidth]{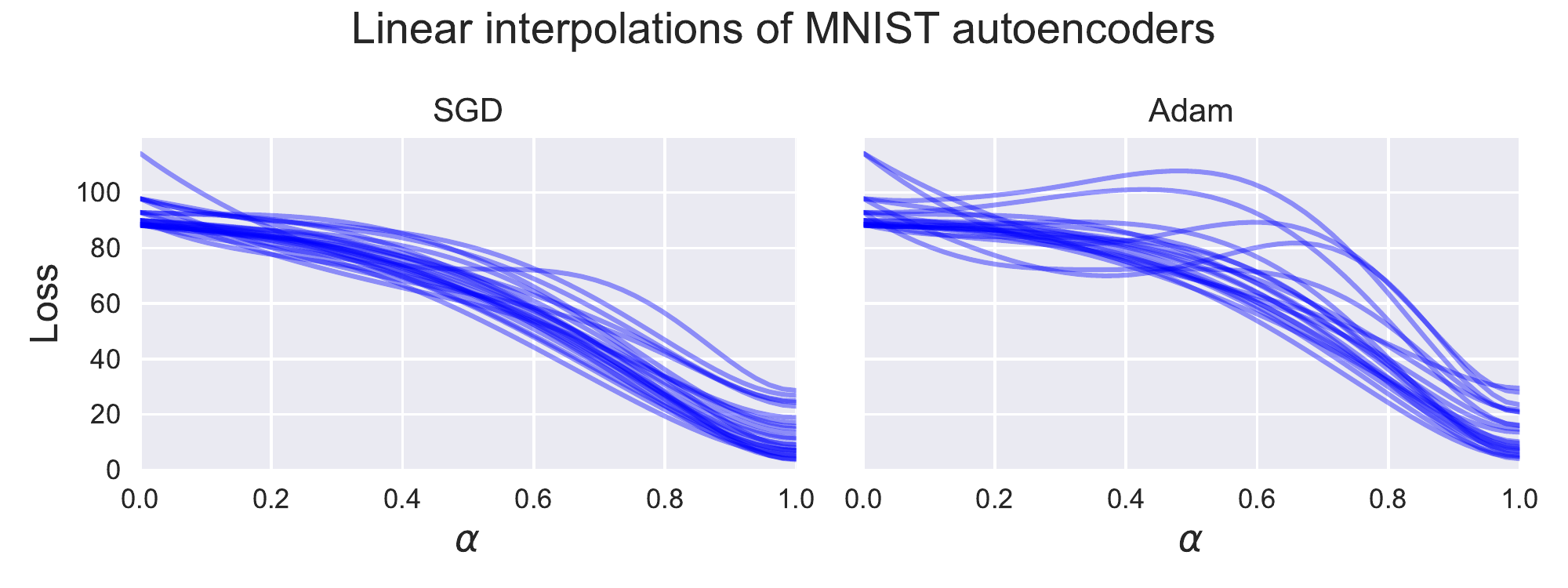}
    \vspace{-0.6cm}
    \caption{Training loss over linear interpolation of deep autoencoders trained on MNIST using SGD and Adam. Each interpolation line is for a training configuration with different hyperparameters (achieving better than 30 training loss).}
    \label{fig:ae_interpolations}
    \vspace{-0.5cm}
\end{figure}

In Figure~\ref{fig:ae_interpolations}, we show the training loss over the interpolated path for autoencoders with final loss (MSE) lower than 30. The majority of the networks trained with SGD retained the MLI property (with few failures at large learning rates). However, when trained with the Adam optimizer, a larger proportion of converged networks exhibited non-monotonic interpolations.

\paragraph{MNIST \& Fashion-MNIST classifiers.} On the MNIST and Fashion-MNIST datasets, we explored varying dataset size, network size (depth/width of hidden layers), activation function, choice of optimizer, optimization hyperparameters, initialization methods, and the use of batch normalization. In Table~\ref{tab:mnist_lr}, we compare two-layer networks trained with SGD and Adam. Models trained with SGD typically retained the MLI property but those trained with Adam frequently did not.
In Appendix~\ref{app:experiments_opt}, we show additional results for models trained with RMSProp and K-FAC~\citep{martens2015optimizing} (whose behaviour is qualitatively close to Adam) along with the interpolated loss curves.

\paragraph{CIFAR-10 \& CIFAR-100 classifiers.} On CIFAR-10 and CIFAR-100 datasets, we trained two-layer convolutional neural networks (SimpleCNN), LeNet~\citep{lecun1989backpropagation}, AlexNet~\citep{krizhevsky2012imagenet}, VGG16, VGG19~\citep{simonyan2014very}, and ResNets~\citep{he2016deep} with different choices of optimizer and learning rates. In Figure~\ref{fig:cnn_vary_opt}, we show a broad overview of the interpolation paths for different architectures and optimizers. Across all models, the average $\min \Delta$ was 0.016 and 0.626 for SGD and Adam respectively. Overall, Adam-trained models violated the MLI property $3.2$ times more often than SGD.

\subsubsection{The effect of batch normalization}\label{sec:exp:how_persistent:bn}
Batch normalization's invention and subsequent ubiquity postdate the initial investigation of the MLI property. Even now, the relationship between the MLI property and the use of batch normalization has not been investigated. We provide the first such study in this section. We found that the use of batch normalization greatly increased the rate at which trained networks failed to satisfy the MLI property.

\paragraph{MNIST \& Fashion-MNIST classifers.} Table~\ref{tab:mnist_lr} shows the effect of batch normalization on the MLI property for fully connected classifiers trained on MNIST \& Fashion-MNIST. The networks trained with batch normalization failed to satisfy the MLI property more frequently than those without. This is more pronounced with large learning rates and with Adam.

\paragraph{CIFAR-10 \& CIFAR-100 classifers.} Next, we trained ResNet models on CIFAR-10 \& CIFAR-100 classification tasks. We evaluated ResNet-\{20,32,44,56\} trained with Adam and SGD and with varying learning rates. We also varied the distribution over initial parameters and whether or not batch normalization was applied. The results for CIFAR-10 are displayed in Table~\ref{tab:cifar10_resnets} (CIFAR-100 results are similar, and are presented in Appendix~\ref{app:experiments}). The column headers, ``BN'' and ``NBN'' indicate batch normalization and no batch normalization respectively. The suffices ``I'' and ``F'' indicate two alternative initialization schemes, block-identity initialization~\citep{goyal2017accurate} and Fixup initialization~\citep{zhang2019fixup}. For each configuration, we report the percentage of models violating the MLI property and the average minimum $\Delta$ such that the model is $\Delta$-monotonic (conditioning on $\Delta > 0$). Batch normalization led to significantly more networks with non-monotonic interpolations. We also observed that the initialization of the residual blocks plays an important role in shaping the loss landscape.

\begin{table}[]
    \centering
    \begin{adjustbox}{max width=\linewidth}
    \begin{tabular}{|c|c|c|c|c|c|}\hline
        & & BN & BN-I & NBN-I & NBN-F\\\hline
\multirow{2}{*}{\rotatebox[origin=c]{90}{ SGD }} & \% (total)  & 0.54 (26) & 0.00 (26) & 0.00 (23) & 0.11 (27)\rule{0pt}{2.6ex}\rule[-1.2ex]{0pt}{0pt}\\
 & $\min \Delta$ & 0.794 & 0.000 & 0.000 & 0.076 \rule[-1.2ex]{0pt}{0pt}\\\hline
\multirow{2}{*}{\rotatebox[origin=c]{90}{Adam}} & \% (total) & 0.77 (22) & 0.27 (30) & 0.20 (20) & 0.04 (23)\rule{0pt}{2.6ex}\rule[-1.2ex]{0pt}{0pt}\\
 & $\min \Delta$ & 0.351 & 0.054 & 0.033 & 0.332 \rule[-1.2ex]{0pt}{0pt}\\\hline
    \end{tabular}
    \end{adjustbox}
    \caption{Evaluation of effect of batch normalization, initialization, and choice of optimizer for residual networks trained on CIFAR-10 (achieving better than 1.0 training loss). We display the proportion of networks with non-monotonic interpolation and average $\min \Delta$ such that the network is $\Delta$-monotonic over varying training settings. Full explanation of table is given in main text.}
    \label{tab:cifar10_resnets}
    \vspace{-0.3cm}
\end{table}

\subsection{Why does MLI hold?}
\label{sec:exp:why_mli}

\begin{figure*}[!h]
\begin{minipage}{0.5\linewidth}
\centering
\ifarxiv
\includegraphics[width=\linewidth]{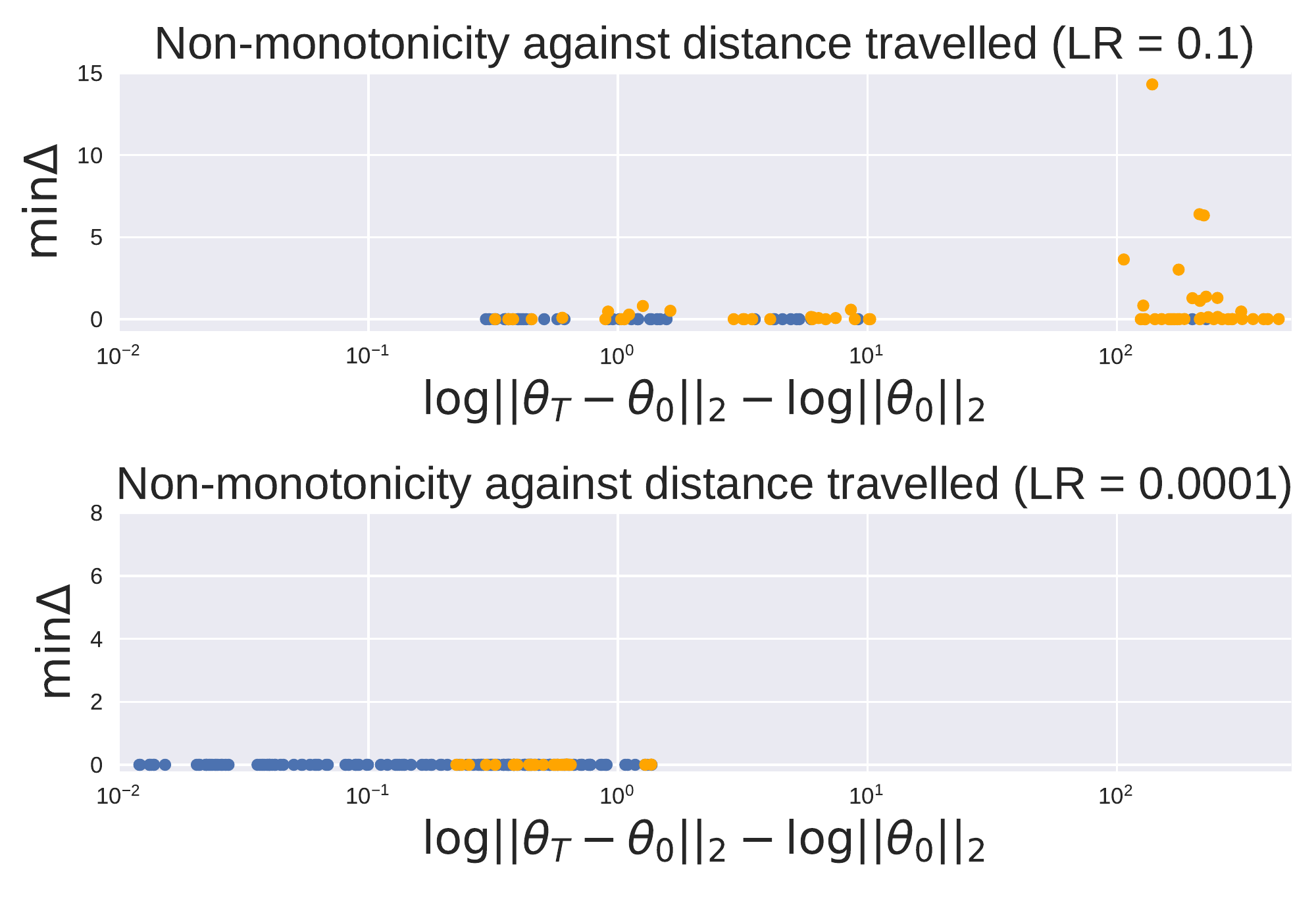}
\else
\includegraphics[width=0.9\linewidth]{figures/mnist_opt/mono_vs_dis_lr2.pdf}
\fi
\end{minipage}\hfill%
\begin{minipage}{0.5\linewidth}
\centering
\ifarxiv
\includegraphics[width=\linewidth]{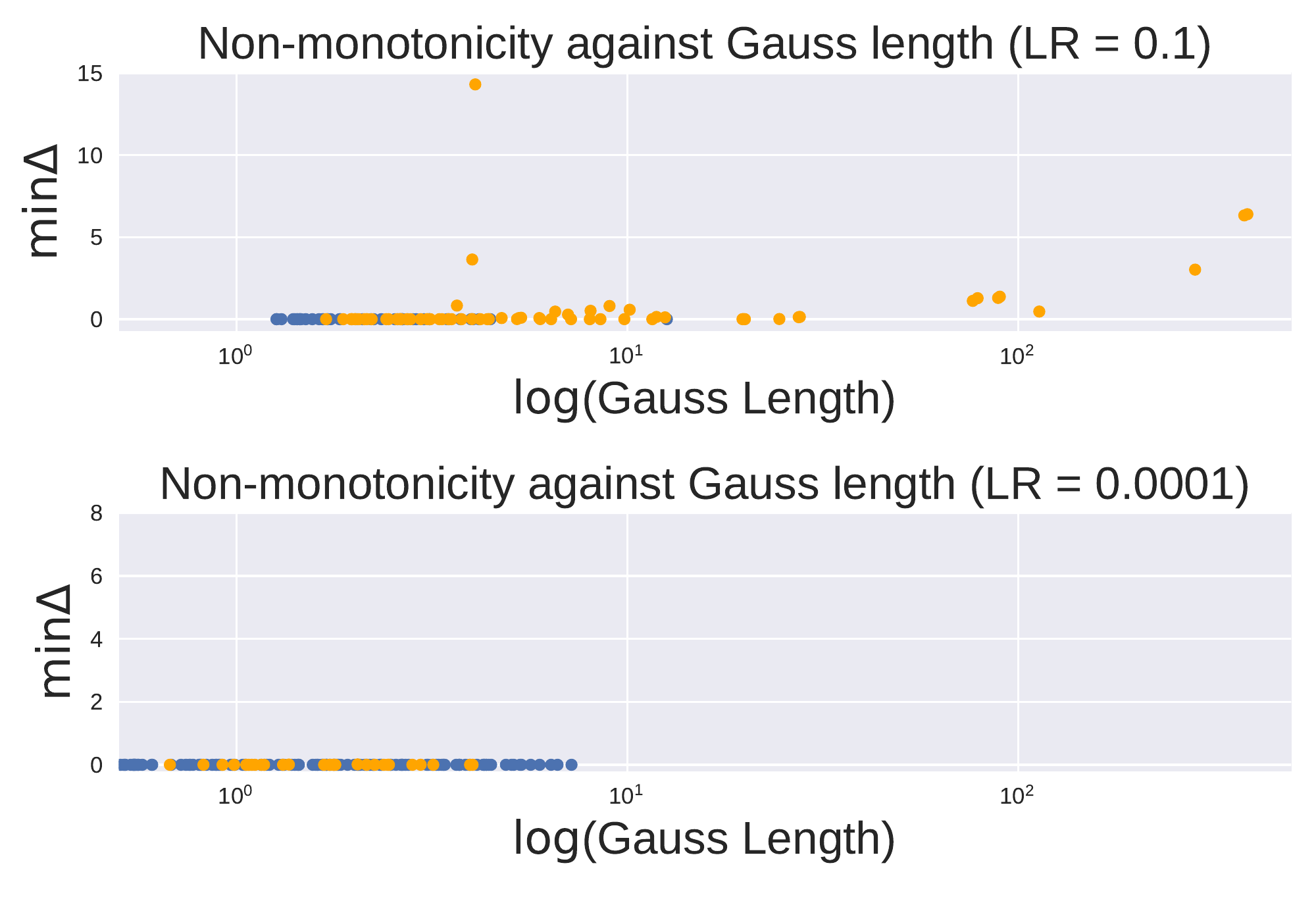}
\else
\includegraphics[width=0.9\linewidth]{figures/mnist_opt/mono_vs_gl_lr.pdf}
\fi
\end{minipage}
\caption{For each MNIST \& Fashion-MNIST classifier, we compute the minimum $\Delta$ such that the interpolated loss is $\Delta$-monotonic. We plot models trained with a learning rate of 0.1 and 0.0001 in the top and bottom rows respectively. On the left, we compare the distance moved in the weight space. On the right, we compare the Gauss length of the interpolated network outputs. \textbf{\textcolor{blue}{Blue}} points represent networks where the MLI property holds and \textbf{\textcolor{orange}{orange}} points are networks where the MLI property fails.}
\label{fig:nm_gl_ds}
\end{figure*}

In Section~\ref{sec:mli_property}, we discussed the parameter- and function-space perspectives of the MLI property. In our experiments, we explore these two perspectives on reconstruction and classification tasks. We computed the average Gauss length of the logit interpolations and the weight distance travelled. In both cases, these measures are predictive of MLI in practice, even for values exceeding the limits of our theory.

In Appendix~\ref{app:experiments_gen}, we provide the full set of results for all settings we explored. Additionally, we provide an investigation of the relationship between the MLI property and generalization. In summary, we did not find a clear relationship between the success of the MLI property and the generalization ability of the neural network.

\subsubsection{Weight distance vs. monotonicity}

Throughout our experiments, we found that weight distance was negatively correlated with the monotonicity of the interpolated network. In Figure~\ref{fig:nm_gl_ds} (left), we show the relationship between the (normalized) distance travelled in weight space and the minimum $\Delta$ such that fully-connected classifiers are $\Delta-$monotonic.

First, we note that larger learning rates encourage greater movement in weight space --- a finding that also extends to batch normalization and the use of adaptive optimizers. Second, we observed that the networks that travelled short distances during optimization consistently satisfied the MLI property. Conversely, networks with larger distances travelled in weight space were more likely to exhibit non-monotonic loss interpolations. In Appendix~\ref{app:experiments_weight_dis}, we show similar results for the autoencoders, CIFAR-10 \& CIFAR-100 classifiers, and comparisons over batch normalization and adaptive optimizers.

\begin{figure*}[!h]
\begin{minipage}{0.5\linewidth}
\centering
\ifarxiv
\includegraphics[width=\linewidth]{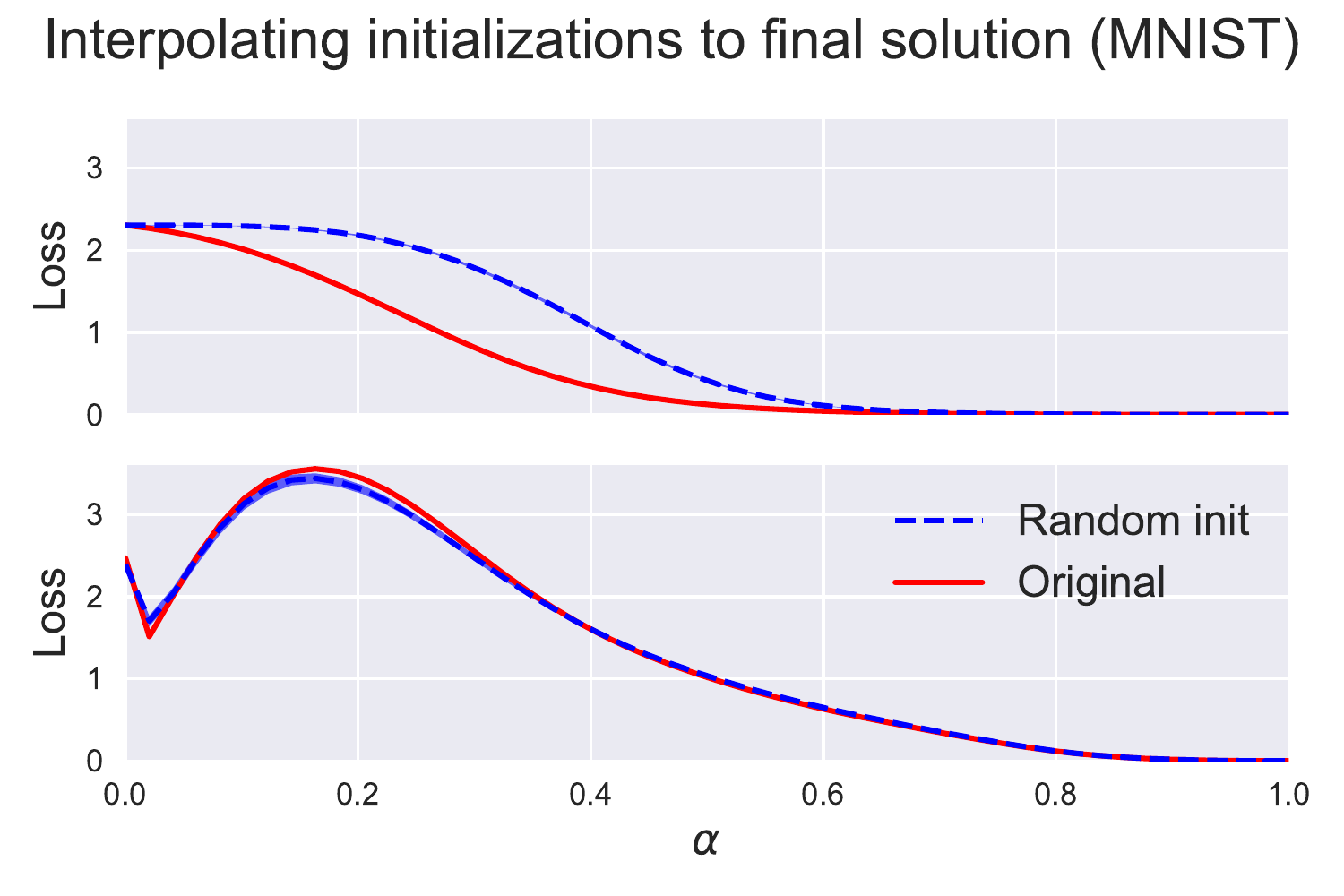}
\else
\includegraphics[width=0.9\linewidth]{figures/mnist_interp_nway/rand_init_compare.pdf}
\fi
\end{minipage}\hfill%
\begin{minipage}{0.5\linewidth}
\centering
\ifarxiv
\includegraphics[width=\linewidth]{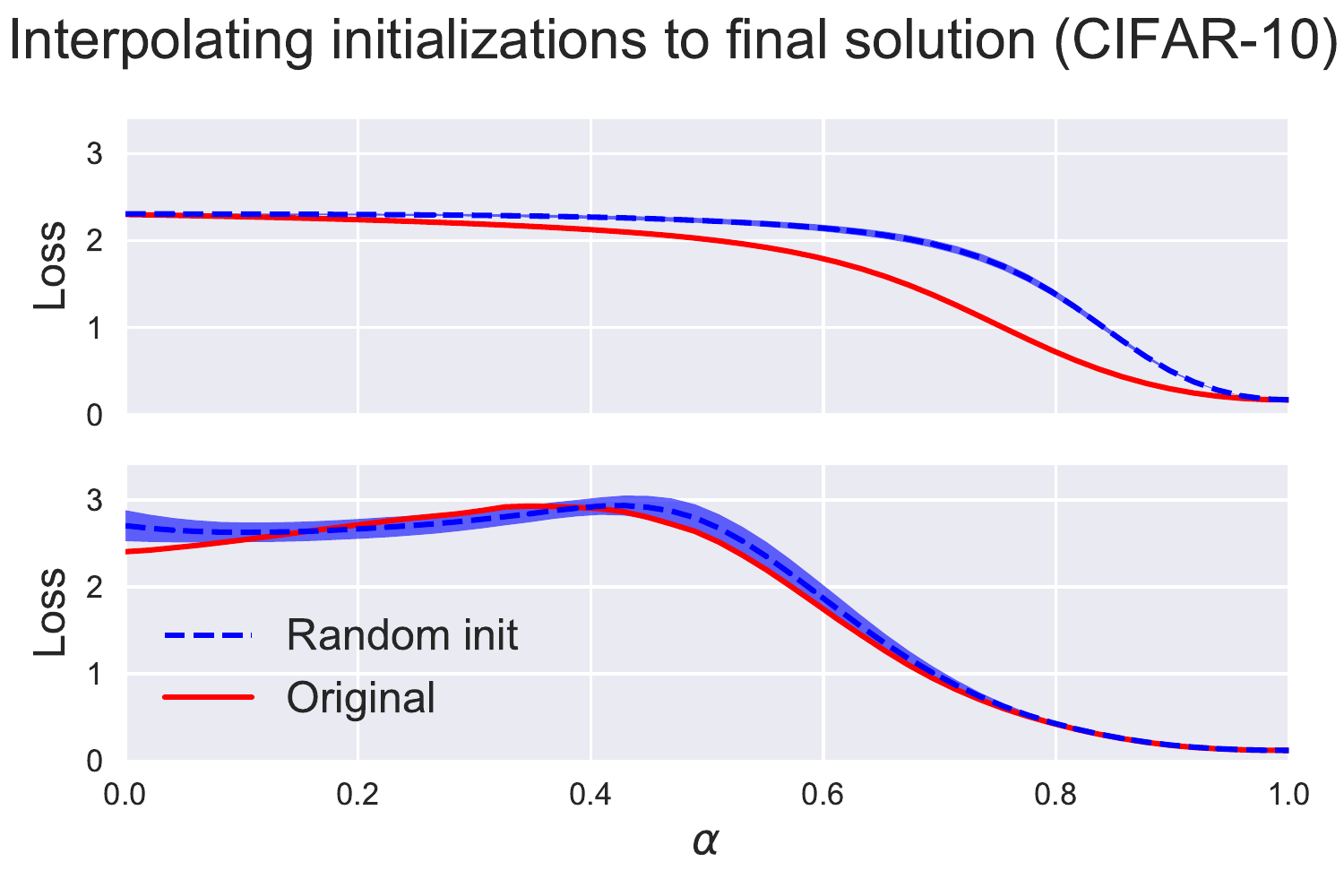}
\else
\includegraphics[width=0.9\linewidth]{figures/CIFAR_search/CIFAR10/rand_init_compare.pdf}
\fi
\end{minipage}\vspace{-0.5cm}
\caption{Classifier interpolation loss on training set between 15 different random initializations and an optimum. The top row shows interpolation towards a final solution that is monotonic with its original initialization. The bottom row shows this interpolation for a non-monotonic original pair. For the random initializations, mean loss is shown with standard deviation ($\pm 1$) as filled region.}
\label{fig:rand_init_to_optimum}
\vspace{-0.3cm}
\end{figure*}

\ifarxiv
\begin{wrapfigure}[13]{R}{0.5\linewidth}
    \centering
    \vspace{-0.5cm}
    \includegraphics[width=\linewidth]{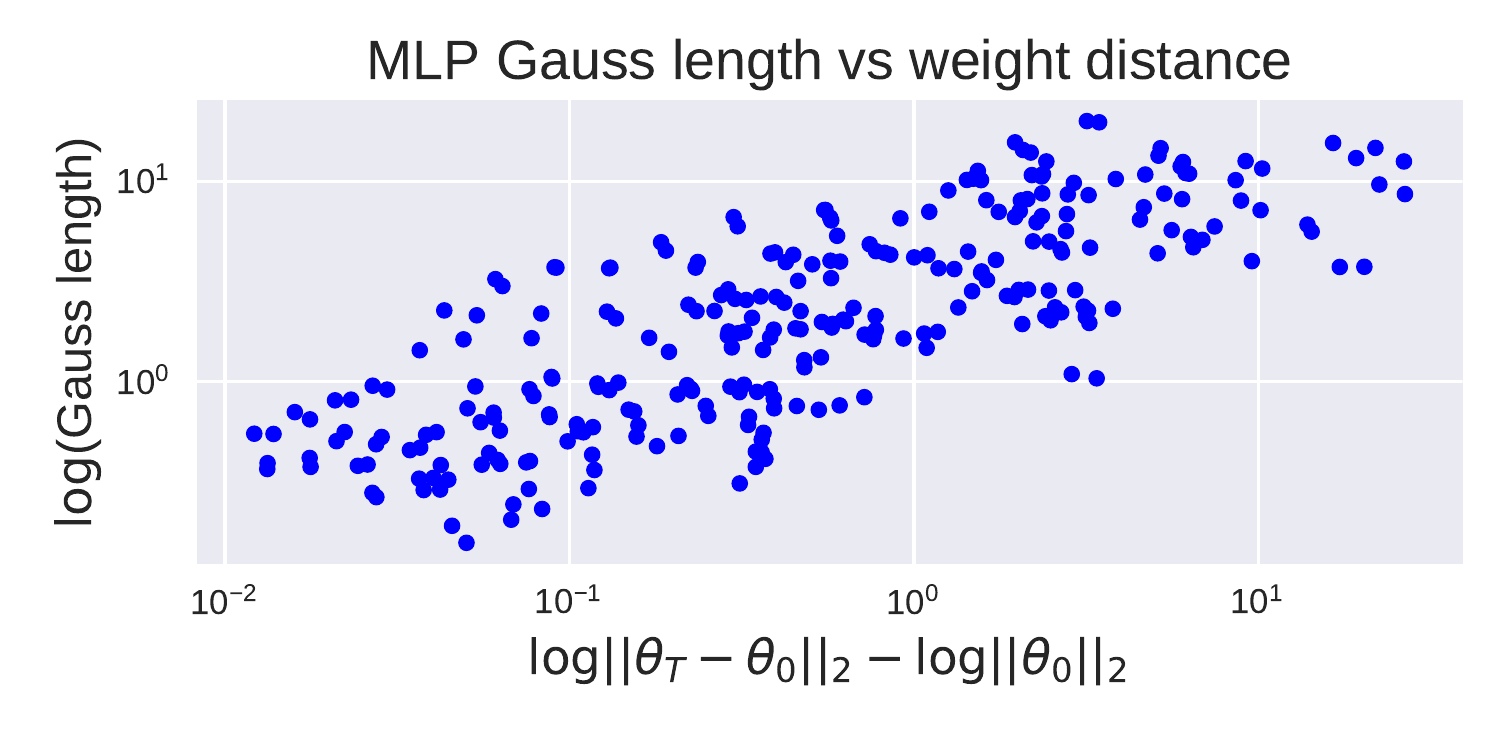}\vspace{-0.8cm}
    \caption{Power law relationship between Gauss length and weight distance travelled for MLP \& Fashion-MNIST experiments. ($R^2 = 0.616$)}
    \label{fig:mlp_power_law}
    \vspace{-1cm}
\end{wrapfigure}
\else
\begin{figure}[H]
    \centering
    \includegraphics[width=\linewidth]{figures/mnist_opt/gl_distance.pdf}\vspace{-0.5cm}
    \caption{Power law relationship between Gauss length and weight distance travelled for MLP \& Fashion-MNIST experiments. ($R^2 = 0.616$)}
    \label{fig:mlp_power_law}
    \vspace{-0.4cm}
\end{figure}
\fi

\subsubsection{Gauss length vs. monotonicity}
We also observed a negative correlation between the Gauss length of the logit interpolations and the minimum $\Delta$ such that the loss interpolation is $\Delta$-monotonic. In Figure~\ref{fig:nm_gl_ds} (right), we make this comparison for classifiers trained on MNIST \& Fashion-MNIST. As our analysis predicts, small Gauss lengths lead to monotonic interpolations. And beyond the strict limits of our theoretical analysis, we find that as the Gauss length increases, the non-monotonicity also increases. 

We also observed that larger learning rates lead to much larger Gauss lengths. As with the weight distance, this finding extends to batch normalization and the use of adaptive optimizers too (see Appendix~\ref{app:experiments_gl}). In Appendix~\ref{app:experiments_opt_abl}, we conduct an ablation study to investigate the relationship between Gauss length and the choice of optimizer by changing the optimizer in the middle of training (SGD $\to$ Adam and Adam $\to$ SGD). Switching to Adam at any point during training leads to large Gauss length and weight distance without a significant spike in the training loss --- with little variation due to the time of the optimizer switch.

\subsubsection{Gauss length vs weight distance}

When the distance moved in weight space is small, we would expect a small Gauss length as a linearization of the network provides a good approximation. However, it is not obvious what relationship (if any) should be expected more generally. Surprisingly, we consistently observed a power-law relationship between the average Gauss length and the distance moved in weight space (Figure~\ref{fig:mlp_power_law}). We observed this relationship across all of the experimental settings that we explored. Full results are presented in Appendix~\ref{app:experiments_gl_wd}.

\subsection{What does MLI say about loss landscapes?}
\label{sec:exp:what_landscape}

Thus far, we focused on the monotonicity of paths connecting the initialization and final network solution. In this section, we ask: are (non-)monotonic interpolations unique to the initialization and final solution pair?

\begin{figure}
\begin{minipage}{0.33\linewidth}
\centering
\includegraphics[width=\linewidth]{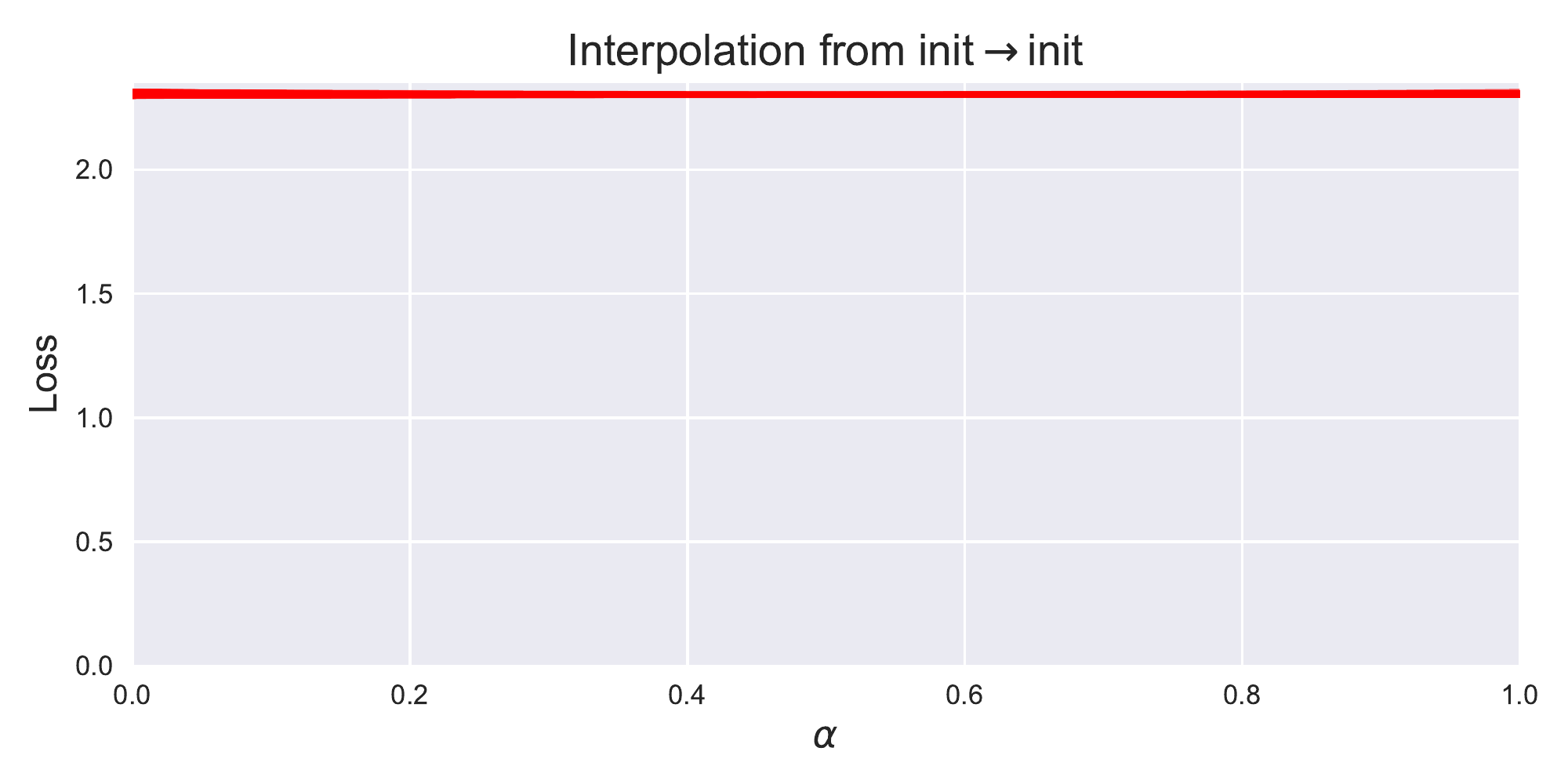}
\end{minipage}\hfill%
\begin{minipage}{0.33\linewidth}
\centering
\includegraphics[width=\linewidth]{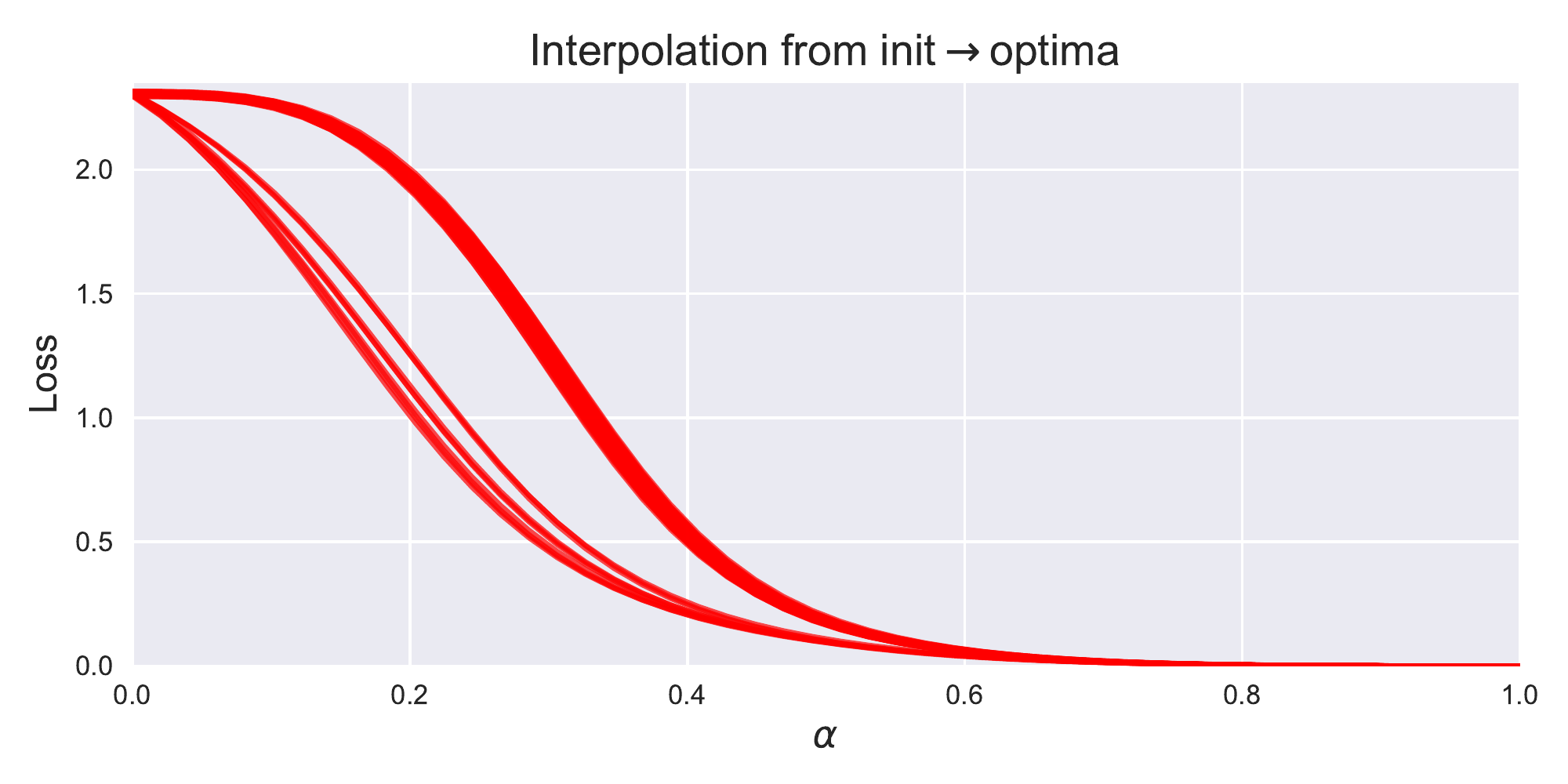}
\end{minipage}\hfill%
\begin{minipage}{0.33\linewidth}
\centering
\includegraphics[width=\linewidth]{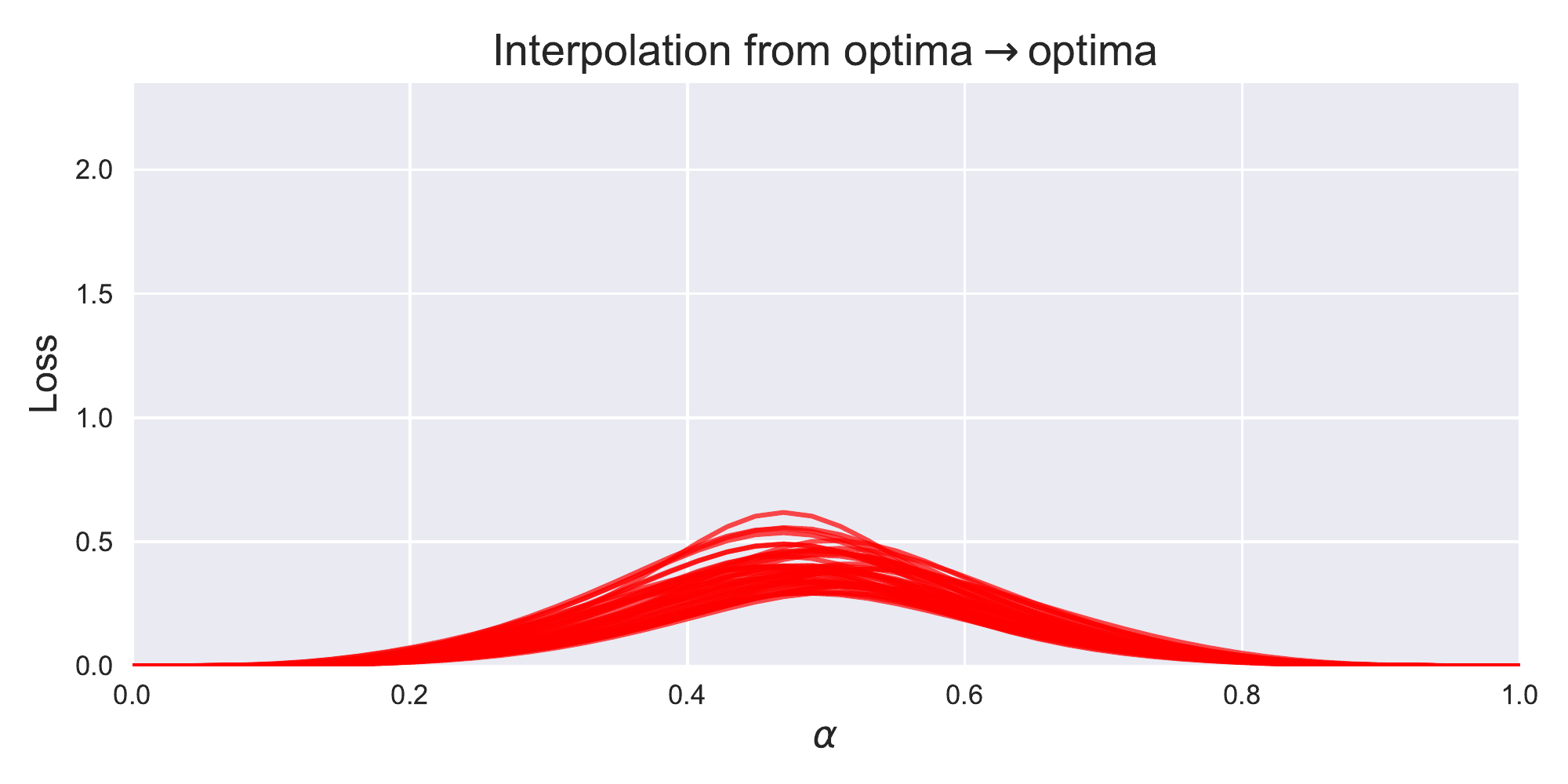}
\end{minipage}\vspace{-0.2cm}
\caption{\footnotesize Linear interpolation for 10 FashionMNIST classifiers with less than 0.01 final loss. Left: Interpolating between all pairs of initializations. Middle: Interpolating from all initializations to all optima. Right: Interpolating between all pairs of optima.}
\label{fig:li_pairings}\vspace{-0.65cm}
\end{figure}

To this end, we evaluated linear interpolations between learned network parameters and unrelated random initializations (Figure~\ref{fig:rand_init_to_optimum}). For a fully-connected MNIST classifier and a ResNet-20 trained on CIFAR-10, we found that random initializations display the same interpolation behaviour as the original initialization-solution pair. This suggests that the MLI property is not tied to a particular pair of parameters but rather is a global property of the loss landscape. We also explored linear interpolations between pairs of initializations, initialization to optima pairs, and pairs of optima in Figure~\ref{fig:li_pairings}. No barriers were observed between the pairs of initializations or the initialization$\rightarrow$optimum pairs, but barriers are present between the optima. This highlights the rich structure present in the loss landscape of these models and aligns well with the qualitative predictions of~\citet{NEURIPS2019_48042b1d}.

Finally, we provide visualizations of the loss landscape via 2D projections of the parameter space. While low-dimensional projections of high-dimensional spaces are often misleading, in the case of linear interpolations, the entire path lies in the projected plane. Therefore, these visualizations give us valuable insight into connectivity in the loss landscape for multiple initialization $\to$ final solution paths.

In Figure~\ref{fig:cut_roberta_esperanto}, we show 2D projections of the loss landscape for RoBERTa~\citep{liu2019roberta} trained as a language model on Esperanto~\citep{conneau2019unsupervised} using the HuggingFace library~\citep{wolf-etal-2020-transformers}. We trained two models and plotted the initial points and optima for both. Both initial points are monotonically connected to both minima.

\begin{figure}[H]
\begin{minipage}{0.5\linewidth}
\centering
\includegraphics[width=\linewidth]{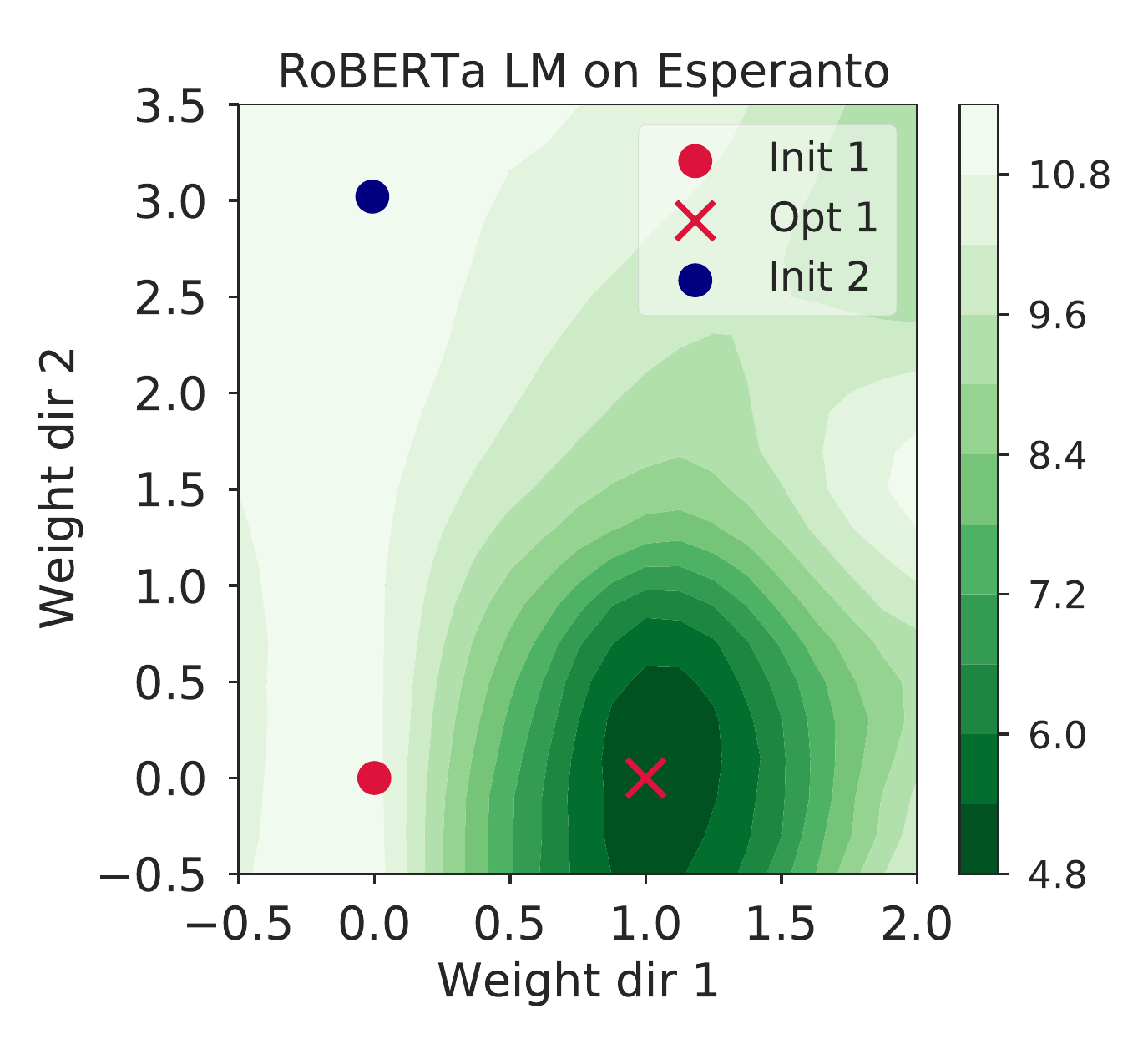}
\end{minipage}\hfill%
\begin{minipage}{0.5\linewidth}
\centering
\includegraphics[width=\linewidth]{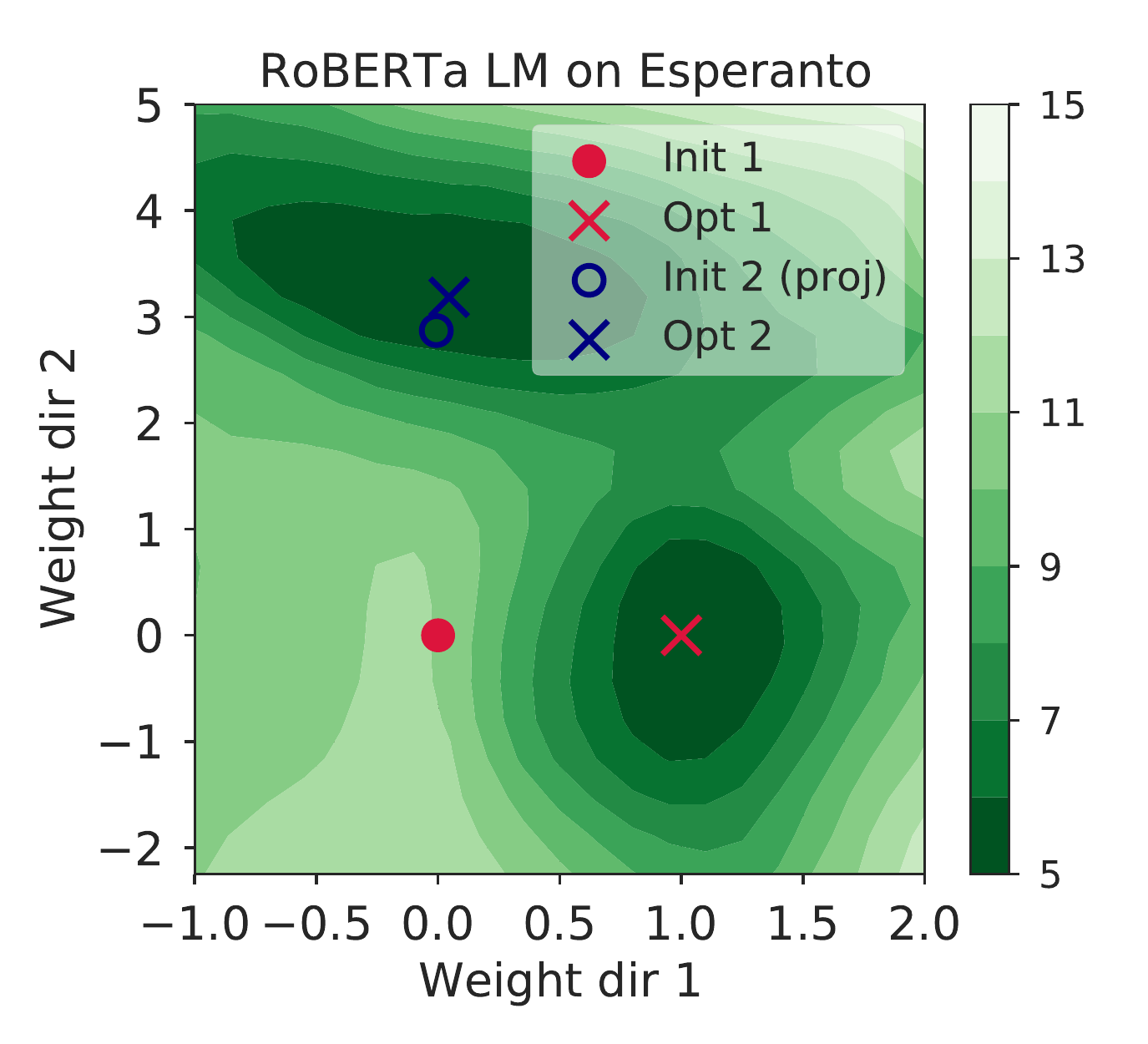}
\end{minipage}\vspace{-0.5cm}
    \caption{Two-dimensional sections of the weight space for RoBERTa trained as a language model on Esperanto. Left: plane defined by two initializations and the optima reached from one of them is shown. Right: plane defined by ``Init 1'' and two optima are shown (with ``Init 2'' projected onto the plane).}\label{fig:cut_roberta_esperanto}
\end{figure}

\section{Conclusion}
\label{sec:conclusion}

\citet{goodfellow2014qualitatively} first showed that linear interpolation between initial and final network parameters monotonically decreases the training loss. In this work, we provided the first evidence that this so called, Monotonic Linear Interpolation (MLI), is not a stable property of neural network training. In doing so, we provided a deeper theoretical understanding of the MLI property and properties of the loss landscape in general. Our empirical investigation of the MLI property explored variations in datasets, architecture, optimization, and other training mechanisms. We identified several mechanisms that systematically produce trained networks that violate the MLI property, and connected these mechanisms to our theoretical explanations of the MLI property. Additional results indicate that the MLI property is not unique to the initialization$\to$solution pair produced by training, but rather is a global property of the loss landscape connecting arbitrary initialization$\to$solution pairs. The empirical and theoretical analysis we presented highlights the intriguing properties of neural network loss landscapes.

\bibliography{references}

\begin{thebibliography}{56}
\providecommand{\natexlab}[1]{#1}
\providecommand{\url}[1]{\texttt{#1}}
\expandafter\ifx\csname urlstyle\endcsname\relax
  \providecommand{\doi}[1]{doi: #1}\else
  \providecommand{\doi}{doi: \begingroup \urlstyle{rm}\Url}\fi

\bibitem[Agarwal et~al.(2020)Agarwal, Anil, Hazan, Koren, and
  Zhang]{agarwal2020disentangling}
Naman Agarwal, Rohan Anil, Elad Hazan, Tomer Koren, and Cyril Zhang.
\newblock Disentangling adaptive gradient methods from learning rates.
\newblock \emph{arXiv preprint arXiv:2002.11803}, 2020.

\bibitem[Amari(1998)]{amari1998natural}
Shun-Ichi Amari.
\newblock Natural gradient works efficiently in learning.
\newblock \emph{Neural computation}, 10\penalty0 (2):\penalty0 251--276, 1998.

\bibitem[Amari et~al.(2020)Amari, Ba, Grosse, Li, Nitanda, Suzuki, Wu, and
  Xu]{amari2020does}
Shun-ichi Amari, Jimmy Ba, Roger Grosse, Xuechen Li, Atsushi Nitanda, Taiji
  Suzuki, Denny Wu, and Ji~Xu.
\newblock When does preconditioning help or hurt generalization?
\newblock \emph{arXiv preprint arXiv:2006.10732}, 2020.

\bibitem[Boyd et~al.(2004)Boyd, Boyd, and Vandenberghe]{boyd2004convex}
Stephen Boyd, Stephen~P Boyd, and Lieven Vandenberghe.
\newblock \emph{Convex optimization}.
\newblock Cambridge university press, 2004.

\bibitem[Chizat et~al.(2018)Chizat, Oyallon, and Bach]{chizat2018lazy}
Lenaic Chizat, Edouard Oyallon, and Francis Bach.
\newblock On lazy training in differentiable programming.
\newblock \emph{arXiv preprint arXiv:1812.07956}, 2018.

\bibitem[Conneau et~al.(2019)Conneau, Khandelwal, Goyal, Chaudhary, Wenzek,
  Guzm{\'a}n, Grave, Ott, Zettlemoyer, and Stoyanov]{conneau2019unsupervised}
Alexis Conneau, Kartikay Khandelwal, Naman Goyal, Vishrav Chaudhary, Guillaume
  Wenzek, Francisco Guzm{\'a}n, Edouard Grave, Myle Ott, Luke Zettlemoyer, and
  Veselin Stoyanov.
\newblock Unsupervised cross-lingual representation learning at scale.
\newblock \emph{arXiv preprint arXiv:1911.02116}, 2019.

\bibitem[Dinh et~al.(2017)Dinh, Pascanu, Bengio, and Bengio]{dinh2017sharp}
Laurent Dinh, Razvan Pascanu, Samy Bengio, and Yoshua Bengio.
\newblock Sharp minima can generalize for deep nets.
\newblock \emph{arXiv preprint arXiv:1703.04933}, 2017.

\bibitem[Draxler et~al.(2018)Draxler, Veschgini, Salmhofer, and
  Hamprecht]{draxler2018essentially}
Felix Draxler, Kambis Veschgini, Manfred Salmhofer, and Fred~A Hamprecht.
\newblock Essentially no barriers in neural network energy landscape.
\newblock \emph{arXiv preprint arXiv:1803.00885}, 2018.

\bibitem[Fort and Ganguli(2019)]{fort2019emergent}
Stanislav Fort and Surya Ganguli.
\newblock Emergent properties of the local geometry of neural loss landscapes.
\newblock \emph{arXiv preprint arXiv:1910.05929}, 2019.

\bibitem[Fort and Jastrzebski(2019)]{NEURIPS2019_48042b1d}
Stanislav Fort and Stanislaw Jastrzebski.
\newblock Large scale structure of neural network loss landscapes.
\newblock In H.~Wallach, H.~Larochelle, A.~Beygelzimer, F.~d\textquotesingle
  Alch\'{e}-Buc, E.~Fox, and R.~Garnett, editors, \emph{Advances in Neural
  Information Processing Systems}, volume~32. Curran Associates, Inc., 2019.
\newblock URL
  \url{https://proceedings.neurips.cc/paper/2019/file/48042b1dae4950fef2bd2aafa0b971a1-Paper.pdf}.

\bibitem[Fort and Scherlis(2019)]{fort2019goldilocks}
Stanislav Fort and Adam Scherlis.
\newblock The {G}oldilocks zone: Towards better understanding of neural network
  loss landscapes.
\newblock In \emph{Proceedings of the AAAI Conference on Artificial
  Intelligence}, volume~33, pages 3574--3581, 2019.

\bibitem[Fort et~al.(2020)Fort, Dziugaite, Paul, Kharaghani, Roy, and
  Ganguli]{fort2020deep}
Stanislav Fort, Gintare~Karolina Dziugaite, Mansheej Paul, Sepideh Kharaghani,
  Daniel~M Roy, and Surya Ganguli.
\newblock Deep learning versus kernel learning: an empirical study of loss
  landscape geometry and the time evolution of the neural tangent kernel.
\newblock \emph{arXiv preprint arXiv:2010.15110}, 2020.

\bibitem[Frankle(2020)]{frankle2020revisiting}
Jonathan Frankle.
\newblock Revisiting" qualitatively characterizing neural network optimization
  problems".
\newblock \emph{arXiv preprint arXiv:2012.06898}, 2020.

\bibitem[Frankle and Carbin(2018)]{frankle2018lottery}
Jonathan Frankle and Michael Carbin.
\newblock The lottery ticket hypothesis: Finding sparse, trainable neural
  networks.
\newblock \emph{arXiv preprint arXiv:1803.03635}, 2018.

\bibitem[Frankle et~al.(2019)Frankle, Dziugaite, Roy, and
  Carbin]{frankle2019linear}
Jonathan Frankle, Gintare~Karolina Dziugaite, Daniel~M Roy, and Michael Carbin.
\newblock Linear mode connectivity and the lottery ticket hypothesis.
\newblock \emph{arXiv preprint arXiv:1912.05671}, 2019.

\bibitem[Garipov et~al.(2018)Garipov, Izmailov, Podoprikhin, Vetrov, and
  Wilson]{garipov2018loss}
Timur Garipov, Pavel Izmailov, Dmitrii Podoprikhin, Dmitry~P Vetrov, and
  Andrew~G Wilson.
\newblock Loss surfaces, mode connectivity, and fast ensembling of dnns.
\newblock In \emph{Advances in Neural Information Processing Systems}, pages
  8789--8798, 2018.

\bibitem[Goodfellow et~al.(2014)Goodfellow, Vinyals, and
  Saxe]{goodfellow2014qualitatively}
Ian~J Goodfellow, Oriol Vinyals, and Andrew~M Saxe.
\newblock Qualitatively characterizing neural network optimization problems.
\newblock \emph{arXiv preprint arXiv:1412.6544}, 2014.

\bibitem[Goyal et~al.(2017)Goyal, Doll{\'a}r, Girshick, Noordhuis, Wesolowski,
  Kyrola, Tulloch, Jia, and He]{goyal2017accurate}
Priya Goyal, Piotr Doll{\'a}r, Ross Girshick, Pieter Noordhuis, Lukasz
  Wesolowski, Aapo Kyrola, Andrew Tulloch, Yangqing Jia, and Kaiming He.
\newblock Accurate, large minibatch sgd: Training imagenet in 1 hour.
\newblock \emph{arXiv preprint arXiv:1706.02677}, 2017.

\bibitem[Gur-Ari et~al.(2018)Gur-Ari, Roberts, and Dyer]{gur2018gradient}
Guy Gur-Ari, Daniel~A Roberts, and Ethan Dyer.
\newblock Gradient descent happens in a tiny subspace.
\newblock \emph{arXiv preprint arXiv:1812.04754}, 2018.

\bibitem[He et~al.(2016)He, Zhang, Ren, and Sun]{he2016deep}
Kaiming He, Xiangyu Zhang, Shaoqing Ren, and Jian Sun.
\newblock Deep residual learning for image recognition.
\newblock In \emph{Proceedings of the IEEE conference on computer vision and
  pattern recognition}, pages 770--778, 2016.

\bibitem[Hinton et~al.(2012)Hinton, Srivastava, and Swersky]{hinton2012neural}
Geoffrey Hinton, Nitish Srivastava, and Kevin Swersky.
\newblock Neural networks for machine learning lecture 6a overview of
  mini-batch gradient descent.
\newblock \emph{Cited on}, 14\penalty0 (8), 2012.

\bibitem[Hochreiter and Schmidhuber(1997{\natexlab{a}})]{hochreiter1997flat}
Sepp Hochreiter and J{\"u}rgen Schmidhuber.
\newblock Flat minima.
\newblock \emph{Neural Computation}, 9\penalty0 (1):\penalty0 1--42,
  1997{\natexlab{a}}.

\bibitem[Hochreiter and Schmidhuber(1997{\natexlab{b}})]{hochreiter1997long}
Sepp Hochreiter and J{\"u}rgen Schmidhuber.
\newblock Long short-term memory.
\newblock \emph{Neural computation}, 9\penalty0 (8):\penalty0 1735--1780,
  1997{\natexlab{b}}.

\bibitem[Huang et~al.(2017)Huang, Liu, Van Der~Maaten, and
  Weinberger]{huang2017densely}
Gao Huang, Zhuang Liu, Laurens Van Der~Maaten, and Kilian~Q Weinberger.
\newblock Densely connected convolutional networks.
\newblock In \emph{Proceedings of the IEEE conference on computer vision and
  pattern recognition}, pages 4700--4708, 2017.

\bibitem[Ioffe and Szegedy(2015)]{ioffe2015batch}
Sergey Ioffe and Christian Szegedy.
\newblock Batch normalization: Accelerating deep network training by reducing
  internal covariate shift.
\newblock \emph{arXiv preprint arXiv:1502.03167}, 2015.

\bibitem[Jacot et~al.(2018)Jacot, Gabriel, and Hongler]{jacot2018neural}
Arthur Jacot, Franck Gabriel, and Cl{\'e}ment Hongler.
\newblock Neural tangent kernel: Convergence and generalization in neural
  networks.
\newblock In \emph{Advances in neural information processing systems}, pages
  8571--8580, 2018.

\bibitem[Jastrzebski et~al.(2020)Jastrzebski, Szymczak, Fort, Arpit, Tabor,
  Cho*, and Geras*]{Jastrzebski2020The}
Stanislaw Jastrzebski, Maciej Szymczak, Stanislav Fort, Devansh Arpit, Jacek
  Tabor, Kyunghyun Cho*, and Krzysztof Geras*.
\newblock The break-even point on optimization trajectories of deep neural
  networks.
\newblock In \emph{International Conference on Learning Representations}, 2020.
\newblock URL \url{https://openreview.net/forum?id=r1g87C4KwB}.

\bibitem[Kingma and Ba(2014)]{kingma2014adam}
Diederik~P Kingma and Jimmy Ba.
\newblock Adam: {A} method for stochastic optimization.
\newblock \emph{arXiv preprint arXiv:1412.6980}, 2014.

\bibitem[Krizhevsky et~al.(2012)Krizhevsky, Sutskever, and
  Hinton]{krizhevsky2012imagenet}
Alex Krizhevsky, Ilya Sutskever, and Geoffrey~E Hinton.
\newblock Imagenet classification with deep convolutional neural networks.
\newblock \emph{Advances in neural information processing systems},
  25:\penalty0 1097--1105, 2012.

\bibitem[Krizhevsky et~al.(2009)]{krizhevsky2009learning}
Alex Krizhevsky et~al.
\newblock Learning multiple layers of features from tiny images.
\newblock 2009.

\bibitem[Kuditipudi et~al.(2019)Kuditipudi, Wang, Lee, Zhang, Li, Hu, Ge, and
  Arora]{kuditipudi2019explaining}
Rohith Kuditipudi, Xiang Wang, Holden Lee, Yi~Zhang, Zhiyuan Li, Wei Hu, Rong
  Ge, and Sanjeev Arora.
\newblock Explaining landscape connectivity of low-cost solutions for
  multilayer nets.
\newblock In \emph{Advances in Neural Information Processing Systems}, pages
  14574--14583, 2019.

\bibitem[Kunin et~al.(2019)Kunin, Bloom, Goeva, and Seed]{pmlr-v97-kunin19a}
Daniel Kunin, Jonathan Bloom, Aleksandrina Goeva, and Cotton Seed.
\newblock Loss landscapes of regularized linear autoencoders.
\newblock In Kamalika Chaudhuri and Ruslan Salakhutdinov, editors,
  \emph{Proceedings of the 36th International Conference on Machine Learning},
  volume~97 of \emph{Proceedings of Machine Learning Research}, pages
  3560--3569. PMLR, 09--15 Jun 2019.
\newblock URL \url{http://proceedings.mlr.press/v97/kunin19a.html}.

\bibitem[LeCun et~al.(1989)LeCun, Boser, Denker, Henderson, Howard, Hubbard,
  and Jackel]{lecun1989backpropagation}
Yann LeCun, Bernhard Boser, John~S Denker, Donnie Henderson, Richard~E Howard,
  Wayne Hubbard, and Lawrence~D Jackel.
\newblock Backpropagation applied to handwritten zip code recognition.
\newblock \emph{Neural computation}, 1\penalty0 (4):\penalty0 541--551, 1989.

\bibitem[LeCun et~al.(2010)LeCun, Cortes, and Burges]{lecun2010mnist}
Yann LeCun, Corinna Cortes, and CJ~Burges.
\newblock {MNIST} handwritten digit database.
\newblock \emph{ATT Labs [Online]. Available:
  http://yann.lecun.com/exdb/mnist}, 2, 2010.

\bibitem[Lee et~al.(2019)Lee, Xiao, Schoenholz, Bahri, Sohl-Dickstein, and
  Pennington]{lee2019wide}
Jaehoon Lee, Lechao Xiao, Samuel~S Schoenholz, Yasaman Bahri, Jascha
  Sohl-Dickstein, and Jeffrey Pennington.
\newblock Wide neural networks of any depth evolve as linear models under
  gradient descent.
\newblock \emph{arXiv preprint arXiv:1902.06720}, 2019.

\bibitem[Lee(2006)]{lee2006riemannian}
John~M Lee.
\newblock \emph{Riemannian manifolds: an introduction to curvature}, volume
  176.
\newblock Springer Science \& Business Media, 2006.

\bibitem[Lewkowycz et~al.(2020)Lewkowycz, Bahri, Dyer, Sohl-Dickstein, and
  Gur-Ari]{lewkowycz2020large}
Aitor Lewkowycz, Yasaman Bahri, Ethan Dyer, Jascha Sohl-Dickstein, and Guy
  Gur-Ari.
\newblock The large learning rate phase of deep learning: the catapult
  mechanism.
\newblock \emph{arXiv preprint arXiv:2003.02218}, 2020.

\bibitem[Li et~al.(2018)Li, Farkhoor, Liu, and Yosinski]{li2018measuring}
Chunyuan Li, Heerad Farkhoor, Rosanne Liu, and Jason Yosinski.
\newblock Measuring the intrinsic dimension of objective landscapes.
\newblock \emph{arXiv preprint arXiv:1804.08838}, 2018.

\bibitem[Liu et~al.(2020)Liu, Zhu, and Belkin]{liu2020linearity}
Chaoyue Liu, Libin Zhu, and Mikhail Belkin.
\newblock On the linearity of large non-linear models: when and why the tangent
  kernel is constant.
\newblock \emph{Advances in Neural Information Processing Systems}, 33, 2020.

\bibitem[Liu et~al.(2019)Liu, Ott, Goyal, Du, Joshi, Chen, Levy, Lewis,
  Zettlemoyer, and Stoyanov]{liu2019roberta}
Yinhan Liu, Myle Ott, Naman Goyal, Jingfei Du, Mandar Joshi, Danqi Chen, Omer
  Levy, Mike Lewis, Luke Zettlemoyer, and Veselin Stoyanov.
\newblock Roberta: A robustly optimized bert pretraining approach.
\newblock \emph{arXiv preprint arXiv:1907.11692}, 2019.

\bibitem[Martens and Grosse(2015)]{martens2015optimizing}
James Martens and Roger Grosse.
\newblock Optimizing neural networks with kronecker-factored approximate
  curvature.
\newblock In \emph{International conference on machine learning}, pages
  2408--2417. PMLR, 2015.

\bibitem[Matthews et~al.(2018)Matthews, Rowland, Hron, Turner, and
  Ghahramani]{matthews2018gaussian}
Alexander G de~G Matthews, Mark Rowland, Jiri Hron, Richard~E Turner, and
  Zoubin Ghahramani.
\newblock Gaussian process behaviour in wide deep neural networks.
\newblock \emph{arXiv preprint arXiv:1804.11271}, 2018.

\bibitem[Nguyen(2019)]{nguyen2019connected}
Quynh Nguyen.
\newblock On connected sublevel sets in deep learning.
\newblock \emph{arXiv preprint arXiv:1901.07417}, 2019.

\bibitem[Papyan(2020)]{papyan2020traces}
Vardan Papyan.
\newblock Traces of class/cross-class structure pervade deep learning spectra.
\newblock \emph{Journal of Machine Learning Research}, 21\penalty0
  (252):\penalty0 1--64, 2020.

\bibitem[Poole et~al.(2016)Poole, Lahiri, Raghu, Sohl-Dickstein, and
  Ganguli]{poole2016exponential}
Ben Poole, Subhaneil Lahiri, Maithra Raghu, Jascha Sohl-Dickstein, and Surya
  Ganguli.
\newblock Exponential expressivity in deep neural networks through transient
  chaos.
\newblock \emph{arXiv preprint arXiv:1606.05340}, 2016.

\bibitem[Saxe et~al.(2019)Saxe, McClelland, and Ganguli]{saxe2019mathematical}
Andrew~M. Saxe, James~L. McClelland, and Surya Ganguli.
\newblock A mathematical theory of semantic development in deep neural
  networks.
\newblock \emph{Proceedings of the National Academy of Sciences}, 116\penalty0
  (23):\penalty0 11537--11546, 2019.

\bibitem[Schaul et~al.(2013)Schaul, Zhang, and LeCun]{schaul2013no}
Tom Schaul, Sixin Zhang, and Yann LeCun.
\newblock No more pesky learning rates.
\newblock In \emph{International Conference on Machine Learning}, pages
  343--351, 2013.

\bibitem[Shevchenko and Mondelli(2019)]{shevchenko2019landscape}
Alexander Shevchenko and Marco Mondelli.
\newblock Landscape connectivity and dropout stability of sgd solutions for
  over-parameterized neural networks.
\newblock \emph{arXiv preprint arXiv:1912.10095}, 2019.

\bibitem[Simonyan and Zisserman(2014)]{simonyan2014very}
Karen Simonyan and Andrew Zisserman.
\newblock Very deep convolutional networks for large-scale image recognition.
\newblock \emph{arXiv preprint arXiv:1409.1556}, 2014.

\bibitem[Sun(2019)]{sun2019optimization}
Ruoyu Sun.
\newblock Optimization for deep learning: theory and algorithms.
\newblock \emph{arXiv preprint arXiv:1912.08957}, 2019.

\bibitem[Vaswani et~al.(2017)Vaswani, Shazeer, Parmar, Uszkoreit, Jones, Gomez,
  Kaiser, and Polosukhin]{vaswani2017attention}
Ashish Vaswani, Noam Shazeer, Niki Parmar, Jakob Uszkoreit, Llion Jones,
  Aidan~N Gomez, {\L}ukasz Kaiser, and Illia Polosukhin.
\newblock Attention is all you need.
\newblock In \emph{Advances in neural information processing systems}, pages
  5998--6008, 2017.

\bibitem[Wolf et~al.(2020)Wolf, Debut, Sanh, Chaumond, Delangue, Moi, Cistac,
  Rault, Louf, Funtowicz, Davison, Shleifer, von Platen, Ma, Jernite, Plu, Xu,
  Scao, Gugger, Drame, Lhoest, and Rush]{wolf-etal-2020-transformers}
Thomas Wolf, Lysandre Debut, Victor Sanh, Julien Chaumond, Clement Delangue,
  Anthony Moi, Pierric Cistac, Tim Rault, Rémi Louf, Morgan Funtowicz, Joe
  Davison, Sam Shleifer, Patrick von Platen, Clara Ma, Yacine Jernite, Julien
  Plu, Canwen Xu, Teven~Le Scao, Sylvain Gugger, Mariama Drame, Quentin Lhoest,
  and Alexander~M. Rush.
\newblock Transformers: State-of-the-art natural language processing.
\newblock In \emph{Proceedings of the 2020 Conference on Empirical Methods in
  Natural Language Processing: System Demonstrations}, pages 38--45, Online,
  October 2020. Association for Computational Linguistics.
\newblock URL \url{https://www.aclweb.org/anthology/2020.emnlp-demos.6}.

\bibitem[Wu et~al.(2018)Wu, Ren, Liao, and Grosse]{wu2018understanding}
Yuhuai Wu, Mengye Ren, Renjie Liao, and Roger Grosse.
\newblock Understanding short-horizon bias in stochastic meta-optimization.
\newblock \emph{arXiv preprint arXiv:1803.02021}, 2018.

\bibitem[Xiao et~al.(2017)Xiao, Rasul, and Vollgraf]{xiao2017fashion}
Han Xiao, Kashif Rasul, and Roland Vollgraf.
\newblock Fashion-{MNIST}: {A} novel image dataset for benchmarking machine
  learning algorithms.
\newblock \emph{arXiv preprint arXiv:1708.07747}, 2017.

\bibitem[Zhang et~al.(2019{\natexlab{a}})Zhang, Li, Nado, Martens, Sachdeva,
  Dahl, Shallue, and Grosse]{zhang2019algorithmic}
Guodong Zhang, Lala Li, Zachary Nado, James Martens, Sushant Sachdeva, George
  Dahl, Chris Shallue, and Roger~B Grosse.
\newblock Which algorithmic choices matter at which batch sizes? insights from
  a noisy quadratic model.
\newblock In \emph{Advances in Neural Information Processing Systems}, pages
  8196--8207, 2019{\natexlab{a}}.

\bibitem[Zhang et~al.(2019{\natexlab{b}})Zhang, Dauphin, and
  Ma]{zhang2019fixup}
Hongyi Zhang, Yann~N Dauphin, and Tengyu Ma.
\newblock Fixup initialization: Residual learning without normalization.
\newblock \emph{arXiv preprint arXiv:1901.09321}, 2019{\natexlab{b}}.

\end{thebibliography}

\newpage
\appendix
\section{Theoretical Gauss Length Analysis}
\label{app:gauss-length}

In this section, we provide an analysis of the MLI property via the Gauss Length. In particular, we prove Theorem~\ref{thm:small_gauss_mse_mono} that states that if the logit interpolation of a network (from initialization to optimum) has small Gauss length then it must satisfy the MLI property. Using Theorem~\ref{thm:small_gauss_mse_mono}, we provide sufficient conditions for the MLI property to hold for two-layer linear models. And prove, under a class of these models satisfying some standard assumptions, that the MLI property holds almost surely.

Let's first recall the definition of the Gauss length.
\gausslength*

Explicitly, we have,
\[\langle \partial_\alpha \hat{\bv}(\alpha), \partial_\alpha \hat{\bv}(\alpha)\rangle = \frac{(\bv \cdot \bv) (\ba \cdot \ba) - (\ba \cdot \bv)^2}{\bv \cdot \bv} = \kappa(\alpha)^2 (\bv \cdot \bv),\]
where $\ba = \frac{\partial \bv}{\partial \alpha}$ and $\kappa$ denotes the curvature of $\bz$. Theorem~\ref{thm:small_gauss_mse_mono} is reproduced below for convenience.

\smallgaussmono*

To prove this result, we will require the following Lemma.
\begin{lemma}\label{lemma:wide_tangents}
Let $\bx^* \in \bbR^d$. Consider a smooth curve $\bz(t) \in \bbR^d$ for $t \in [0,1)$ with $\Vert \bz(0) - \bx^* \Vert > 0$ and $\bz(1) = \bx^*$. If there exists $b \in [0,1)$ with,
\[\Vert \bz(b) - \bx^* \Vert_2 > \Vert \bz(0) - \bx^* \Vert_2,\]
then there exists $t_1 \in [0,b)$ and $t_2 \in (b,1)$ such that $\langle \dot{\bz}(t_1), \dot{\bz}(t_2) \rangle \leq 0$.
\end{lemma}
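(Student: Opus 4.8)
The plan is to argue by contradiction via a continuous-dependence / intermediate-value argument on the scalar function $g(t) = \tfrac12 \Vert \bz(t) - \bx^* \Vert_2^2$, whose derivative is $g'(t) = \langle \dot{\bz}(t), \bz(t) - \bx^* \rangle$. The hypothesis says $g(b) > g(0)$, and the boundary condition $\bz(1) = \bx^*$ forces $g(t) \to 0$ as $t \to 1$, so in particular $g$ must eventually come back down below $g(b)$. Hence there is a last time the curve is ``going up toward level $g(b)$'' on $[0,b]$ and a first time it comes back down afterward; more precisely, since $g(0) < g(b)$ and $g(b) > \lim_{t\to 1} g(t) = 0$, by the intermediate value theorem there exist $t_1 \in [0,b)$ and $t_2 \in (b,1)$ with $g(t_1) = g(t_2)$ (pick any common value strictly between $\max(g(0), \lim g) $ and $g(b)$) and, choosing $t_1$ as the last such point before $b$ and $t_2$ as the first such point after $b$, we additionally get $g'(t_1) \geq 0$ (the curve is entering the super-level set $\{g > g(t_1)\}$) and $g'(t_2) \leq 0$ (it is leaving it). This gives $\langle \dot{\bz}(t_1), \bz(t_1) - \bx^* \rangle \geq 0$ and $\langle \dot{\bz}(t_2), \bz(t_2) - \bx^* \rangle \leq 0$.

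The next step is to convert these two inequalities involving the ``radial'' direction $\bz(t) - \bx^*$ into the claimed inequality $\langle \dot{\bz}(t_1), \dot{\bz}(t_2) \rangle \leq 0$ between two velocity vectors at different times. This is the part requiring a genuine geometric idea rather than bookkeeping. The key observation is that $t_1$ and $t_2$ can be chosen on the \emph{same} level set of $g$, i.e.\ with $\Vert \bz(t_1) - \bx^* \Vert_2 = \Vert \bz(t_2) - \bx^* \Vert_2 =: r$. If the two radial vectors $\bu_i := \bz(t_i) - \bx^*$ happen to be parallel (same direction), then $g'(t_1) \geq 0 \geq g'(t_2)$ immediately gives $\langle \dot{\bz}(t_1), \bu_1\rangle \geq 0 \geq \langle \dot{\bz}(t_2), \bu_1 \rangle$, which is not yet what we want. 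So a more careful selection is needed: I expect the right move is to choose the level $r$ adaptively and use a minimality/maximality selection so that on the arc from $t_1$ to $t_2$ the quantity $\Vert \bz(t) - \bx^* \Vert_2$ stays $\ge r$ — then the curve restricted to this arc lives in the exterior of a ball, enters it radially nonincreasingly at $t_1$... actually I would instead reduce to the planar case by projecting. Concretely: after picking $t_1 \in [0,b)$, $t_2 \in (b,1)$ with equal radius $r$ and $g'(t_1)\ge 0 \ge g'(t_2)$, I would argue that because $g(t)\ge g(t_1)$ on $(t_1, \tau)$ for some $\tau$ past $b$ and $g$ drops back, combined with the mean value theorem applied to $g$ on suitable subintervals, one can force the two velocities to have nonpositive inner product. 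The cleanest route is likely: decompose $\dot{\bz}(t_i) = c_i \hat{\bu}_i + \bw_i$ with $\bw_i \perp \bu_i$; the radial components satisfy $c_1 \ge 0 \ge c_2$; then handle the tangential parts by noting that over the arc the curve must also rotate, which is exactly where the angle between $\hat{\bu}_1$ and $\hat{\bu}_2$ enters.

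I anticipate the \textbf{main obstacle} is precisely this last reduction: the conclusion $\langle \dot{\bz}(t_1), \dot{\bz}(t_2)\rangle \le 0$ is a statement about velocities, not positions, and a careless choice of $t_1, t_2$ (e.g.\ just using the IVT on $g$) yields only the radial inequalities, which do not by themselves imply the velocity inequality. The fix I will pursue is to choose $t_1$ and $t_2$ by an extremal principle — e.g.\ let $r^* = \max_{t \in [0,b]} \Vert \bz(t) - \bx^*\Vert_2$ restricted so that the level is revisited, or take $t_1$ to be where $\Vert\bz(t)-\bx^*\Vert_2$ achieves a value it later re-achieves with the arc in between staying outside the corresponding ball — so that at $t_1$ the curve is moving outward or tangent and at $t_2$ inward or tangent \emph{relative to the chord/radial structure of that arc}, and then invoke a two-dimensional argument in the plane spanned by $\bu_1$ and $\bu_2$ (or a limiting degenerate version if they are collinear). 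Once the geometry is set up in that plane, the inequality $\langle \dot{\bz}(t_1), \dot{\bz}(t_2)\rangle \le 0$ should follow from elementary planar reasoning about a path joining two points of equal distance from $\bx^*$ while staying no closer to $\bx^*$. If that extremal selection proves delicate, a fallback is a direct contradiction argument: assuming $\langle \dot{\bz}(t_1),\dot{\bz}(t_2)\rangle > 0$ for \emph{all} admissible pairs, derive that $g$ is quasi-monotone in a way incompatible with $g(0) < g(b)$ and $g(1) = 0$ simultaneously.
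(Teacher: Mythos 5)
There is a genuine gap here, and you have in fact located it yourself: the level-set selection gives you only the \emph{radial} inequalities $\langle \dot{\bz}(t_1), \bz(t_1)-\bx^*\rangle \geq 0$ and $\langle \dot{\bz}(t_2), \bz(t_2)-\bx^*\rangle \leq 0$, and these do not imply $\langle \dot{\bz}(t_1),\dot{\bz}(t_2)\rangle \leq 0$ for that particular pair. A concrete failure mode: take a planar curve that spirals outward past radius $\Vert\bz(b)-\bx^*\Vert$ and then spirals back in, so that at both $t_1$ and $t_2$ the velocity is dominated by a tangential component pointing in nearly the same direction, with only a small outward (resp.\ inward) radial part; then the two velocities have strictly positive inner product even though the radial signs are as required. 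So no argument that fixes a single pair via the intermediate value theorem on $g(t)=\tfrac12\Vert\bz(t)-\bx^*\Vert_2^2$ can close the proof, and the remedies you sketch (extremal level selection, projection to the plane spanned by the two radial vectors, or the unspecified ``quasi-monotone'' fallback) are left as intentions rather than arguments — the two-dimensional reduction in particular is not justified, since the two velocities need not lie in the plane spanned by $\bz(t_1)-\bx^*$ and $\bz(t_2)-\bx^*$.

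The idea you are missing is an averaging (Fubini) identity that converts the \emph{global} hypothesis ``$\langle\dot{\bz}(t_1),\dot{\bz}(t_2)\rangle>0$ for all admissible pairs'' into a statement about displacements: by the fundamental theorem of calculus,
\begin{equation*}
\int_{b}^{1}\!\!\int_{0}^{b} \langle \dot{\bz}(t_1), \dot{\bz}(t_2)\rangle\, dt_1\, dt_2 \;=\; \langle\, \bz(b)-\bz(0),\ \bx^*-\bz(b)\,\rangle,
\end{equation*}
so positivity of the integrand on the whole rectangle forces $\langle \bx^*-\bz(0), \bx^*-\bz(b)\rangle > \Vert \bx^*-\bz(b)\Vert_2^2$, and Cauchy--Schwarz then yields $\Vert\bz(b)-\bx^*\Vert_2 \leq \Vert\bz(0)-\bx^*\Vert_2$, which is exactly the contrapositive of the lemma. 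This is the paper's proof. It sidesteps your obstacle entirely because it never needs to exhibit a specific pair $(t_1,t_2)$: the negation of the conclusion is used wholesale under the double integral. I would recommend abandoning the pointwise/level-set route and adopting this averaging argument.
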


\begin{proof}
We prove the contrapositive statement: If for all $t_1 \in [0, b)$ and $t_2 \in (b, 1)$, we have $\langle \dot{\bz}(t_1), \dot{\bz}(t_2) \rangle > 0$, then, for all $b \in [0,1)$, we have $\Vert \bz(b) - \bx^* \Vert_2 \leq \Vert \bz(0) - \bx^* \Vert_2$.

By the fundamental theorem of calculus, we have,
\begin{align*}0 < \int_{b}^{1} \int_{0}^{b} \langle \dot{\bz}(t_1), \dot{\bz}(t_2) \rangle dt_1 dt_2 &= \langle \bz(b) - \bz(0), \bx^* - \bz(b) \rangle,\\
&= \langle \bx^* - \bz(0) + \bz(b) - \bx^*, \bx^* - \bz(b) \rangle\\
&= \langle \bx^* - \bz(0), \bx^* - \bz(b) \rangle - \Vert \bx^* - \bz(b) \Vert_2^2,\\
\end{align*}
Now, notice that as $\Vert \bx^* - \bz(b) \Vert_2^2 \geq 0$, we must have $\langle \bx^* - \bz(0), \bx^* - \bz(b) \rangle > 0$. Thus, by applying the Cauchy-Schwarz inequality, 
\begin{align*}
\langle \bx^* - \bz(0), \bx^* - \bz(b) \rangle - \Vert \bx^* - \bz(b) \Vert_2^2 &\leq \Vert \bx^* - \bz(0) \Vert_2 \Vert \bx^* - \bz(b) \Vert_2 - \Vert \bx^* - \bz(b) \Vert_2^2,\\
&= \Vert \bx^* - \bz(b) \Vert_2 \left(\Vert \bx^* - \bz(0) \Vert_2 - \Vert \bx^* - \bz(b) \Vert_2\right).
\end{align*}
It follows immediately that for any $b$ we must have,
\[\Vert \bz(b) - \bx^* \Vert_2 \leq \Vert \bz(0) - \bx^* \Vert_2,\]
as required.
\end{proof}

With this result, we proceed with the proof of Theorem~\ref{thm:small_gauss_mse_mono}.

\begin{proof}
We prove this theorem by considering the contrapositive statement: if there exists $a < b \in (0,1)$ such that $f(\bz(a)) < f(\bz(b))$ then the Gauss length is greater than $\pi / 2$.

Given such a pair $(a, b)$, we consider the restriction of $\bz$ to $[a, 1)$. By Lemma~\ref{lemma:wide_tangents}, there exists $t_1$ and $t_2$ such that $\langle \dot{\bz}(t_1), \dot{\bz}(t_2) \rangle < 0$.

Therefore, the normalized tangents also satisfy $\langle \hat{\bv}(t_1), \hat{\bv}(t_2) \rangle < 0$. Thus, the angle between the two normalized tangents (considered in the plane containing these two points and $\bx^*$) is at least $\pi / 2$. Therefore, the Gauss length of the curve on $(a, 1)$ must be at least $\pi / 2$ (with the minimum Gauss length path given by the shortest path on the projective plane connecting $\hat{\bv}(t_1)$ and $\hat{\bv}(t_2)$).
\end{proof}

Finally, we note here that the converse of Theorem~\ref{thm:small_gauss_mse_mono} does not hold. For example, one may define a curve that spirals towards the minima. This curve is monotonically decreasing but has arbitrarily large Gauss length.

\subsection{Two-layer linear models and the MLI property}
\label{app:two_layer_linear}

In this section, we apply Theorem~\ref{thm:small_gauss_mse_mono} to two-layer linear models. In particular, we prove sufficient conditions on any two-layer linear model to satisfy the MLI property and then prove that under certain assumptions the MLI property holds almost surely.

Our focus is on two-layer linear models, of the form $f(\bx) = VW \bx$, for $W \in \bbR^{k \times d}$ and $V \in \bbR^{m \times k}$. We consider optimizing these models with respect to the mean squared error.
\[\calL(X, Y; V, W) = \frac{1}{2n}\Vert VW X - Y \Vert_2^2,\]
where $X \in \bbR^{d \times n}$ and $Y \in \bbR^{m \times n}$. Note that this model also captures the linear autoencoder, when we set $X = Y$ with $m=d$.

We consider learning in the student-teacher setting, where the labels $Y$ are provided by a two-layer linear model with $k$ hidden units. This allows the application of Theorem~\ref{thm:small_gauss_mse_mono}, as the interpolation trajectory can reach the minimum of the objective. However, outside of this realizable setting we can still apply Theorem~\ref{thm:small_gauss_mse_mono} to the surrogate objective with $Y$ replaced by the minimum achievable target $\hat{Y}$ --- this objective aligns with the original at the global minimum.

Now consider a linear interpolation over initial parameters $V_0$, $W_0$ and final parameters $V_T$, $W_T$, denoted,
\[\bz(\alpha) = \left(V_0 + \alpha(V_T - V_0)\right)\left( W_0 + \alpha(W_T - W_0)\right)X.\]
Going forwards, we write $D_1 = (V_T - V_0)$ and $D_2 = (W_T - W_0)$. We first observe that the tangent to this curve is a linear function of $\alpha$:
\begin{equation}
    \bz'(\alpha) = \left(D_1 W_0 + V_0 D_2 + 2\alpha D_1 D_2 \right)X.
\end{equation}
The Gauss length of the interpolated trajectory is given by the length of the projection of the tangent vectors onto the projective space, in this case the sphere with antipodal points identified. Immediately, we note that this line projects onto the sphere as an arc, with end points given by the projection of $\bz'(0)$ and $\bz'(1)$. Following this, the Gauss length is less than $\pi/2$ exactly when the (vectorized) inner product of the two endpoint is positive (implying the angle between them is at most $\pi / 2$). Furthermore, the Gauss length of the interpolation path is at most $\pi$ for any initial-final parameter pair.

The two endpoints are given by:
\begin{equation}
    \bz'(0) = \left(D_1 W_0 + V_0 D_2\right)X \:\:\:\:\textrm{ and }\:\:\:\: \bz'(1) = \left(D_1 W_T + V_T D_2\right)X
\end{equation}

Recall the Kronecker product identity $\vecop{AX} = (I \otimes A)\vecop{X}$, where $\vecop{\cdot}$ indicates column-major vectorization. Then we have,
\begin{align*}
    \langle\bz'(0) , \bz'(1)\rangle &= \vecop{\left(D_1 W_0 + V_0 D_2\right)X}^\top \vecop{\left(D_1 W_T + V_T D_2\right)X}\\
    &= \vecop{X}^\top \left(I \otimes (D_1 W_0 + V_0 D_2)^\top\right)\left(I \otimes (D_1 W_T + V_T D_2)\right) \vecop{X}\\
    &= \vecop{X}^\top \left(I \otimes \left((D_1 W_0 + V_0 D_2)^\top(D_1 W_T + V_T D_2)\right) \right)\vecop{X}
\end{align*}
Now, noting that $I \otimes A$  has the same eigenvalues as $A$ (with increased multiplicity), we have $\langle \bz'(0), \bz'(1) \rangle > 0$ for all $X$ if and only if all eigenvalues of $(D_1 W_0 + V_0 D_2)^\top(D_1 W_T + V_T D_2)$ are positive.

\paragraph{Proving that the MLI property holds with probability 1.} Under the \emph{tabula rasa} assumptions from \citet{saxe2019mathematical} we can prove that the MLI property holds almost surely. The assumptions that underly this setting are as follows.
\begin{enumerate}
    \item The inputs are whitened ($\frac{1}{n}XX^\top = I$).
    \item Initialization is balanced ($V_0 = W_0^\top$).
    \item The learning rate of gradient descent is sufficiently small (relative to the largest singular value of the input-output correlation matrix ($\frac{1}{n} YX^\top = USR^\top$).
\end{enumerate}

Under these assumptions, \citet{saxe2019mathematical} prove that,
\[W(t) = Q \sqrt{A(t)} U^\top \:\textrm{ and }\: V(t) = U \sqrt{A(t)} Q^{-1},\]
for some invertible matrix $Q \in \bbR^{k\times k}$.

Under these dynamics, we have,
\[D_1 = U (\sqrt{A(t)} - \sqrt{A(0)}) Q^{-1} \textrm{ and } D_2~=~Q~(\sqrt{A(t)}~-~\sqrt{A(0)})~U^\top.\] Thus,
\begin{align*}
    (D_1 W_0 + V_0 D_2)^\top(D_1 &W_T + V_T D_2) \\&= 4U\left(\sqrt{A(t)} - \sqrt{A(0)}\right)\sqrt{A(0)}U^\top U \left(\sqrt{A(t)} - \sqrt{A(0)}\right)\sqrt{A(t)} U^\top\\
    &= 4U \sqrt{A(0)A(t)}\left(\sqrt{A(t)} - \sqrt{A(0)}\right)^2 U^\top
\end{align*}

This matrix is positive definite, and thus has positive eigenvalues. Therefore, the found solution will satisfy the MLI property.

\section{Experiment Details}
\label{app:exp_details}

In this section, we provide full details of our experimental set-up. For all experiments, we discretize $\alpha$ in the interval $[0, 1]$ using 50 uniform steps to examine the MLI property. When training networks with SGD, we used a momentum coefficient of 0.9 and when training networks with the Adam optimizer, we used $\beta_1 = 0.9, \beta_2 = 0.999$ and $\epsilon=1e-08$. Unless specified otherwise, we used a batch size of 128.

\subsection{Image reconstruction experiments}
\label{app:exp_details_reconstruct}

In the image reconstruction experiments, we used deep autoencoders with the ReLU activation function. Our architecture consisted of $784 \rightarrow 512 \rightarrow H \rightarrow 512 \rightarrow 784$ units in each respective layer with $H \in \{1, 2, 5, 10, 25, 50, 100\}$. We trained the networks using either SGD with momentum or Adam. Each model was trained for 200 epochs using fixed learning rates in the set $\{0.3, 0.1, 0.03, 0.01, 0.003, 0.001, 0.0003, 0.0001\}$ and batch sizes of 512.

\subsection{Image classification experiments}
In the image classification experiments, we explored a large number of different architectures. We summarize all setting we explored below.

\paragraph{Multilayer Perceptron.} We train fully connected networks with varying widths and depths. For all experiments (except Figure~\ref{fig:mnist_delta_heatmaps}), widths were chosen from the set \{16, 128, 1024, 2048, 4096\} and depth was chosen from \{2, 4, 8\}. We trained each model using one of SGD, RMSProp, Adam, or KFAC, with fixed learning rates from the set \{3.0, 1.0, 0.3, 0.1, 0.03, 0.01, 0.003, 0.001, 0.0003, 0.0001\}. We experimented with 3 activation functions: tanh, sigmoid, and ReLU. We trained the networks for 200 epochs both with and without batch normalization.

\paragraph{Convolution Neural Network.} We trained Simple CNN, VGG16, VGG19, and ResNet-\{18, 20, 32, 44, 50, 56\} with and without batch normalization, on CIFAR-10 and CIFAR-100. The Simple CNN had two convolutional layers with a $5\times 5$ kernel followed by a single fully connected layer. We trained the networks with both SGD and the Adam optimizer. For all models, we used an initial learning rate in the set  $\{0.3, 0.1, 0.03, 0.01, 0.003, 0.001, 0.0003, 0.0001\}$. For most models we fixed the learning rate throughout training but for the ResNets we used a waterfall learning rate decay (at 60, 90, and 120 epochs).

For the ResNet experiments without batch normalization, when using the Fixup \citep{zhang2019fixup} or block identity initialization \citep{goyal2017accurate} we replaced the batch normalization layers with scale and bias parameters taking the role of the standard batch norm affine transformation. The block identity initialization essentially consists of setting the final scale/bias parameters in each residual block to zero, so that the block computes only the skip connection (with possible down-sampling).

\subsection{Language modeling experiments}
We trained a RoBERTa transfomer-based model~\citep{liu2019roberta} on the language modelling task on an Esperanto dataset with the Huggingface framework~\citep{wolf-etal-2020-transformers}, as described in their tutorial\footnote{\url{https://huggingface.co/blog/how-to-train}} and building on a notebook they published\footnote{\url{https://colab.research.google.com/github/huggingface/blog/blob/master/notebooks/01_how_to_train.ipynb}}. We trained the model from two distinct random initializations for 1 epoch (taking approximately 2 hours on a free Google Colab GPU).

\subsection{Experiment specifics}
\label{app:experiment-specific}

\paragraph{MNIST \& Fashion-MNIST batch norm comparison.} We describe the experimental set-up used to produce Figure~\ref{fig:784_logit_viz} and Figure~\ref{fig:784_all}. We trained fully-connected networks whose architecture consisted of $784 \rightarrow 1024 \rightarrow 1024 \rightarrow 10$ units in each layer. We explored ReLU, sigmoid, and tanh activation functions and trained the networks with and without batch norm layers, that when used were inserted after each linear layer (except the last layer). The networks were trained for 200 epochs using fixed learning rates in the set $\{3.0, 1.0, 0.3, 0.1, 0.03, 0.01, 0.003, 0.001\}$ and with either the Adam optimizer or SGD with momentum.

\paragraph{Problem difficulty experiments.} For the experiments evaluating problem difficulty (parameter complexity and label corruption), described in Appendix~\ref{app:problem-diff}, we trained fully-connected networks on the FashionMNIST dataset. In all cases, the networks used ReLU activations and were trained with batch sizes of at most 512 (depending on dataset size), and for 200 epochs. Learning rates were fixed throughout training. When varying the dataset size, we trained models on random subsets of FashionMNIST with sizes in the set $\{10, 30, 100, 300, 1000, 3000, 10000, 30000, 60000\}$. We evaluated networks trained with learning rates in the set $\{0.03, 0.1, 0.3, 1.0\}$. For the experiments with varying levels of label corruption, we trained fully-connected networks with 2 hidden-layers each of width 1024 and without batch normalization.

\section{Extended empirical evaluation}
\label{app:experiments}

In our exploration of the MLI property, we performed many additional experiments. Generally, we found that turning common knobs of neural network training did not have a significant impact on networks satisfying the MLI property. For example, varying activation functions, loss functions, batch size, regularization, and different forms of initialization had no significant effect on the MLI property. In this section, we present a few of the more interesting additional experiments that we performed.

\subsection{Relationship between the MLI property and generalization}
\label{app:experiments_gen}

We are interested in whether the success/failure of the MLI property impacts the generalization ability of a neural network. To better understand this, we examined the test accuracy of the models we trained and studied the correlation with the MLI property on MNIST and CIFAR-10 datasets. Figure~\ref{fig:mlp_gen} shows the relationship between the test accuracy and $\min \Delta$. Note that we considered fully connected networks for MNIST dataset and VGG architectures for CIFAR-10. For the MNIST experiments, configurations that violated the MLI property had an average test accuracy of $96.94 (\pm 0.015)$ and those that satisfied the MLI property had the averaged test accuracy of $97.14 (\pm 0.018)$. Similarly, for CIFAR-10 experiments,  configurations that violated the MLI property had an average test accuracy of $75.83 (\pm 0.084)$ and those that satisfied the MLI property had an average test accuracy of $76.99 (\pm 0.082)$. Overall, we did not identify a clear pattern between the MLI property and the generalization property of the neural network. At the very least, we ascertain that models violating the MLI property can achieve competitive test accuracy.

\begin{figure}[!h]
\begin{minipage}{0.5\linewidth}
\centering
\includegraphics[width=0.9\linewidth]{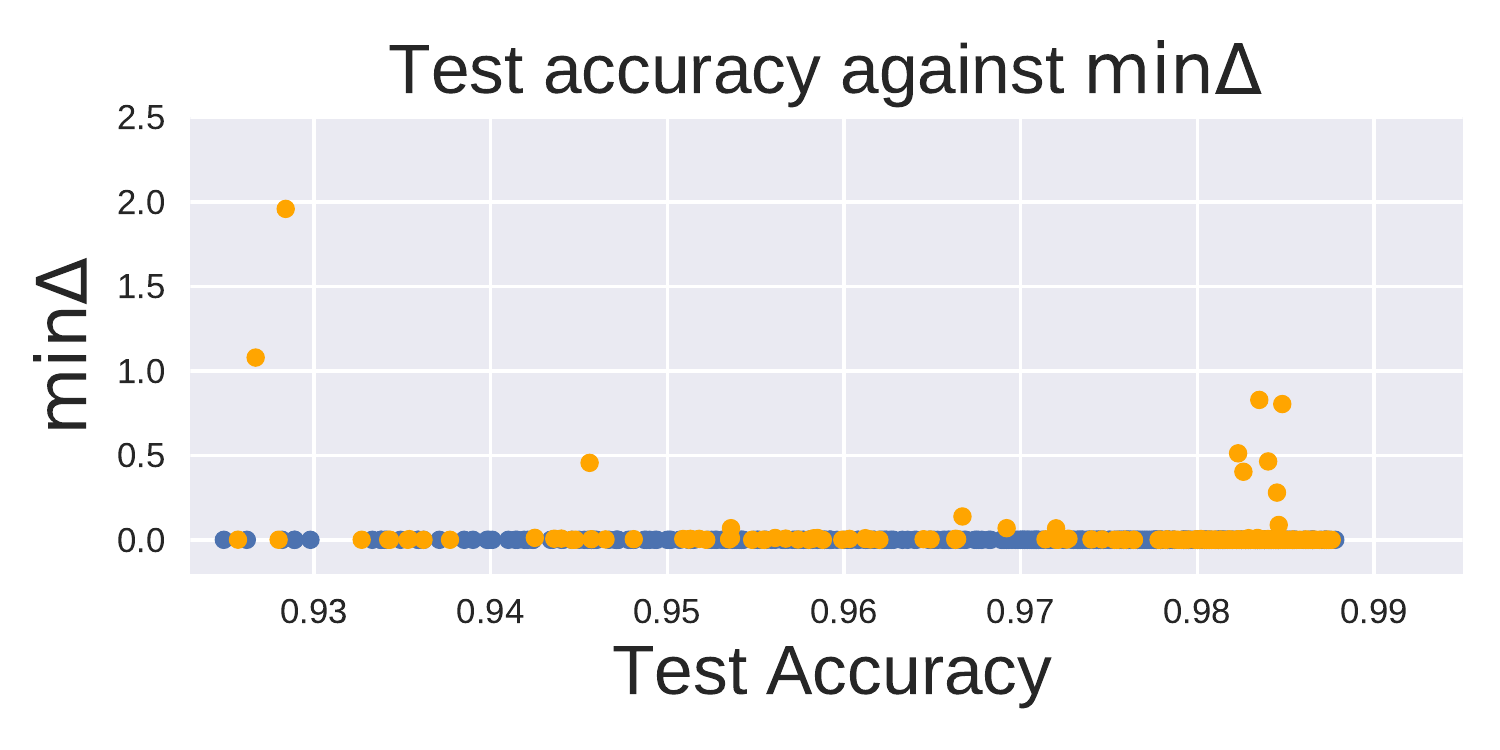}
\end{minipage}\hfill%
\begin{minipage}{0.5\linewidth}
\centering
\includegraphics[width=0.9\linewidth]{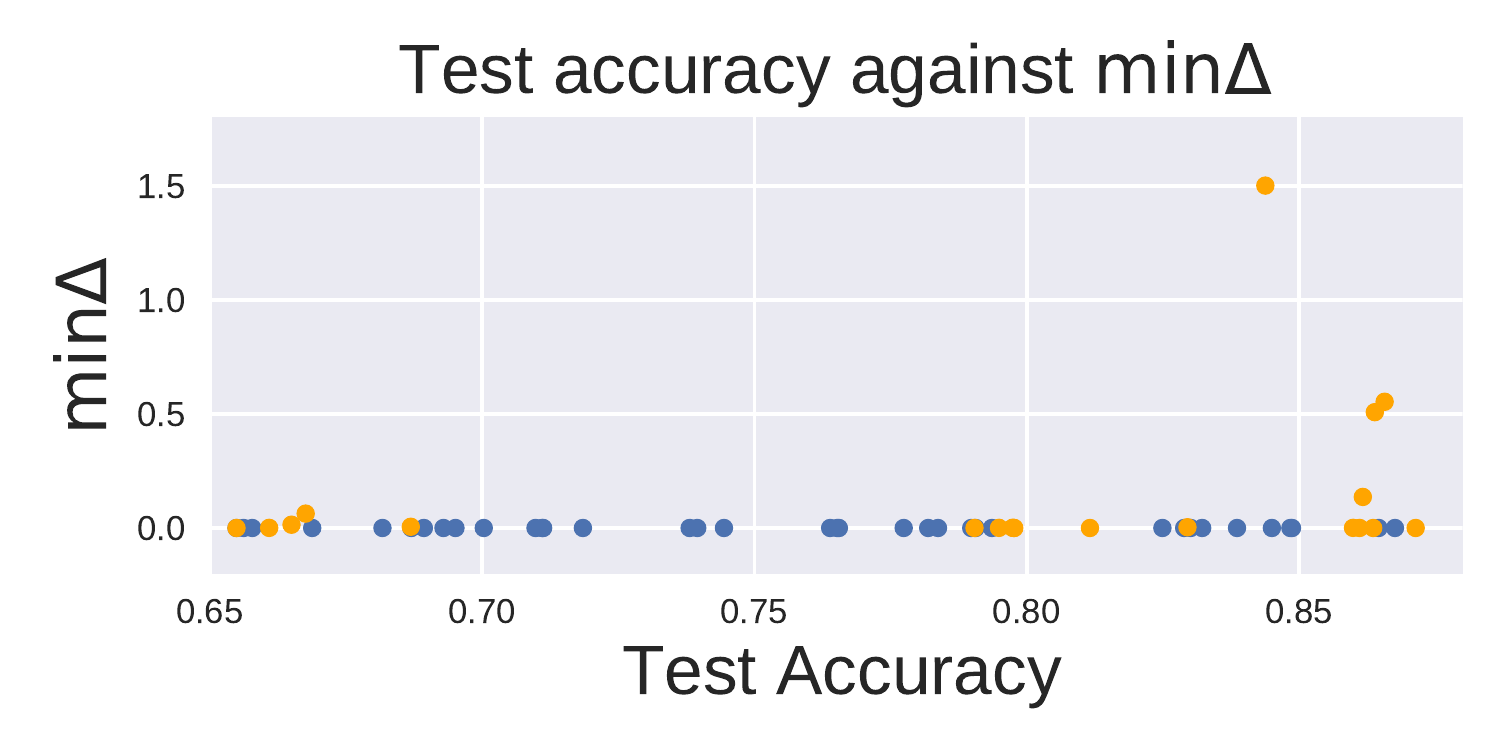}
\end{minipage}
\caption{Relationship between test accuracy and non-monotonicity in MNIST (left) and CIFAR10 (right) datasets. \textbf{\textcolor{blue}{Blue}} points represent networks where the MLI property holds and \textbf{\textcolor{orange}{orange}} points are networks where the MLI property fails.}
\label{fig:mlp_gen}
\end{figure}

\subsection{Impact of large learning rate}
\label{app:experiments_lr}

In Table~\ref{tab:cifar_lr}, we show the proportion of ResNets trained on CIFAR-10 and CIFAR-100 that violated the MLI property. The experimental set up matches that used to produce Tables~\ref{tab:cifar10_resnets} and Table~\ref{tab:cifar100_resnets}. There is a general trend towards higher learning rates encouraging non-monotonicity though the correlation is weaker than for the MNIST/Fashion-MNIST classifiers.
\begin{table*}[]
    \centering
    \small
    \begin{tabular}{|l|r|l l l l l l l |}\hline
        & LR: & 0.0003 & 0.001 & 0.003 & 0.01 & 0.03 & 0.1 & 0.3\\ \hline
\multirow{2}{*}{\rotatebox[origin=c]{90}{SGD}} & BN & 0.00 (3) & 0.25 (4) & 0.38 (13) & 0.12 (17) & 0.06 (17) & 0.24 (17) & 0.35 (17)\rule{0pt}{2.6ex}\rule[-1.2ex]{0pt}{0pt}\\
 & No BN & - & 0.00 (7) & 0.00 (14) & 0.24 (17) & 0.00 (17) & 0.25 (16) & 0.00 (15)\rule[-1.2ex]{0pt}{0pt}\\ \hline
\multirow{2}{*}{\rotatebox[origin=c]{90}{Adam}} & BN & 0.38 (16) & 0.18 (17) & 0.35 (17) & 0.29 (17) & 0.67 (9) & 1.00 (6) & 1.00 (1)\rule{0pt}{2.6ex}\rule[-1.2ex]{0pt}{0pt}\\
 & No BN & 0.00 (13) & 0.00 (17) & 0.00 (17) & 0.62 (8) & 0.00 (5) & 0.00 (11) & 0.00 (6)\rule[-1.2ex]{0pt}{0pt}\\ \hline
    \end{tabular}
    \caption{Proportion of trained CIFAR-10 and CIFAR-100 classifiers (achieving better than 1.0/2.0 training loss respectively) that had non-monotonic interpolations from initialization to final solution. The total number of runs in each bin is displayed in parentheses next to the proportion. A dashed line indicates that no networks achieved the threshold loss.}
    \label{tab:cifar_lr}
\end{table*}

We provide additional evaluations of large learning rates in Figure~\ref{fig:mnist_delta_heatmaps}, where we evaluate the effect of changing network depth and width over varying learning rates. Full details are given in Appendix~\ref{app:mnist_additional}.

\subsection{Impact of optimization algorithm}
\label{app:experiments_opt}

\begin{figure*}[!t]
    \begin{minipage}{0.24\linewidth}
    \centering
    \includegraphics[width=\linewidth]{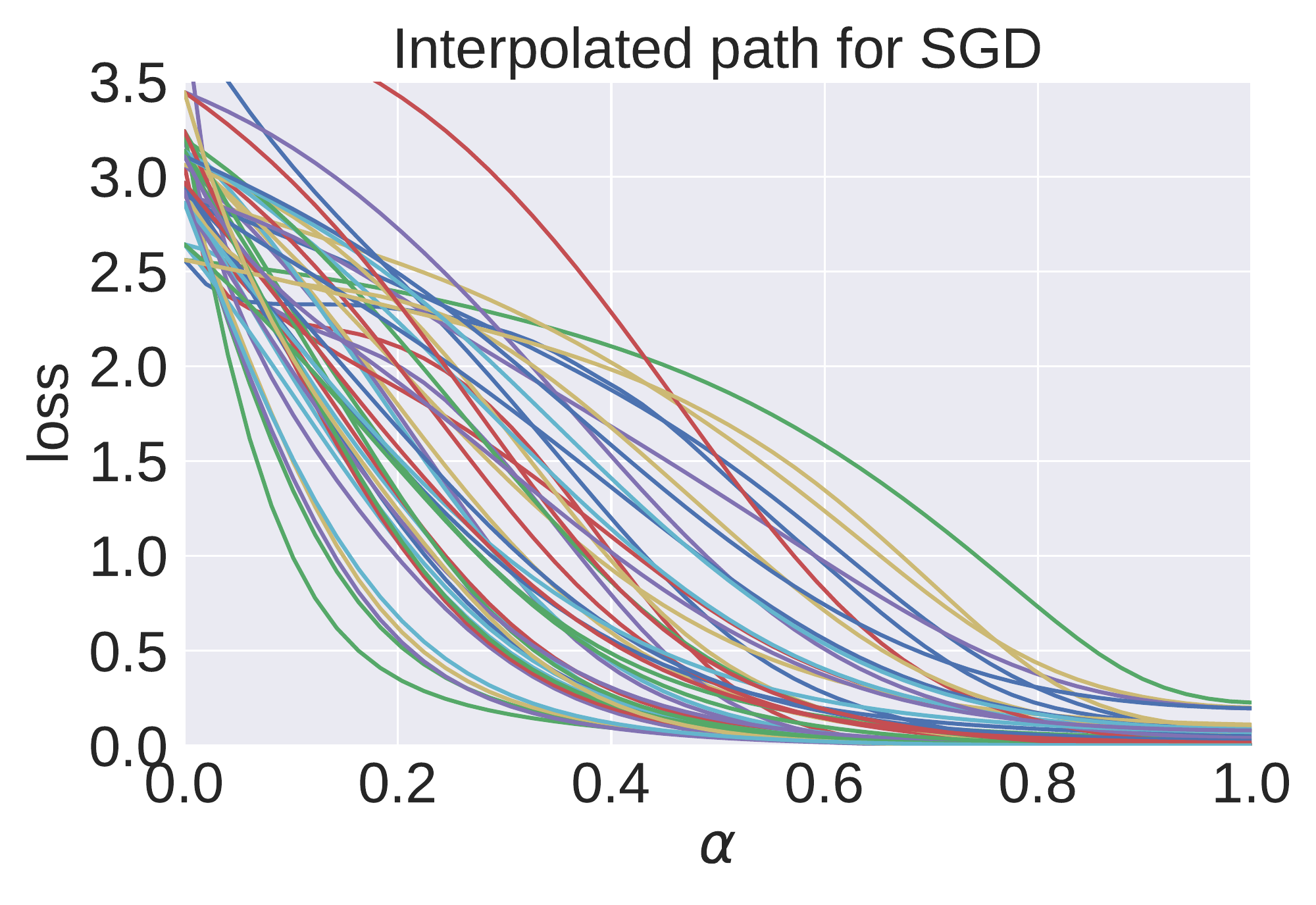}
    \end{minipage}\hfill%
    \begin{minipage}{0.24\linewidth}
    \centering
    \includegraphics[width=\linewidth]{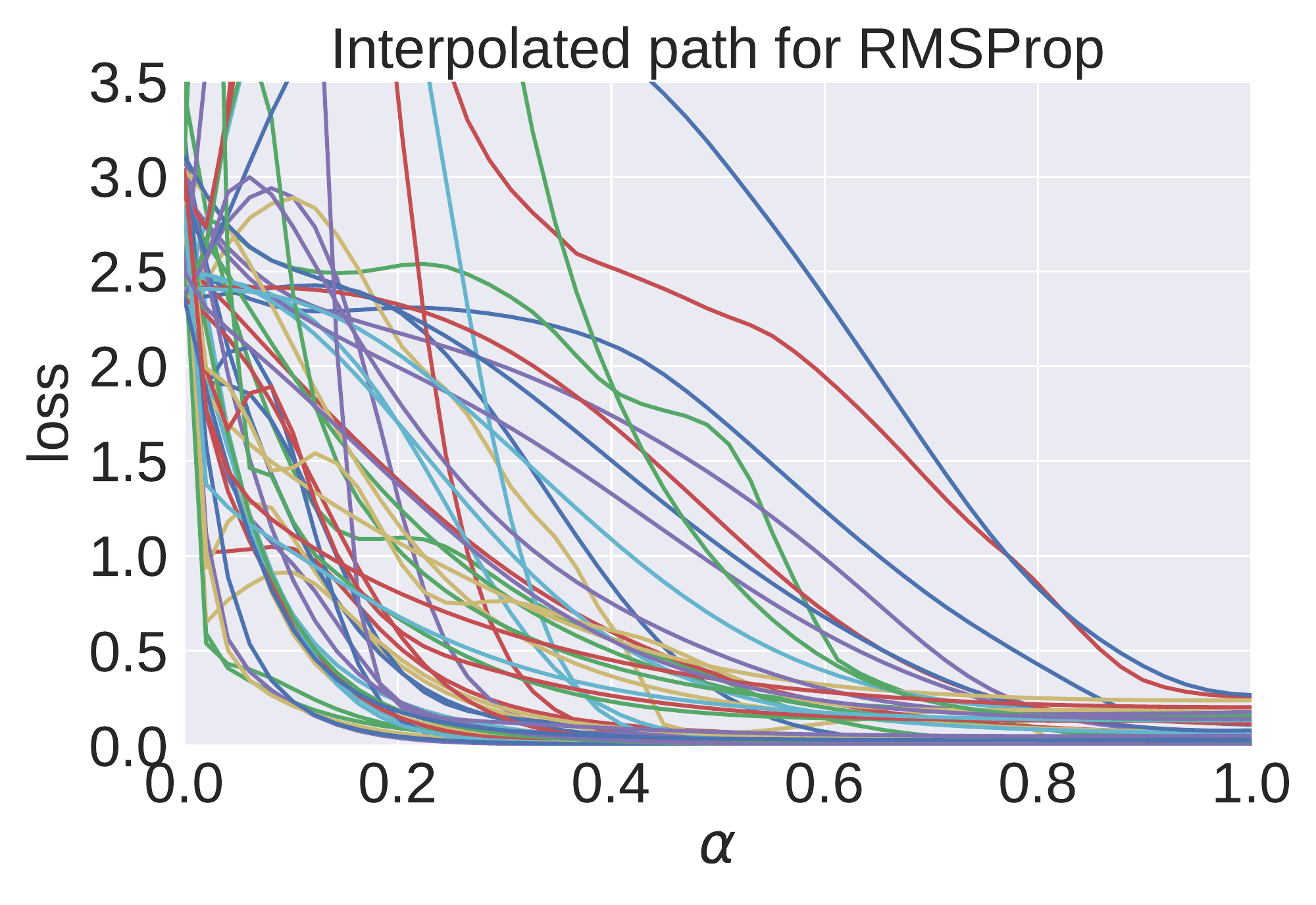}
    \end{minipage}\hfill%
    \begin{minipage}{0.24\linewidth}
    \centering
    \includegraphics[width=\linewidth]{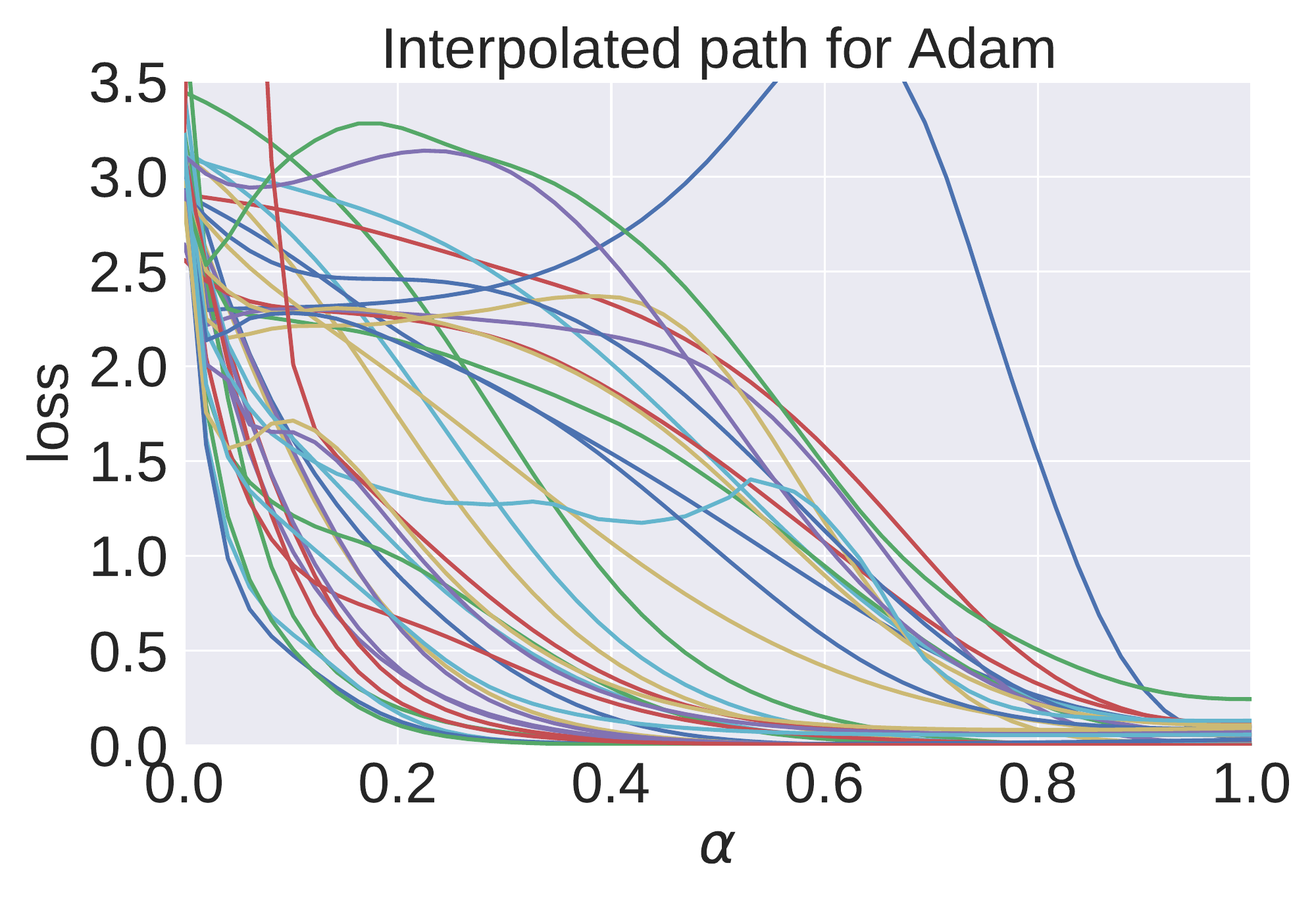}
    \end{minipage}
    \begin{minipage}{0.24\linewidth}
    \centering
    \includegraphics[width=\linewidth]{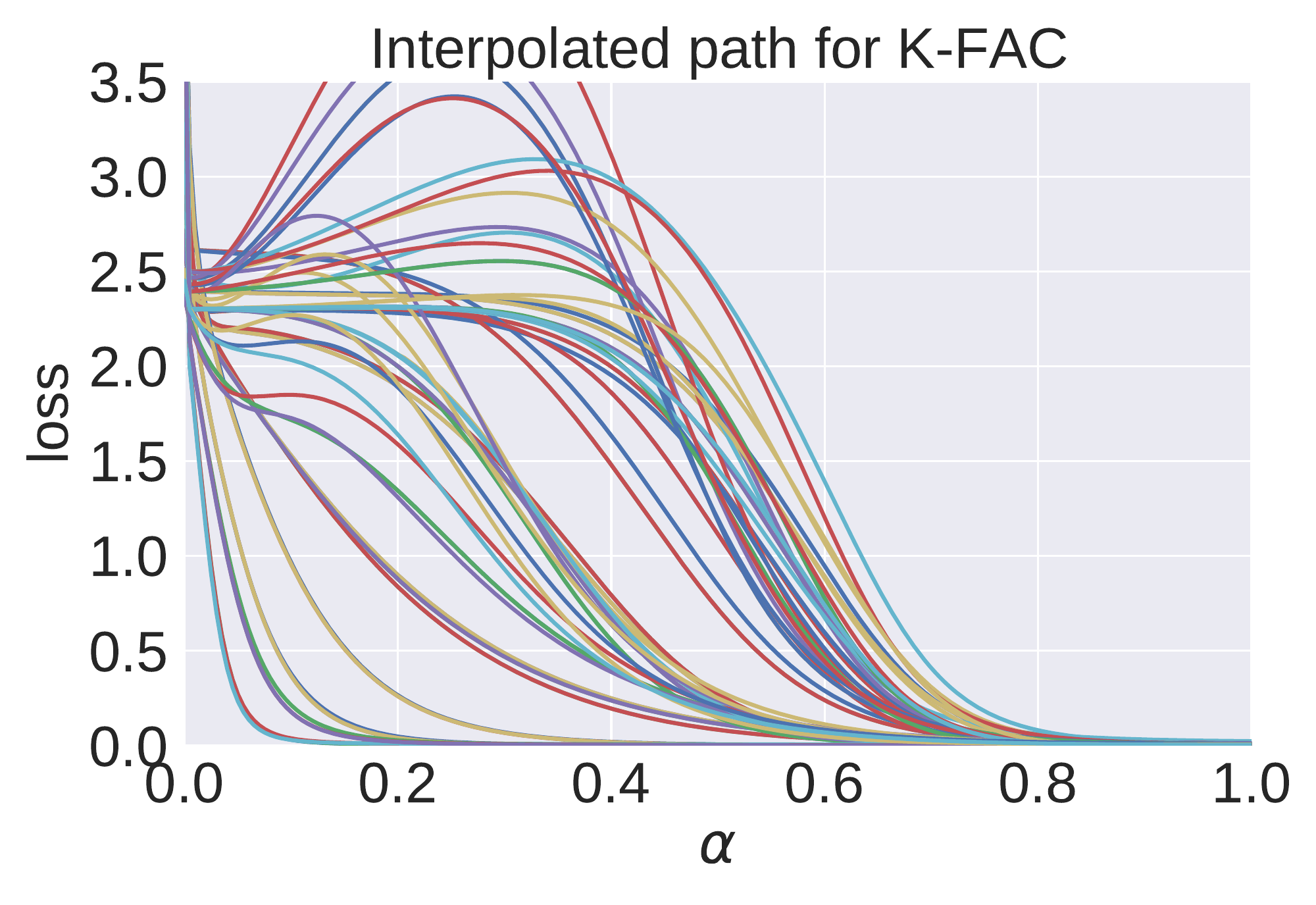}
    \end{minipage}
    \vspace{-0.1cm}
    \caption{Training loss over the linear interpolation connecting initial and final parameters. Each curve represents a network trained on MNIST \& Fashion-MNIST with different optimization algorithms. The MLI property generally holds for networks trained with SGD, but often fails for networks trained with RMSProp, Adam, and K-FAC.}
    \label{fig:mlp_vary_opt}
\end{figure*}

In Figure~\ref{fig:mlp_vary_opt}, we show the training loss over the line connecting the initial and final parameters. We found that adaptive optimizers such as RMSProp and Adam consistently find final solutions that violate the MLI property. To better understand this feature, we compare the distance travelled for all optimization methods. In Figure~\ref{fig:mlp_opt} (left), we show the distance travelled in weight space when trained with SGD and Adam for MNIST \& Fashion-MNIST classification tasks. When trained with Adam, the optimizer moved further away from the initialization --- confirming the results of~\citet{amari2020does}. On the other hand, models trained with SGD often traveled less. Moreover, non-monotonic configurations occurred more frequently for networks that travelled far from initialization, suggesting that the non-monotonicity of adaptive optimizers may be due to them encouraging parameters to travel far from initialization. 

We also investigated the relationship between non-monotonicity and Gauss length over varying optimizers. In Figure~\ref{fig:mlp_opt} (right), we show the Gauss length for networks trained using SGD and the Adam optimizer. When trained with Adam, on average the interpolation paths have a larger Gauss length and lead to more failures of the MLI property.

Table~\ref{tab:cifar100_resnets} contains our evaluation of the MLI property for ResNets trained with different architectures, optimizers, and initialization schemes (as in Table~\ref{tab:cifar10_resnets} for CIFAR-10 in the main paper). The general trends observed align with those observed on CIFAR-10 in the main paper.

\begin{table}[]
    \centering
    \begin{adjustbox}{max width=\linewidth}
    \begin{tabular}{|c|c|c|c|c|c|}\hline
    & & BN & BN-I & NBN-I & NBN-F\\\hline
    \multirow{2}{*}{\rotatebox[origin=c]{90}{SGD}} & \% Non-monotonic & 0.45 (20) & 0.00 (16) & 0.00 (18) & 0.28 (18)\rule{0pt}{2.6ex}\rule[-1.2ex]{0pt}{0pt}\\
    & $\min \Delta$ & 0.055 & 0 & 0 & 0.082\rule[-1.2ex]{0pt}{0pt}\\ \hline
    \multirow{2}{*}{\rotatebox[origin=c]{90}{Adam}} & \% Non-monotonic & 0.62 (16) & 0.00 (15) & 0.00 (15) & 0.00 (19)\rule{0pt}{2.6ex}\rule[-1.2ex]{0pt}{0pt}\\
    & $\min \Delta$ & 0.487 & 0 & 0 & 0\rule[-1.2ex]{0pt}{0pt}\\ \hline
    \end{tabular}
    \end{adjustbox}
    \caption{Evaluation of effect of batch normalization, initialization, and choice of optimizer for residual networks trained on CIFAR-100 (achieving at least 2.0 training loss). Full explanation of table is given in main text, Section~\ref{sec:exp:how_persistent:bn}.}
    \label{tab:cifar100_resnets}
\end{table}

\begin{figure}[!h]
\begin{minipage}{0.5\linewidth}
\centering
\includegraphics[width=0.9\linewidth]{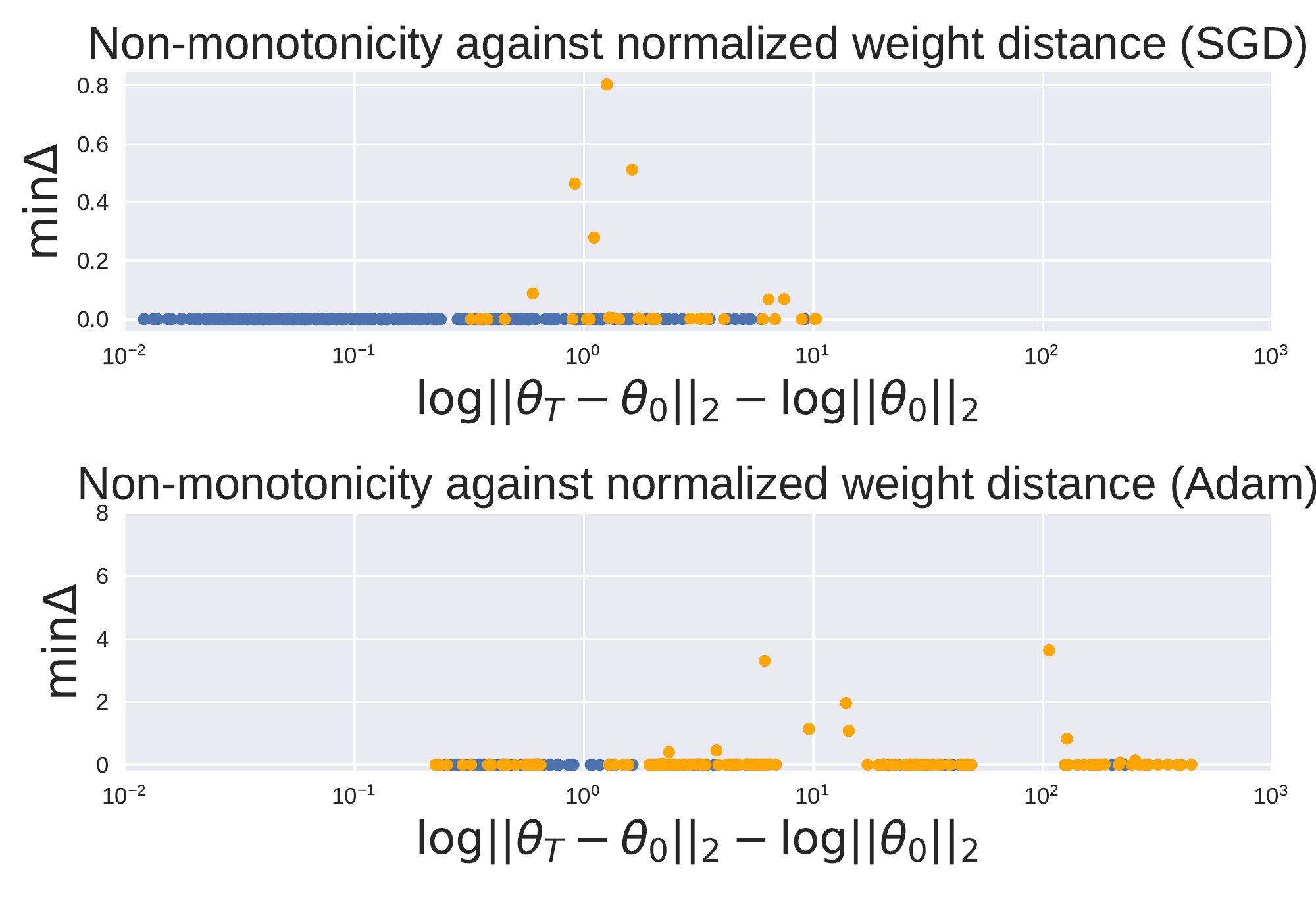}
\end{minipage}\hfill%
\begin{minipage}{0.5\linewidth}
\centering
\includegraphics[width=0.9\linewidth]{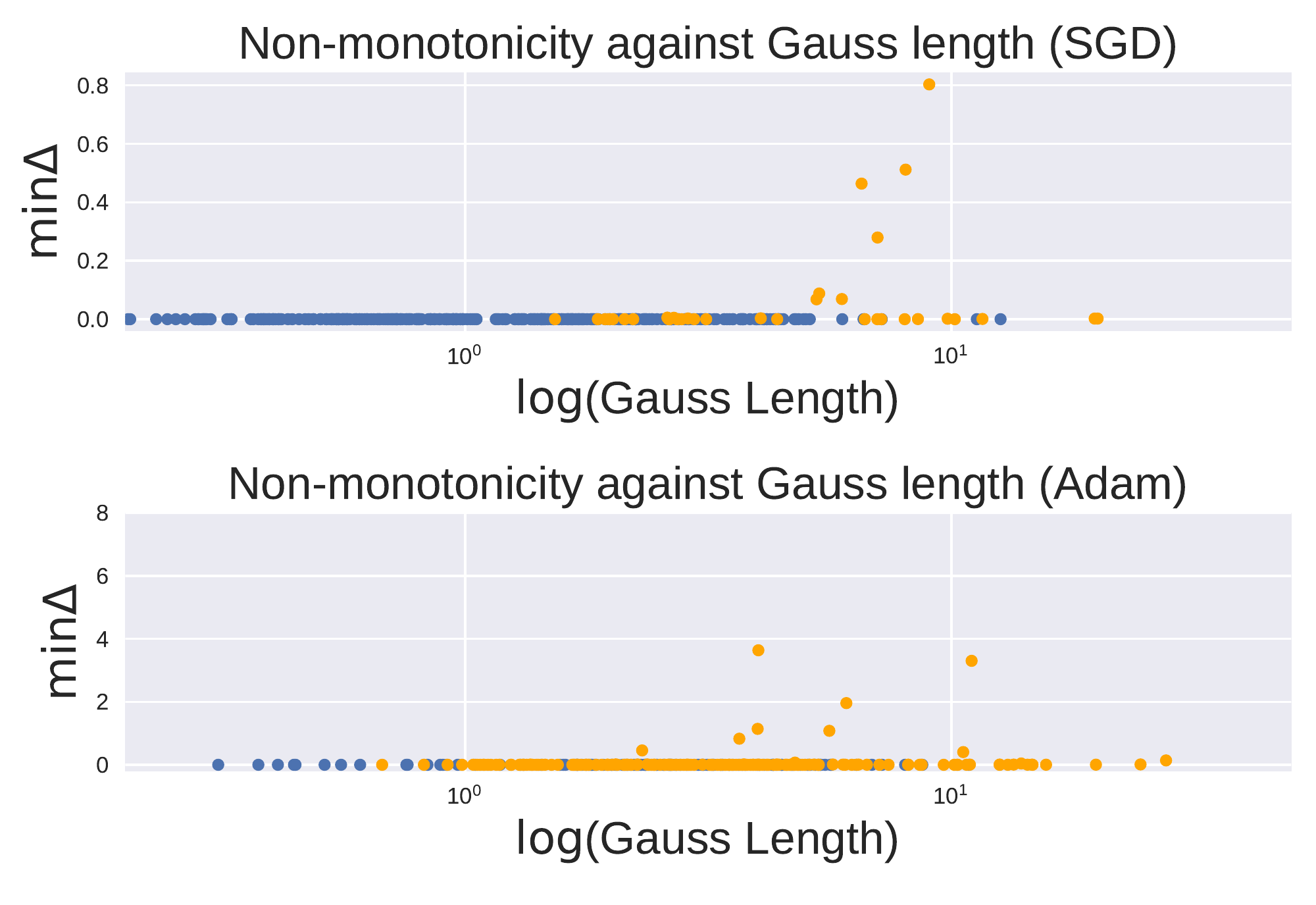}
\end{minipage}
\caption{For each MNIST \& Fashion-MNIST classifier, we compute the minimum $\Delta$ such that the interpolated loss is $\Delta$-monotonic. We plot models trained with SGD and Adam in the top and bottom rows respectively. On the left, we compare the distance moved in the weight space. On the right, we compare the Gauss length of the interpolated network outputs. \textbf{\textcolor{blue}{Blue}} points represent networks where the MLI property holds and \textbf{\textcolor{orange}{orange}} points are networks where the MLI property fails.}
\label{fig:mlp_opt}
\end{figure}

\subsection{Optimizer Ablations}
\label{app:experiments_opt_abl}

To get a better understanding of the influence of different optimization algorithms and the effects of moving in weight and function space, we conduct detailed experiments where we switch the optimizer during training. We use an architecture with 2 hidden layers of 1024  units on MNIST. The architecture has tanh activations and no batch normalization. We report the mean and standard error across five random seeds.

In the (SGD $\rightarrow$ Adam) experiments, we train the first $t=\{2, 10, 50\}$ epochs with SGD, using learning rates in the set $\{0.001, 0.003, 0.01, 0.03, 0.1\}$ and finish the training with Adam (with LR $0.001$) for $200 - t$ epochs. While using just SGD leads to a monotonic interpolation, switching to Adam made all runs not monotonic. These results are reported in Table \ref{tab:switch_sgd_adam}. Similarly, in Table \ref{tab:switch_sgd_adam}, we report results for (Adam $\rightarrow$ SGD). We switch to SGD with a learning rate of 0.03.

As in \citet{amari2020does}, we again find that Adam leads to much greater distance moved in weight space. Perhaps more surprisingly, the distance moved in weight space and the average Gauss length are largely consistent across different choices of the learning rate for SGD and the epoch at which we switch to Adam. In Table~\ref{tab:switch_adam_sgd}, we see that switching from Adam to SGD reduces both the average Gauss length and the distance travelled.

\begin{table}
\centering
\begin{tabular}{ccccc}
\hline
   SGD LR \textbackslash{} switch\_epoch & 2               & 10              & 50              & None            \\
\hline
                   0.001 & 7.0822 ± 0.0898 & 7.3366 ± 0.0034 & 6.9005 ± 0.0861 & 0.5341 ± 0.0069 \\
                   0.003 & 7.2208 ± 0.1517 & 7.0211 ± 0.0816 & 6.8331 ± 0.016  & 0.5773 ± 0.0069 \\
                   0.01  & 7.1144 ± 0.0773 & 7.2666 ± 0.0943 & 7.2307 ± 0.1776 & 0.6939 ± 0.0096 \\
                   0.03  & 7.3067 ± 0.026  & 7.3285 ± 0.0703 & 6.9953 ± 0.1248 & 1.1351 ± 0.0522 \\
                   0.1   & 7.3783 ± 0.1223 & 7.4592 ± 0.0714 & 6.8579 ± 0.1251 & 2.9144 ± 0.0824 \\
\hline
   Distance & 2               & 10              & 50              & None            \\
\hline
                   0.001 & 345.209 ± 2.917 & 340.998 ± 1.385 & 303.907 ± 0.809 & 8.096 ± 0.012   \\
                   0.003 & 346.826 ± 0.962 & 339.54 ± 0.39   & 303.721 ± 0.314 & 9.369 ± 0.015   \\
                   0.01  & 349.413 ± 0.856 & 339.056 ± 0.502 & 302.552 ± 0.368 & 10.893 ± 0.021  \\
                   0.03  & 345.592 ± 0.828 & 339.428 ± 0.608 & 304.109 ± 1.031 & 14.177 ± 0.041  \\
                   0.1   & 349.016 ± 1.009 & 350.103 ± 0.475 & 326.38 ± 0.628  & 108.508 ± 2.497 \\
\hline
\end{tabular}
\caption{Average Gauss length (top) and distance traveled (bottom) for given SGD learning rate and switching to Adam with learning rate 1e-3 during training.}
\label{tab:switch_sgd_adam}
\end{table}

\begin{table}
\centering
\begin{tabular}{ccccc}
\hline
   Adam LR \textbackslash{} switch\_epoch & 2             & 10            & 50            & None           \\
\hline
               0.001 & 1.447 ± 0.036 & 2.349 ± 0.024 & 4.472 ± 0.061 & 7.135 ± 0.048  \\
               0.003 & 3.766 ± 0.052 & 7.325 ± 0.071 & 9.365 ± 0.115 & 9.933 ± 0.097  \\
               0.01  & 6.156 ± 0.266 & 7.391 ± 0.423 & 9.873 ± 0.427 & 10.313 ± 0.231 \\
\hline

   Distance & 2              & 10               & 50              & None             \\
\hline
               0.001 & 22.257 ± 0.113 & 59.986 ± 0.089   & 174.958 ± 0.061 & 350.174 ± 0.704  \\
               0.003 & 58.773 ± 0.565 & 196.794 ± 0.773  & 420.652 ± 1.321 & 659.236 ± 2.785  \\
               0.01  & 185.892 ± 2.26 & 384.983 ± 13.863 & 726.063 ± 8.888 & 1220.432 ± 6.893 \\
\hline
\end{tabular}
\caption{Average Gauss length (top) and distance traveled (bottom) for given Adam learning rate and switching to SGD with learning rate 0.03 during training.}
\label{tab:switch_adam_sgd}
\end{table}

To investigate why Adam is responsible for breaking monotonicity further, we follow the ``grafting" experiment described in \cite{agarwal2020disentangling}, where two optimizers are combined by using the step magnitude from the first and step direction from the second. Results where we use the SGD step magnitude (which varies with LR) and Adam direction are shown in \ref{tab:grafting}. All the runs are monotonic, so the direction chosen by Adam is not the primary influence on the optimization trajectory. In contrast, when we use the SGD step direction and the Adam magnitude we observe all runs to be non-monotonic and find the average distance traveled to be 381.65, suggesting that the magnitude of the updates is responsible for breaking monotonicity.

\begin{table}
\centering
\begin{tabular}{rll}
\hline
   lr \textbackslash{} Optimizer & SGD           & Adam         \\
\hline
              0.001 & 9.198 ± 0.016  & 8.096 ± 0.012   \\
               0.003 & 10.932 ± 0.015 & 9.369 ± 0.015   \\
               0.01  & 12.978 ± 0.013 & 10.893 ± 0.021  \\
               0.03  & 16.619 ± 0.037 & 14.177 ± 0.041  \\
\hline
\end{tabular}
\caption{Average distance traveled where we use the SGD step magnitude and step direction given by SGD or Adam, respectively}
\label{tab:grafting}
\end{table}

\subsection{Additional weight distance experiments}
\label{app:experiments_weight_dis}

In this section, we investigate the relationship between normalized weight distance and non-monotonicity on image reconstruction (MNIST) and image classification (CIFAR-10 \& CIFAR-100) tasks.

\begin{figure*}[!ht]
\begin{minipage}{0.49\linewidth}
        \centering%
        \includegraphics[width=\linewidth]{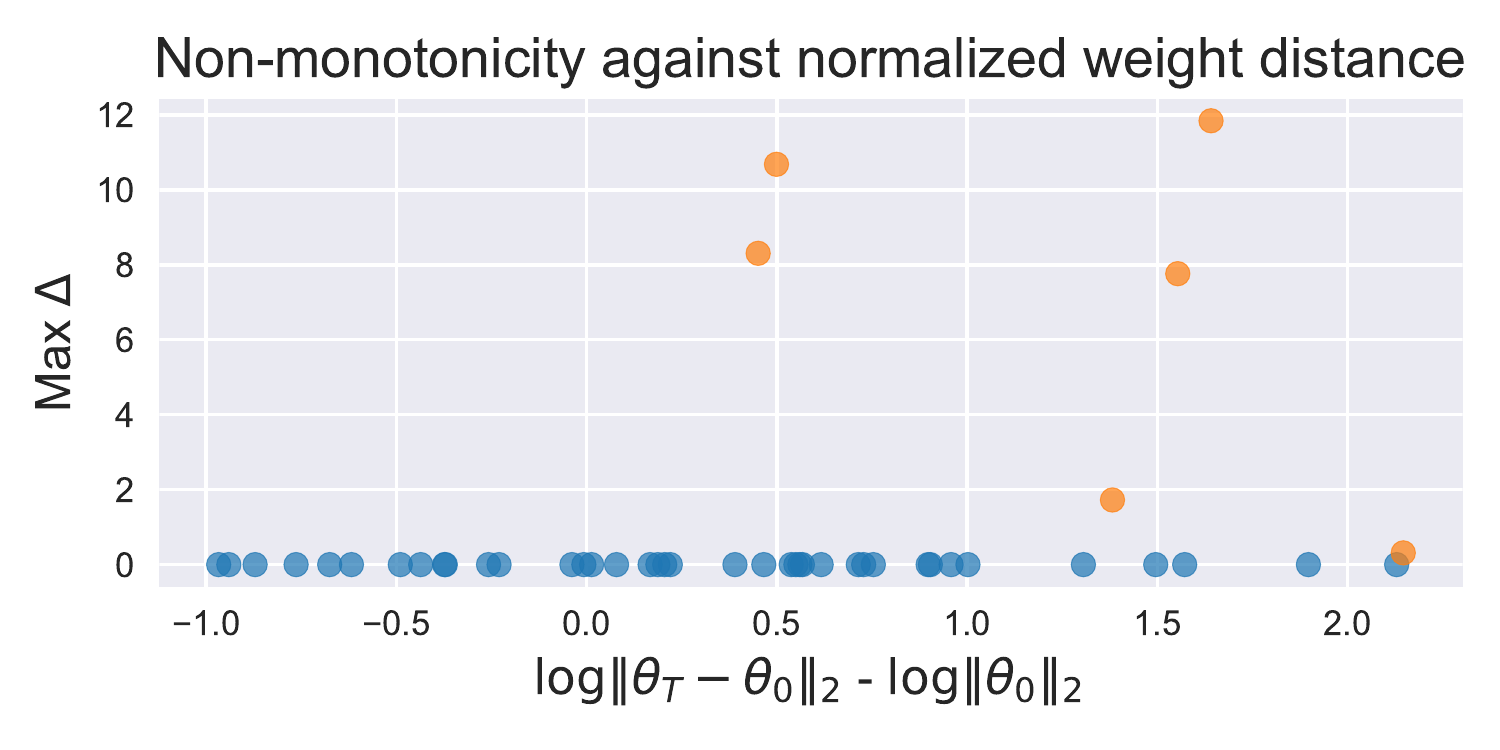}%
    \end{minipage}\hfill%
    \begin{minipage}{0.49\linewidth}
        \centering%
        \includegraphics[width=\linewidth]{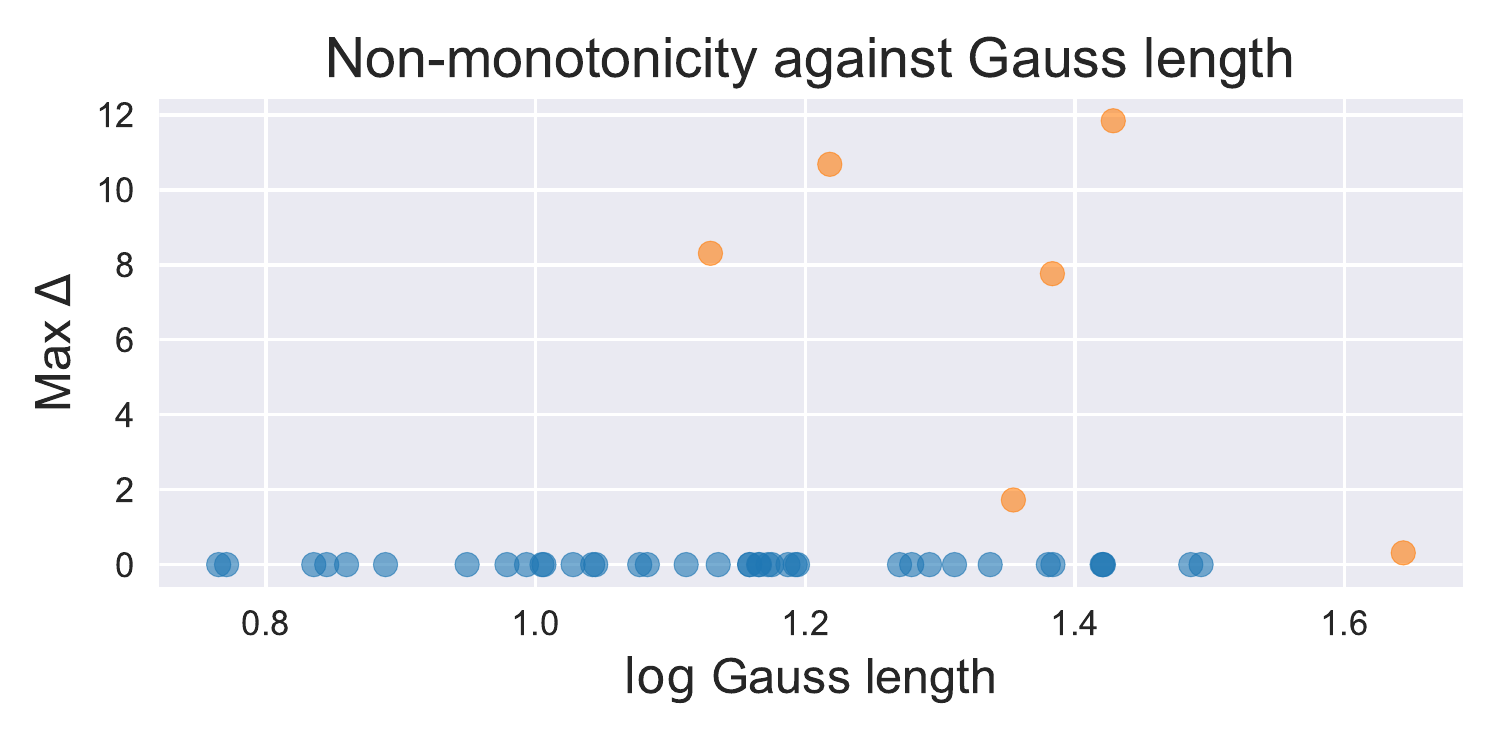}%
    \end{minipage}
    \caption{Weight distance (left) and Gauss length (right) against maximum non-monotonic bump height for image reconstruction task. For clarity, \textbf{\textcolor{blue}{Blue}} points represent networks where the MLI property holds and \textbf{\textcolor{orange}{orange}} points are networks where the MLI property fails.}
    \label{fig:ae_metrics}
\end{figure*}

In Figure~\ref{fig:ae_metrics}, we show the correlation between the distance travelled in parameter space and the smallest $\Delta$ such that the loss interpolation is $\Delta$-monotonic (Definition~\ref{def:delta_mono}) for image reconstruction task. As expected,  a small distance moved (or Gauss length, respectively) leads to monotonic interpolation. And beyond the strict limits of our analysis, we observed that larger weight distances are correlated with non-monotonicity.

In Figure~\ref{fig:cifar_wdist_delta}, we display the distance moved in weight space against the minimum $\Delta$ such that ResNets trained on CIFAR-10 and CIFAR-100 have $\Delta$-monotonic interpolations from initial to final parameters. In general, larger bumps occur at larger distances moved, as in our other experiments. 
\begin{figure}
    \begin{minipage}{0.5\linewidth}
    \centering
    \includegraphics[width=\linewidth]{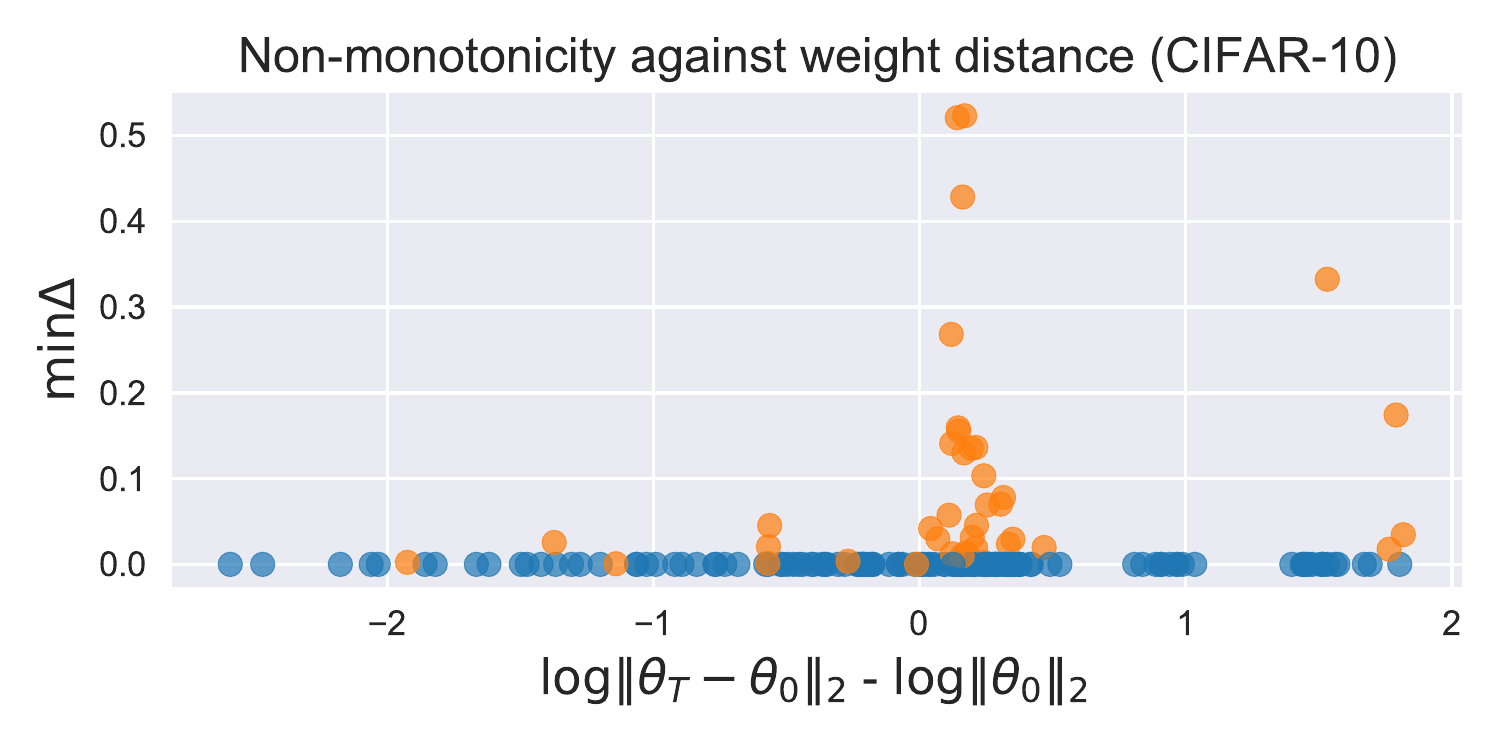}
    \end{minipage}\hfill%
    \begin{minipage}{0.5\linewidth}
    \centering
    \includegraphics[width=\linewidth]{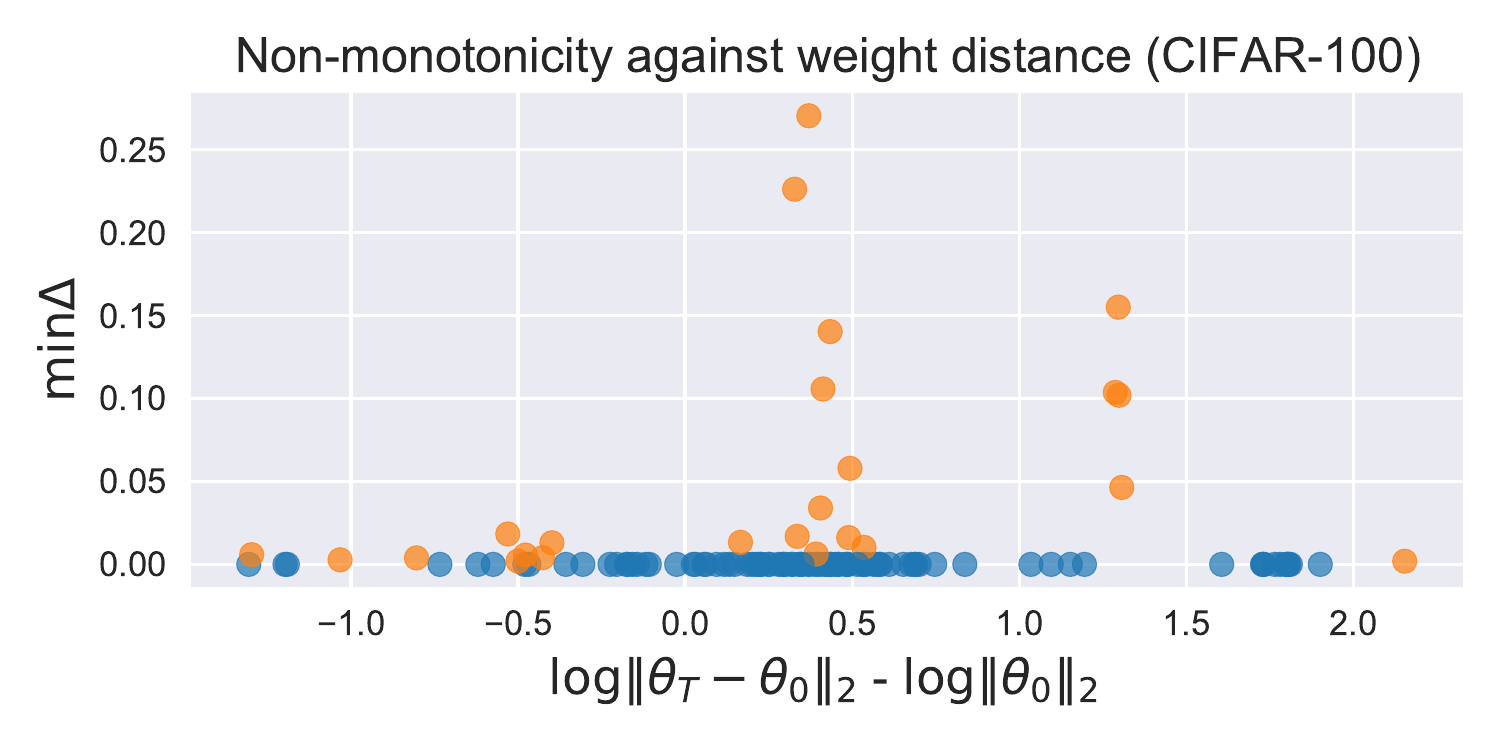}
    \end{minipage}
    \caption{Distance moved in weight space against the minimum $\Delta$ such that ResNets trained on CIFAR-10 (left) and CIFAR-100 (right) have $\Delta$-monotonic interpolations from initialization to final parameters. We observe a general trend that larger distance in weight space corresponds to more significant non-monotonicities.}
    \label{fig:cifar_wdist_delta}
\end{figure}

\subsection{Additional Gauss length experiments}
\label{app:experiments_gl}

\begin{wrapfigure}[12]{l}{0.5\textwidth}
    \centering
    \vspace{-0.5cm}
    \includegraphics[width=\linewidth]{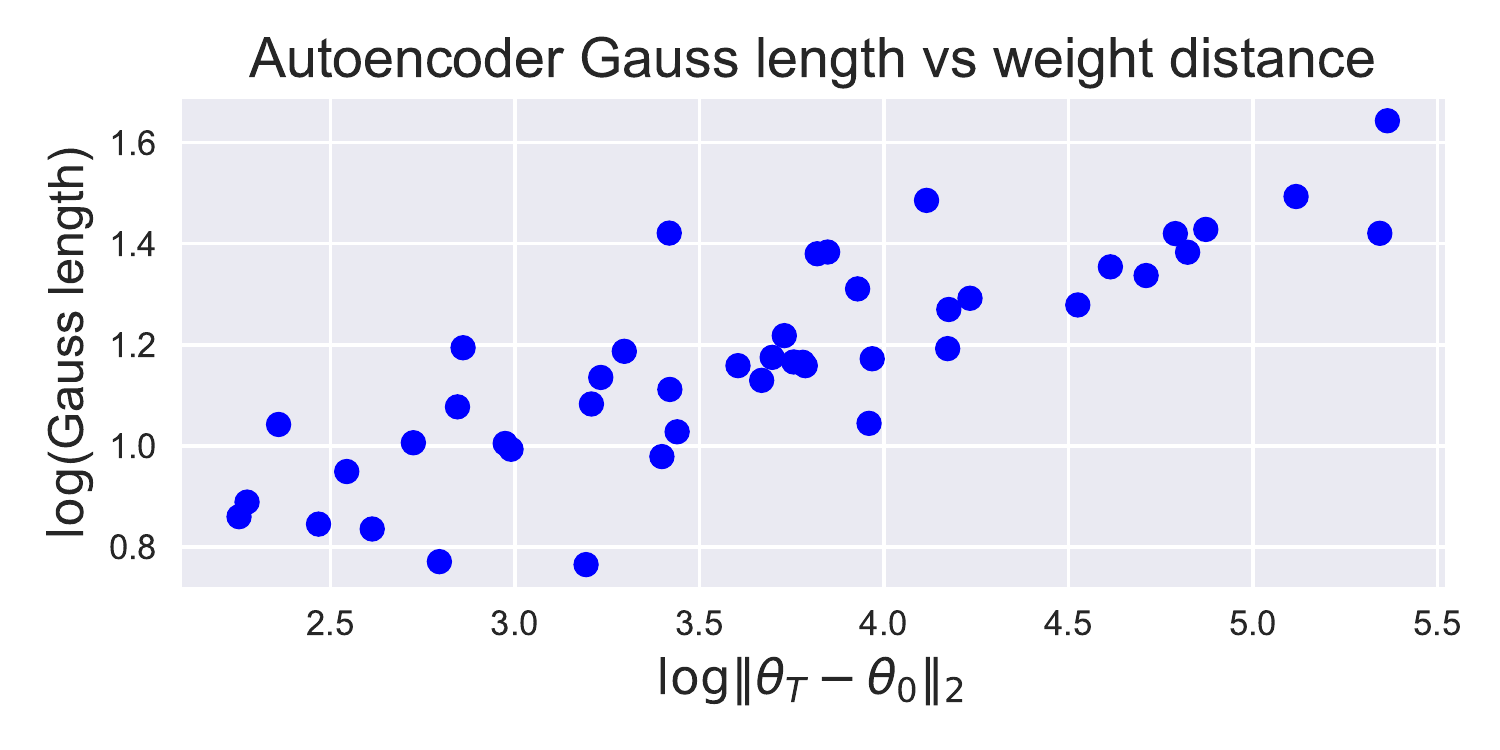}
    \vspace{-0.8cm}
    \caption{Power law relationship between Gauss length and weight distance for autoencoders trained on MNIST ($R^2 = 0.705$).}
    \label{fig:ae_power_law}
\end{wrapfigure}

In this section, we investigate the relationship between Gauss length and non-monotonicity on image reconstruction and image classification (CIFAR-10 \& CIFAR-100) tasks.

In Figure~\ref{fig:ae_metrics}, we show the correlation between the Gauss length and the smallest $\Delta$ such that the loss interpolation is $\Delta$-monotonic for the image reconstruction task. Similar to the weight distances, a small Gauss length leads to monotonic interpolation. We also observe that larger Gauss length are correlated with the failure of MLI property.

In Figure~\ref{fig:cifar_wdist_delta}, we display the distance moved in weight space against the minimum $\Delta$ such that ResNets trained on CIFAR-10 and CIFAR-100 have $\Delta$-monotonic interpolations from initial to final parameters. In general, larger bumps occur at larger distances moved, as in our other experiments. 
\begin{figure}
    \begin{minipage}{0.5\linewidth}
    \centering
    \includegraphics[width=\linewidth]{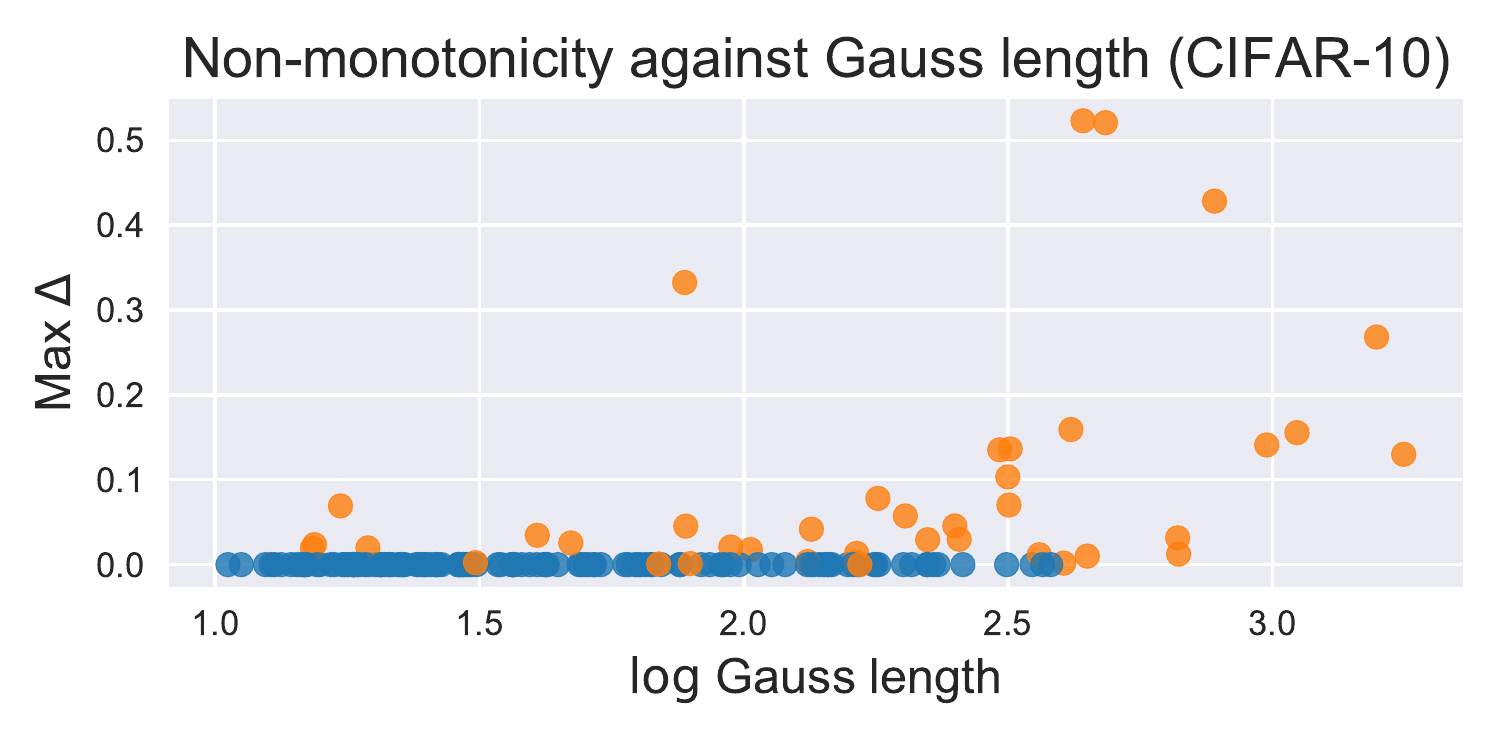}
    \end{minipage}\hfill%
    \begin{minipage}{0.5\linewidth}
    \centering
    \includegraphics[width=\linewidth]{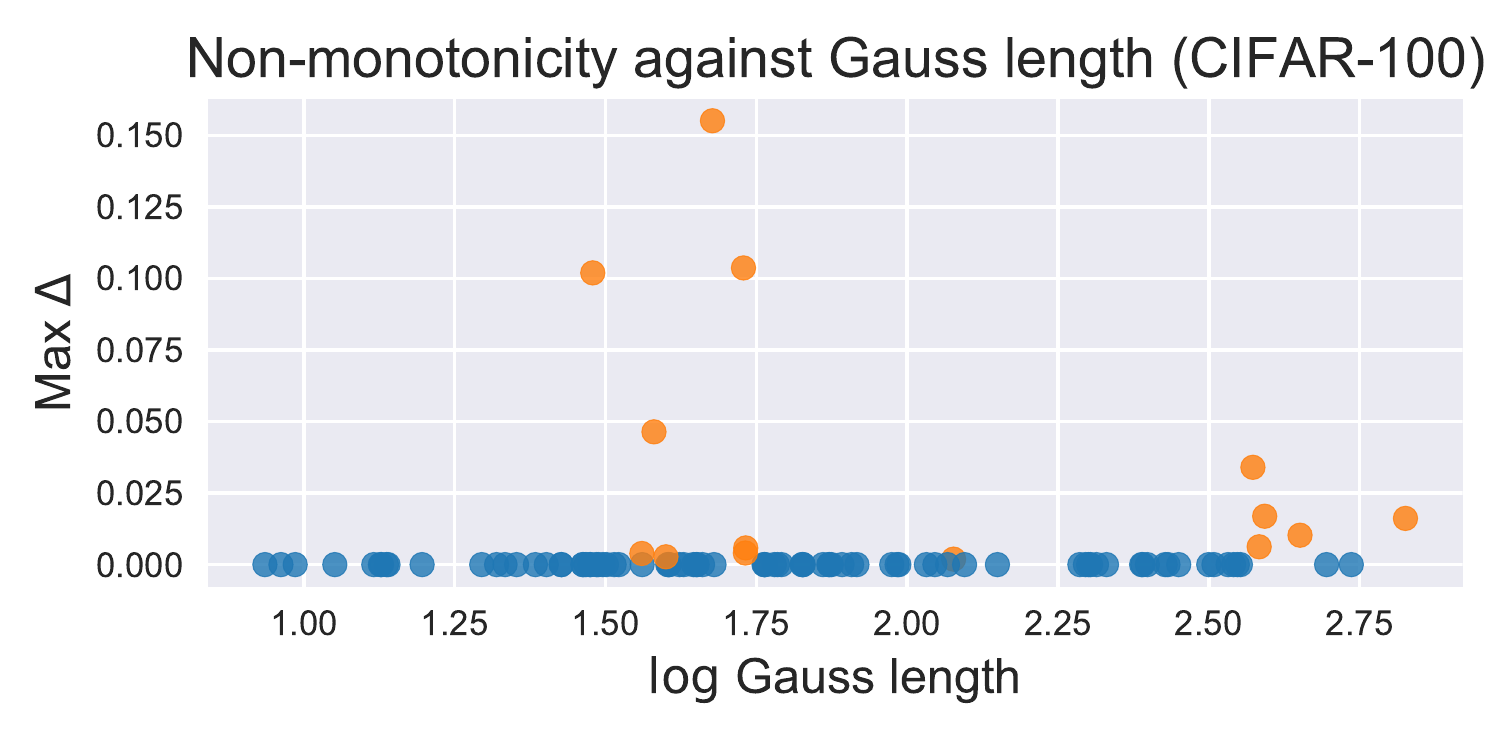}
    \end{minipage}
    \caption{Gauss length of function-space interpolation path against the minimum $\Delta$ such that ResNets trained on CIFAR-10 (left) and CIFAR-100 (right) have $\Delta$-monotonic interpolations. At small Gauss lengths, the networks generally satisfy the MLI property while larger bumps in the interpolation path are achieved for interpolations with larger Gauss lengths.}
    \label{fig:cifar_gl_delta}
\end{figure}

\subsection{Additional Gauss length vs weight distance}
\label{app:experiments_gl_wd}

In this section, we investigate the relationship between Guass length and weight distance travelled for image reconstruction and image classification (CIFAR-10 \& CIFAR-100) tasks.

In Figure~\ref{fig:ae_power_law}, we plot the Gauss length of the interpolation path against the distance moved in weight space for autoencoders trained on MNIST. In this case, as in Figure~\ref{fig:mlp_power_law}, we observe a clear power-law relationship between the two.

In Figure~\ref{fig:cifar_power_law}, we plot the Gauss length of the interpolation path against the distance moved in weight space for ResNets trained on CIFAR-10 and CIFAR-100, over varying initialization schemes, optimizers, and the use of batchnorm (as in Tables~\ref{tab:cifar10_resnets}~and~\ref{tab:cifar100_resnets}). In this case, there is not a clear power law relationship but nonetheless a clear positive correlation remains between the Gauss length and the distance moved in weight space.
\begin{figure}
    \begin{minipage}{0.5\linewidth}
    \centering
    \includegraphics[width=\linewidth]{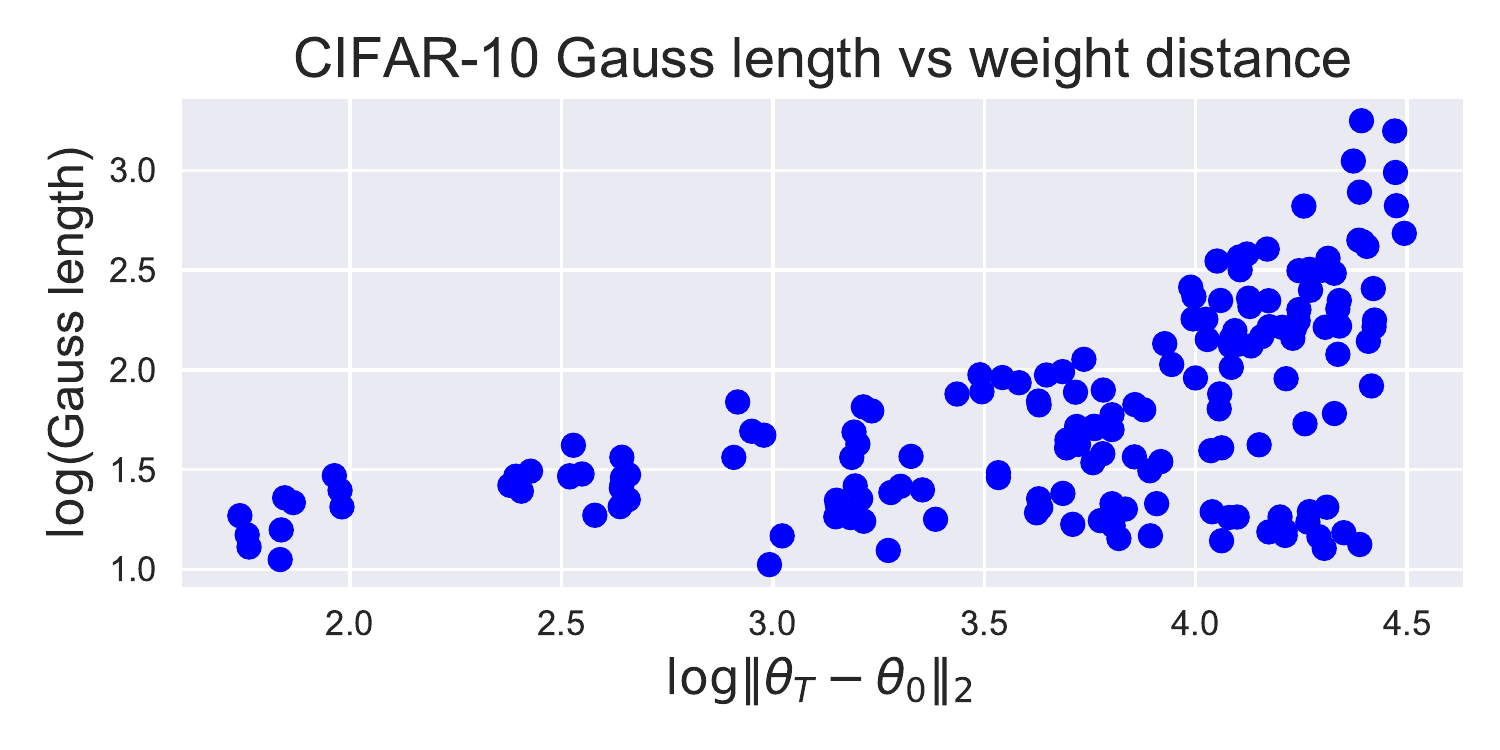}
    \end{minipage}\hfill%
    \begin{minipage}{0.5\linewidth}
    \centering
    \includegraphics[width=\linewidth]{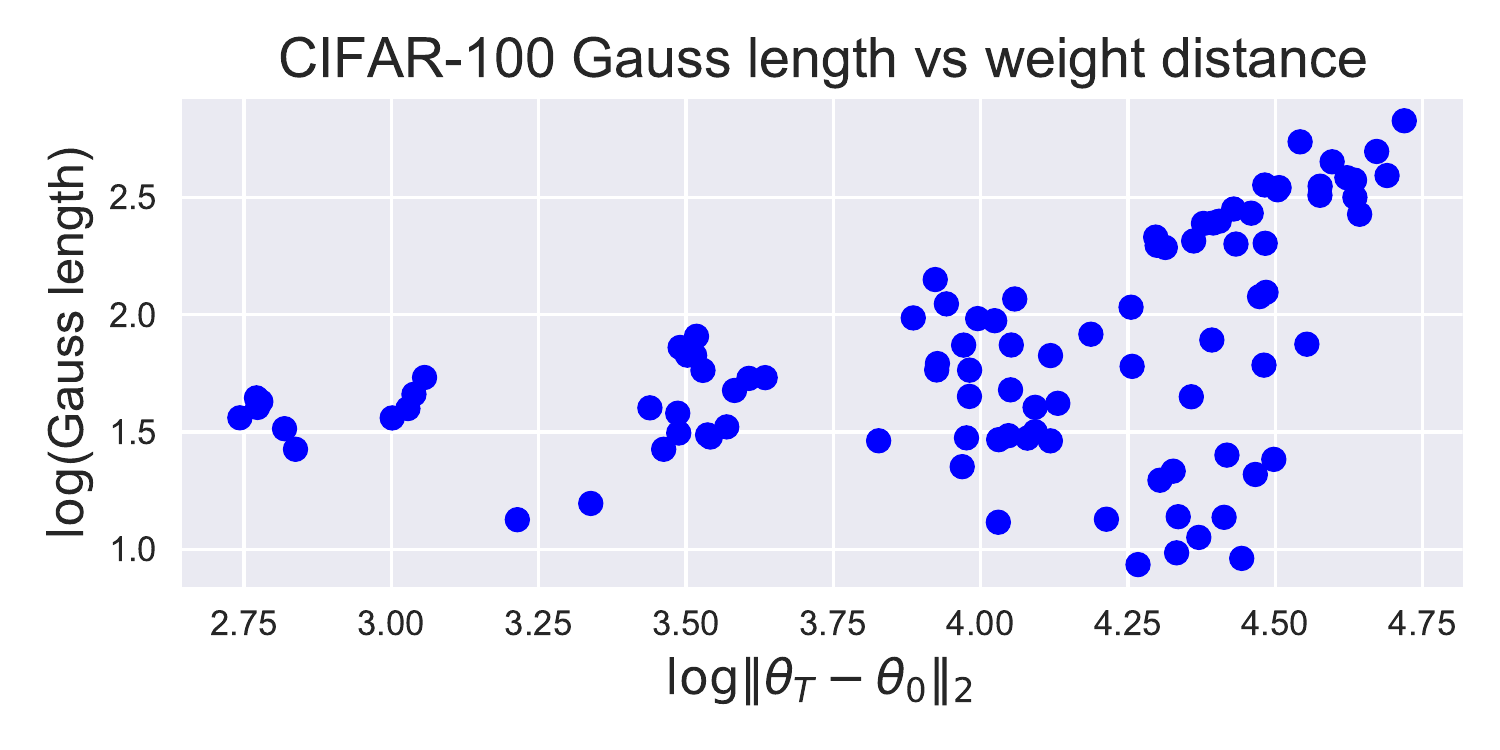}
    \end{minipage}
    \caption{Gauss length of interpolation path against distance moved in weight space for ResNets trained on CIFAR-10 and CIFAR-100. While there is a positive correlation, the goodness of fit is lower for these networks than the MNIST classifiers and autoencoders ($R^2 = 0.399$ for CIFAR-10 and $R^2=0.402$ for CIFAR-100).}
    \label{fig:cifar_power_law}
\end{figure}

\subsection{Impact of batch normalization}

In Figure~\ref{fig:mlp-bn}, we compare the distance travelled between models trained with batch normalization and without batch normalization for classifiers trained on the MNIST \& Fashion-MNIST datasets. We plot the minimum $\Delta$ such that the interpolated loss is $\Delta$-monotonic against the distance moved in weight space. We observe that models trained with batch normalization had a higher variance of distance travelled in weight space compared to models trained without batch normalization. Hence, when batch normalization is used, there are more configurations that travelled further in weight space. Consistent with our prior analysis, configurations that travelled far in parameter space tend to more break the MLI property. This hints that such behaviour of batch normalization can cause more frequent violation of the MLI property.

\begin{figure}
    \vspace{-0.8cm}
    \centering
    \includegraphics[width=0.8\linewidth]{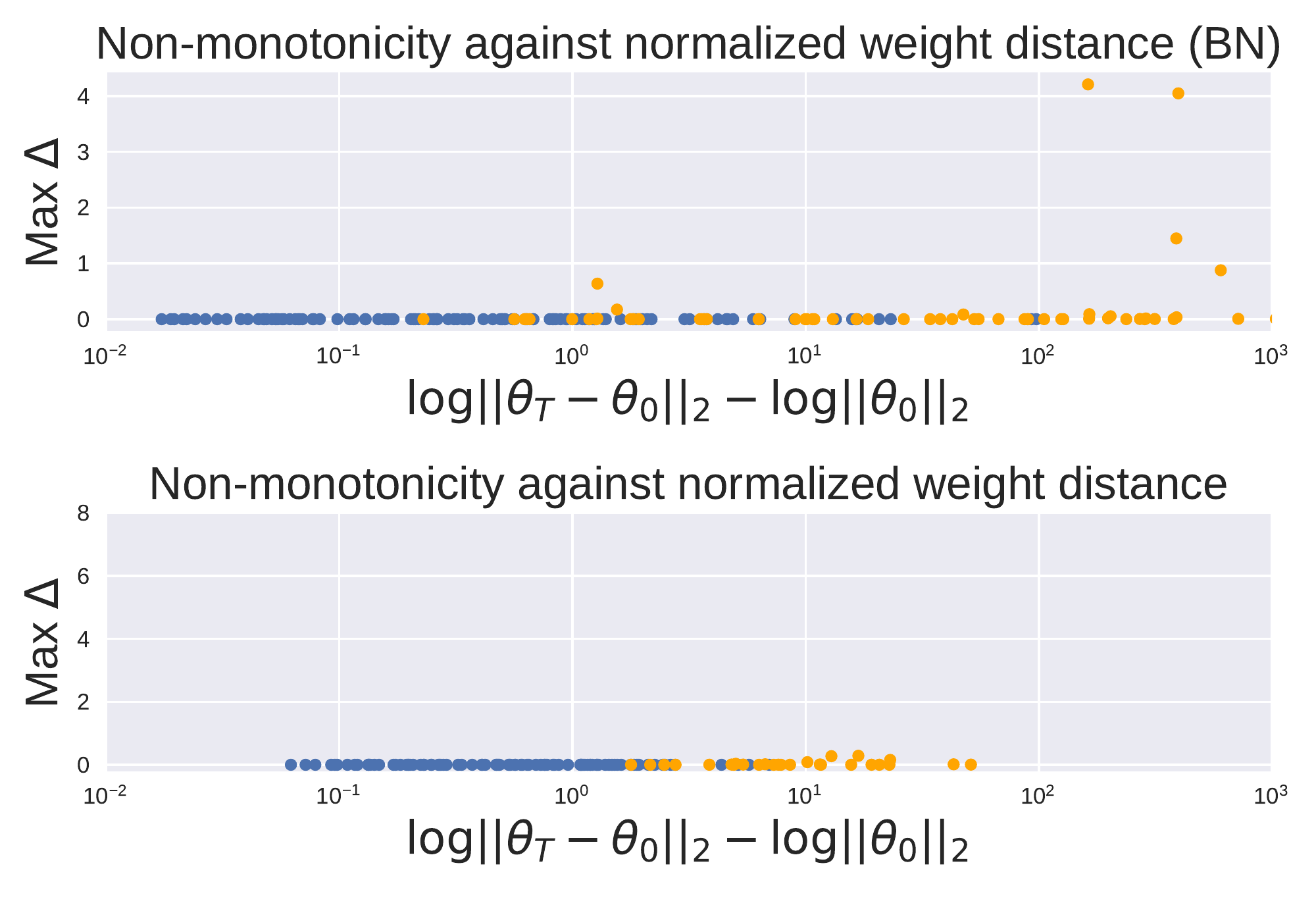}
    \vspace{-0.5cm}
    \caption{Monotonicity against distance moved in weight space for MNIST \& Fashion-MNIST classifiers. \textbf{\textcolor{blue}{Blue}} points represent networks where the MLI property holds and \textbf{\textcolor{orange}{orange}} points are networks where the MLI property fails.}
    \label{fig:mlp-bn}
\end{figure}

\subsection{Additional loss landscape experiments}
\label{app:experiments_ll}

\paragraph{Loss landscape for network failing MLI.} In Section~\ref{sec:exp:what_landscape}, we showed loss landscape visualizations for networks that satisfied the MLI property. In Figure~\ref{fig:nonmono_loss_landscape}, we show the 2D projection of the loss landscape for a fully-connected FashionMNIST classifier that does not satisfy the MLI property. In this 2D slice, we observe a wide barrier in the loss landscape followed by a region of extremely flat curvature.
\begin{figure}
    \centering
    \includegraphics[width=0.9\linewidth]{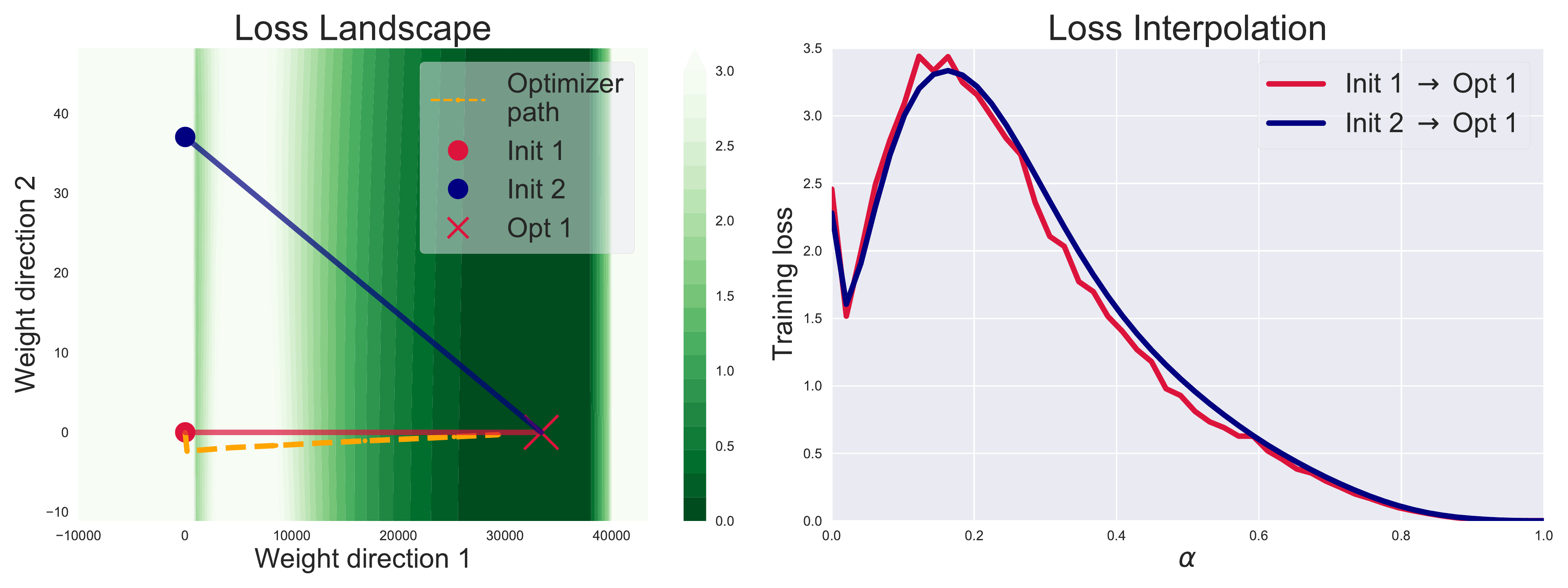}
    \vspace{-0.5cm}
    \caption{Loss landscape projection of a FashionMNIST classifier that does not satisfy the MLI property. We observe a barrier in the loss followed by a region of extremely flat curvature.}
    \label{fig:nonmono_loss_landscape}
\end{figure}

\begin{figure}
    \vspace{-0.5cm}
    \centering
    \includegraphics[width=0.5\linewidth]{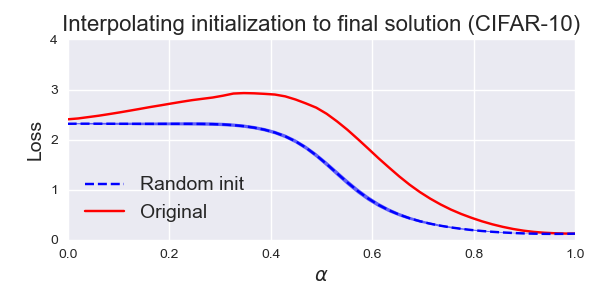}
    \vspace{-0.5cm}
    \caption{Interpolating from random intializations to SGD solution, with original initialization-solution pair being non-monotonic. The initialization scheme used differs from the one used to train the network, and surprisingly leads to monotonic interpolations.}
    \label{fig:cifar_rand_init_interp}
\end{figure}

\paragraph{MLI over permutation symmetries.} In addition to random initializations, we explored interpolations over the permutation symmetry group of initialization-solution pairs for fully-connected networks on MNIST. In Figure~\ref{fig:permuting_mli}, we utilized the fact that adjacent linear layers can be permuted without modifying the output function to randomly permute the initialization and final solution. This leads to different paths through weight space but with the end-points of the interpolation fixed at the original values. We observed that these permutations preserve the MLI property.
\begin{figure}
\begin{minipage}{0.48\linewidth}
    \centering
    \includegraphics[width=\linewidth]{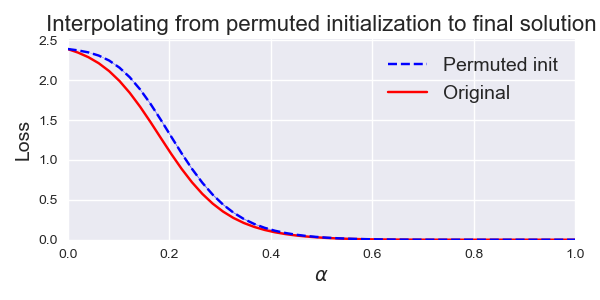}\\
\end{minipage}\hfill%
\begin{minipage}{0.48\linewidth}
    \centering
    \includegraphics[width=\linewidth]{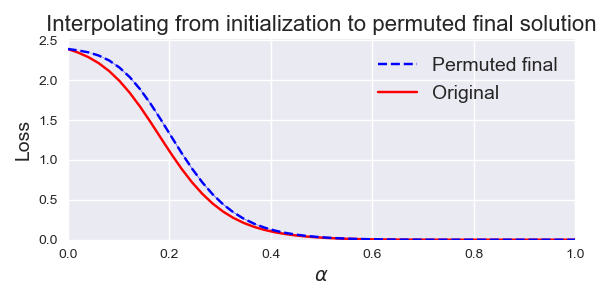}
\end{minipage}
    \centering
    \caption{Interpolation loss between initial points and final solution on training set. (Left) random permutations of the initialization are shown. (Right) Random permutations of the solution are shown. Mean loss shown with ($\pm 1$) standard deviation as filled region.}
    \label{fig:permuting_mli}
\end{figure}

\paragraph{Interpolations with different initialization distributions.} In Figure~\ref{fig:rand_init_to_optimum}, we showed that the monotonic (or non-monotonic) interpolations persist across different random initializations for a given final solution. However, it is possible that the monotonicity of the interpolations can change if we modify the initialization scheme. We took the network from the bottom right plot of Figure~\ref{fig:rand_init_to_optimum} and chose our initializations according to the scheme described in \citet{goyal2017accurate} --- where the final batch norm layer in each residual block is initialized to be zero so that the network function is close to the identity function. The result is shown in Figure~\ref{fig:cifar_rand_init_interp}, in this case, the random initializations are linearly connected to the solution while random samples from the original initialization distribution are not.

\paragraph{Additional landscape visualizations.} In Figure~\ref{fig:cut_resnet_cifar10_2inits_2optima}, we show additional 2D projections of the loss landscapes for ResNet20v1 networks on CIFAR-10. This confirms results seen elsewhere: linear interpolation between unrelated initial points and optima yield monotonic decreases in training loss and monotonic increases in accuracy.

\begin{figure}[!htb]
    \begin{minipage}{0.5\linewidth}
        \centering%
        \includegraphics[width=\linewidth]{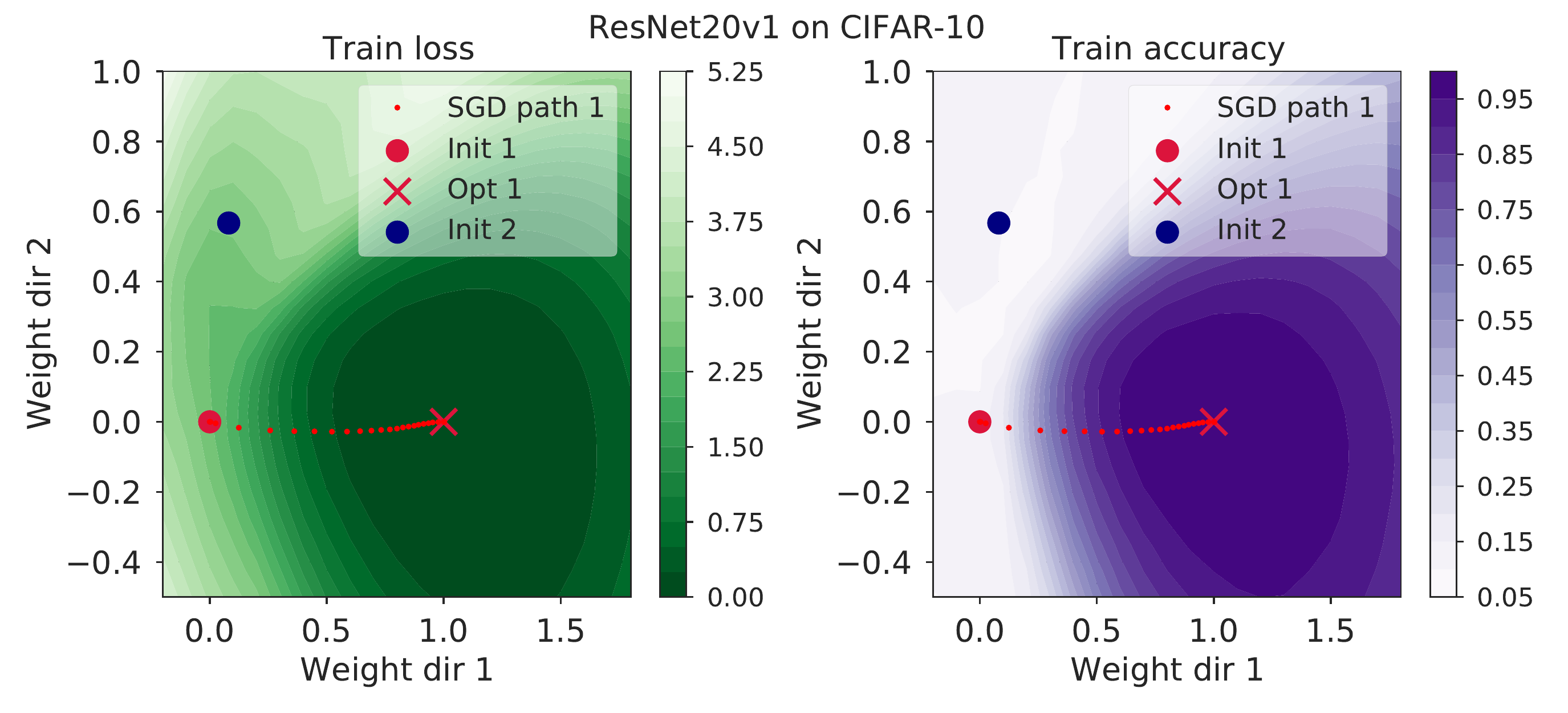}
    \end{minipage}\hfill%
    \begin{minipage}{0.5\linewidth}
        \centering%
        \includegraphics[width=\linewidth]{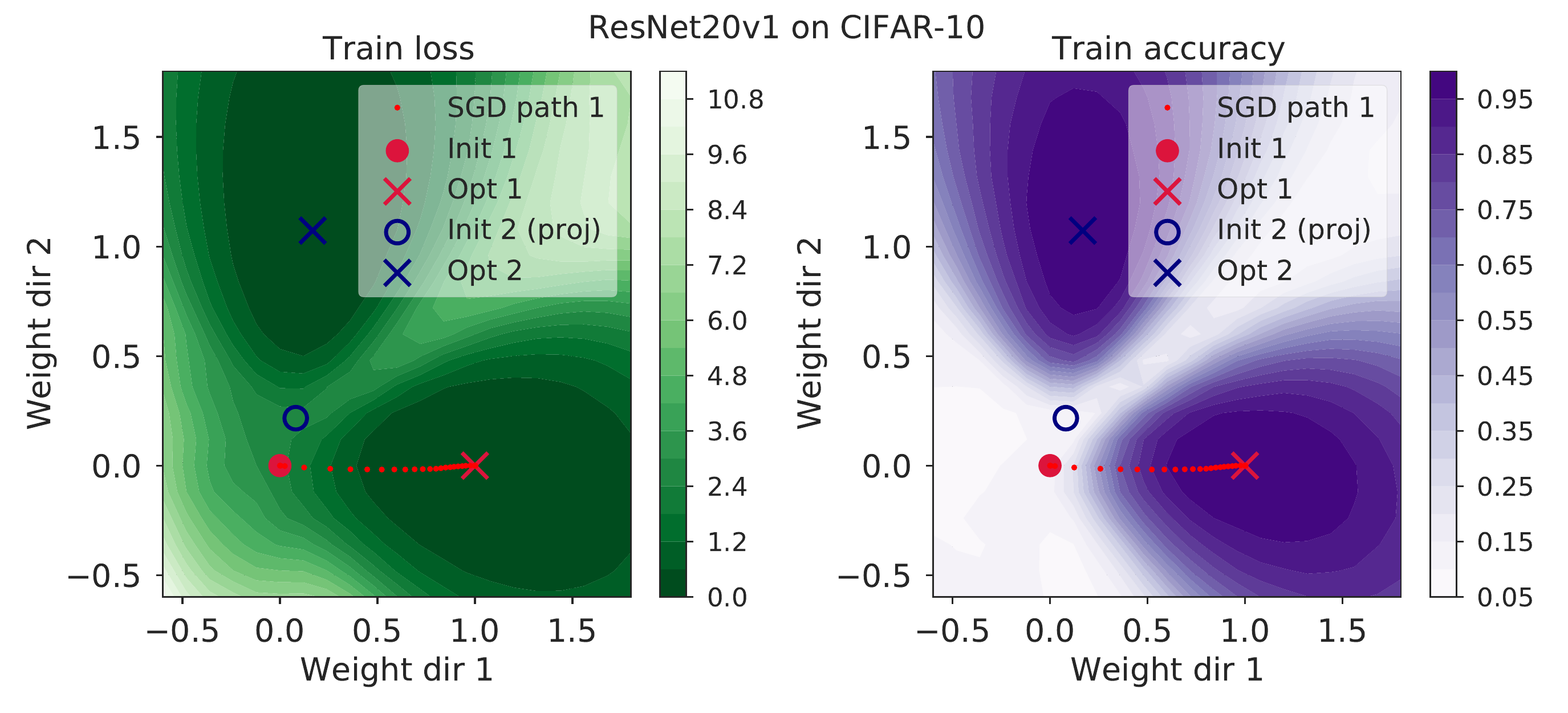}
    \end{minipage}
    \caption{\label{fig:cut_resnet_cifar10_2inits_2optima} Two-dimensional sections of the weight space for ResNet20v1 trained on CIFAR-10. (\textbf{Left}) The plane is defined by 2 initializations (circles) and an optimum (cross) reached from one of them. (\textbf{Right}) The plane is defined by an initialization (circle) and two optima (crosses). For both training loss ({\color{green} \textbf{green}}) and training accuracy ({\color{purple} \textbf{purple}}), interpolations between both minima and optima yield monotonic decreases/increases, respectively.}
\end{figure}

In Figure~\ref{fig:roberta_esperanto_landscape_interp}, we show the same loss landscape projections as displayed in Figure~\ref{fig:cut_roberta_esperanto}. Additionally, here we include the loss over the interpolated paths.
\begin{figure}
\begin{minipage}{0.5\linewidth}
    \centering
    \includegraphics[width=\linewidth]{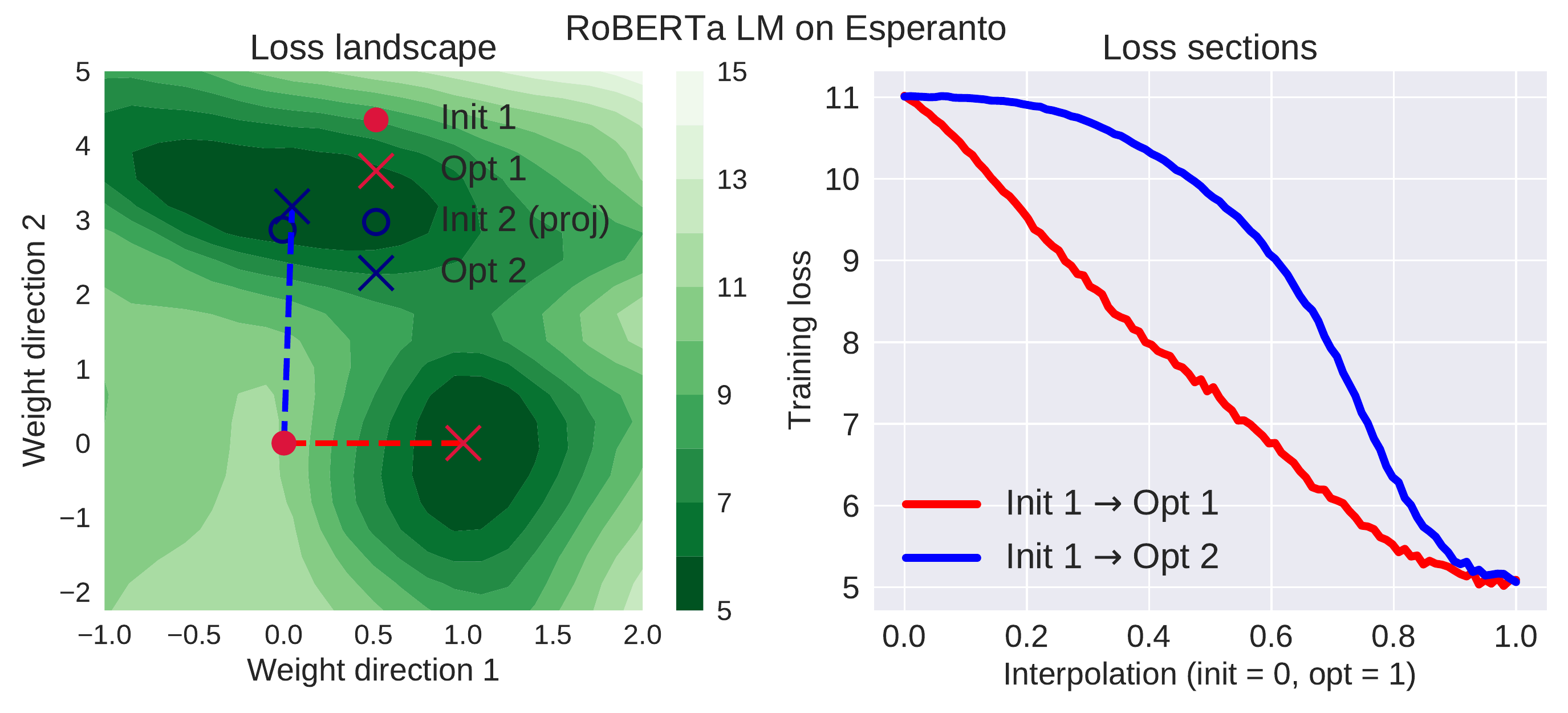}
\end{minipage}\hfill%
\begin{minipage}{0.5\linewidth}
    \centering
    \includegraphics[width=\linewidth]{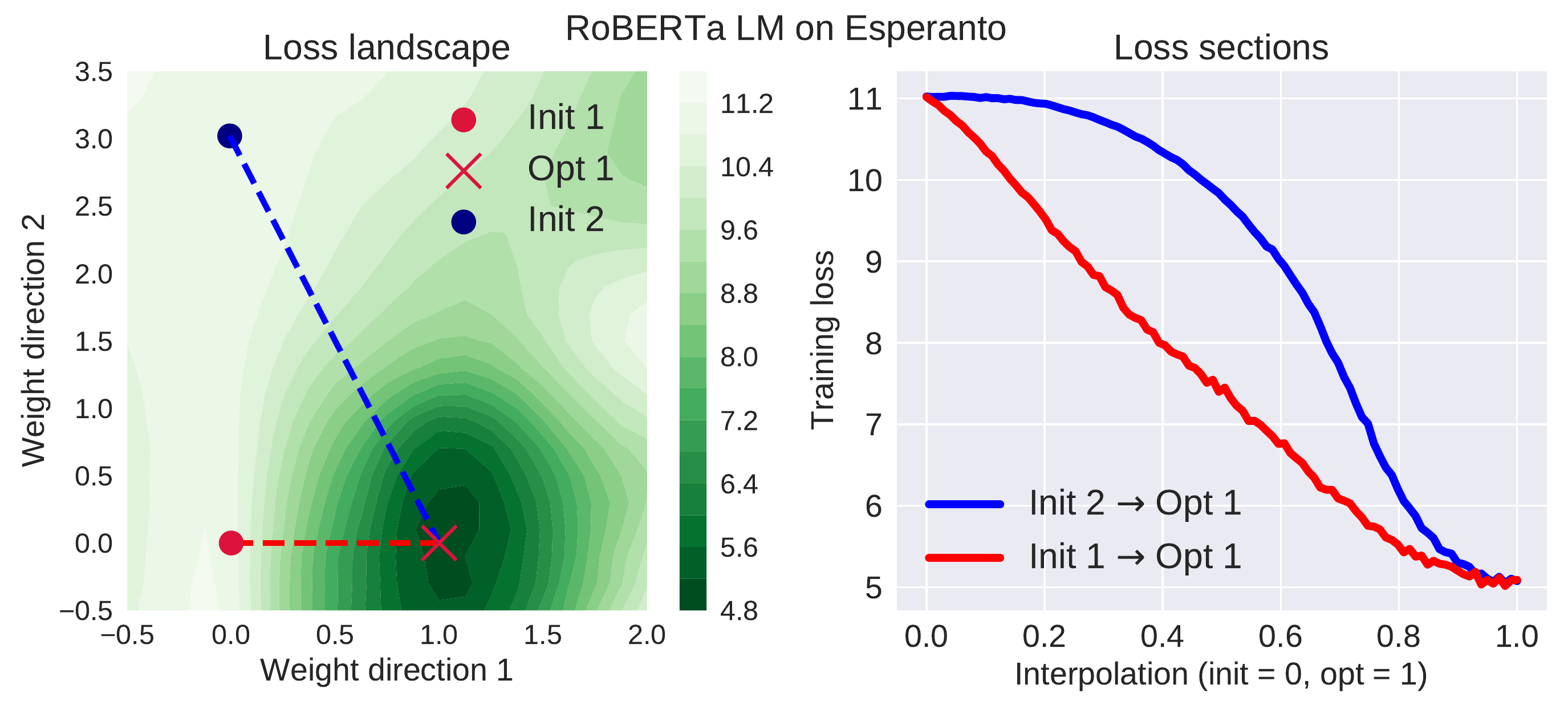}
\end{minipage}\hfill%
    \caption{Loss landscapes with loss over linear interpolations between initial and final parameters for RoBERTa trained as language model on Esperanto. Linear interpolation between all pairs leads to monotonic reduction in the loss. (\textbf{Left}) The loss over the plane defined by the initial parameters, optimum found by SGD, and an unrelated optimum. (\textbf{Right}) The loss over the plane defined by the initial parameters, optimum found by SGD, and an unrelated initialization.}
    \label{fig:roberta_esperanto_landscape_interp}
\end{figure}

\subsection{Additional MNIST results}\label{app:mnist_additional}

In Figure~\ref{fig:784_all}, we show the full set of network interpolations used to produce Table~\ref{tab:mnist_lr}. Overall, we observed a significant effect from introducing batch normalization across all other settings considered, but particularly when the learning rate is large.
\begin{figure}
    \begin{minipage}{0.5\linewidth}
        \centering
        \includegraphics[width=\linewidth]{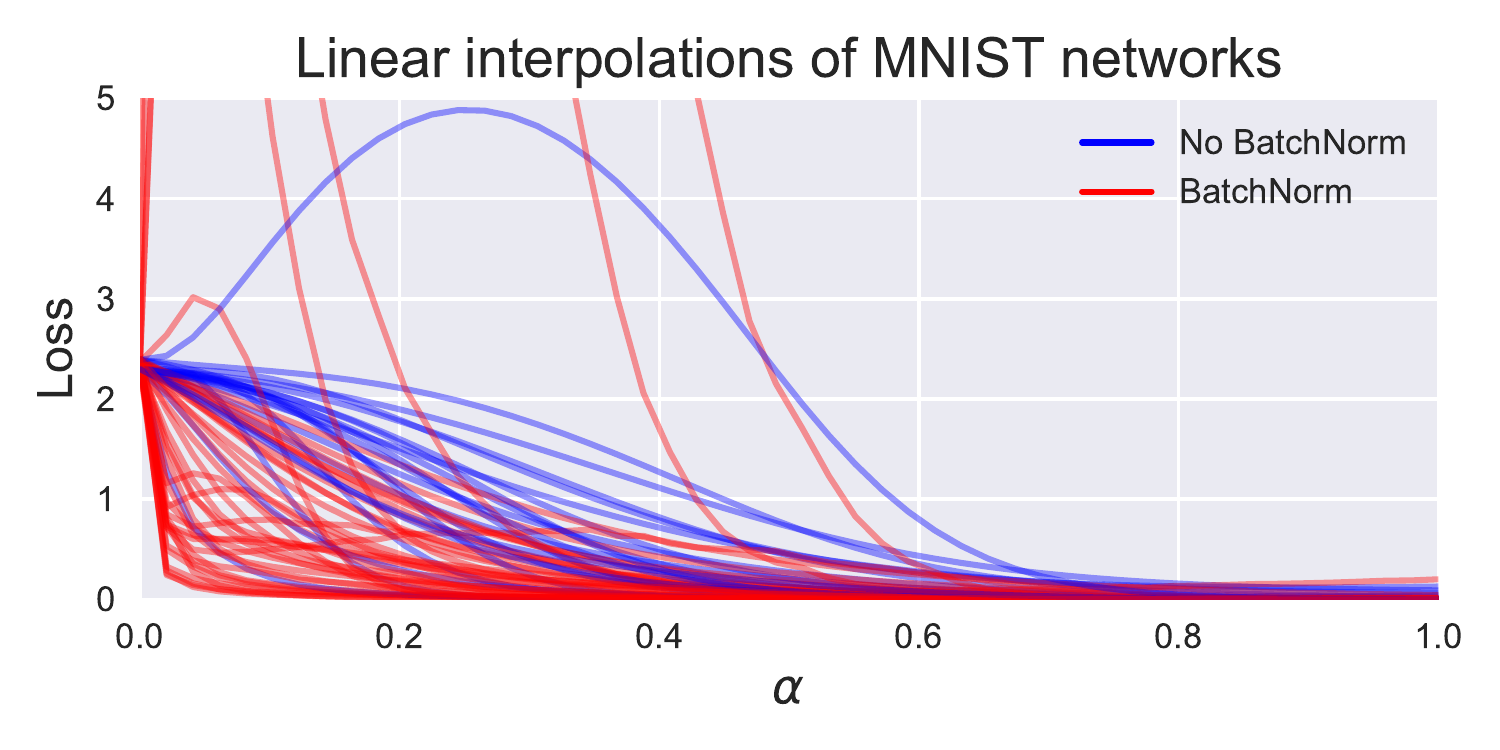}    
    \end{minipage}\hfill%
    \begin{minipage}{0.5\linewidth}
        \centering
        \includegraphics[width=\linewidth]{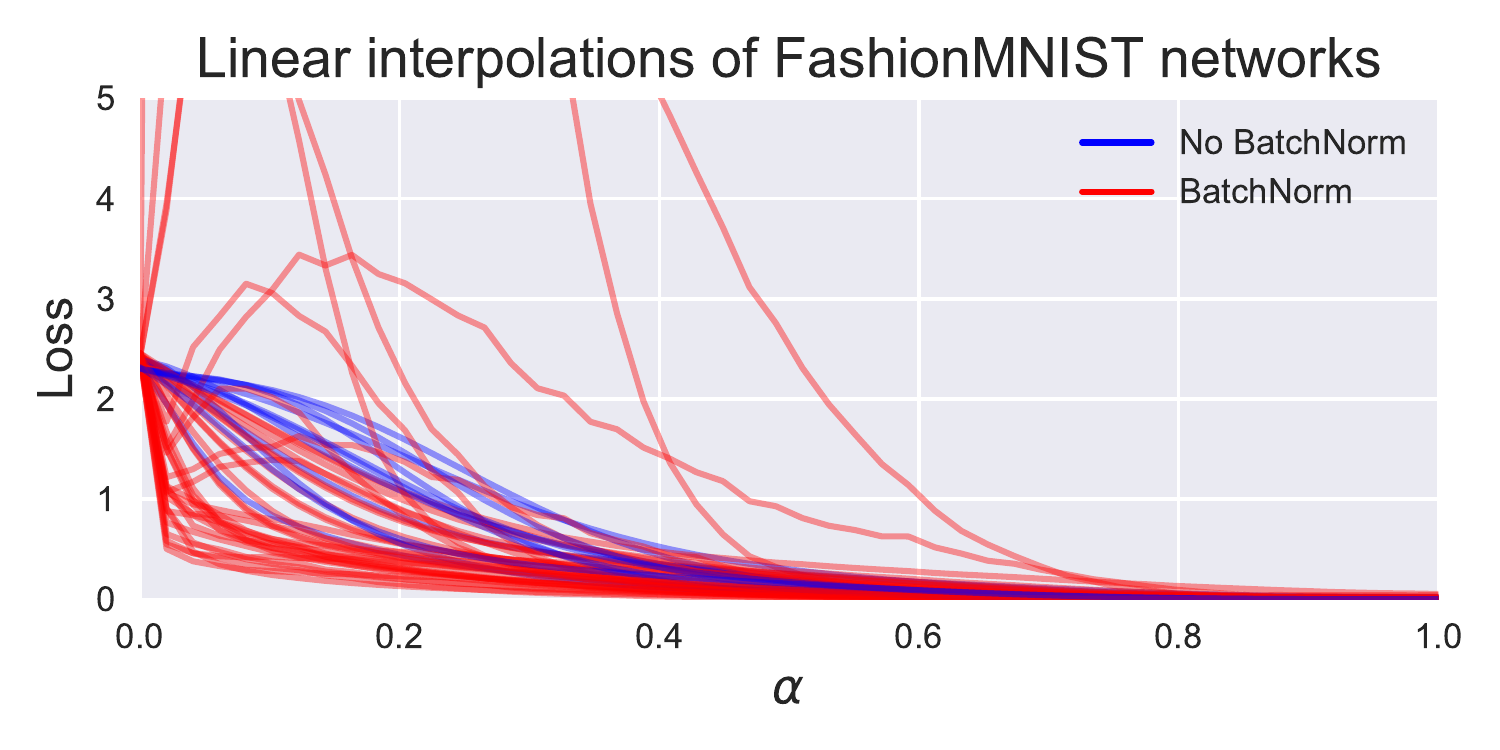}
    \end{minipage}
    \caption{Linear interpolations for MNIST (top) and Fashion-MNIST (bottom). Different curves represent trained networks with varying activation function, learning rate, choice of optimizer, and batch normalization. All networks achieve at most 0.1 final training loss.}
    \label{fig:784_all}
\end{figure}

\paragraph{Varying depth and hidden size.} We explored the effect of varying depth and hidden size on the MLI property. Overall, we did not observe any substantial correlation between these factors and the MLI property (especially, when taking into account implicit effects on the critical learning rate).

In Figure~\ref{fig:mnist_delta_heatmaps}, we display heatmaps of $\min \Delta$ as a function of the learning rate, hidden size and depth of fully-connected neural networks trained on MNIST and Fashion-MNIST. Overall, we do not observe any significant effect from changing either the hidden size or the network depth --- the learning rate accounts for the dominant changes in the monotonicity of the interpolation. We trained each network with ReLU activations for 200 epochs with batch sizes of 512, using both Adam and SGD and with/without batch normalization. Only those models that achieved a training loss of 0.1 are displayed (cyan patches indicate that no model met these criteria for the corresponding configuration).
\begin{figure}
    \centering
    \resizebox{\linewidth}{!}{%
    \includegraphics[height=5cm]{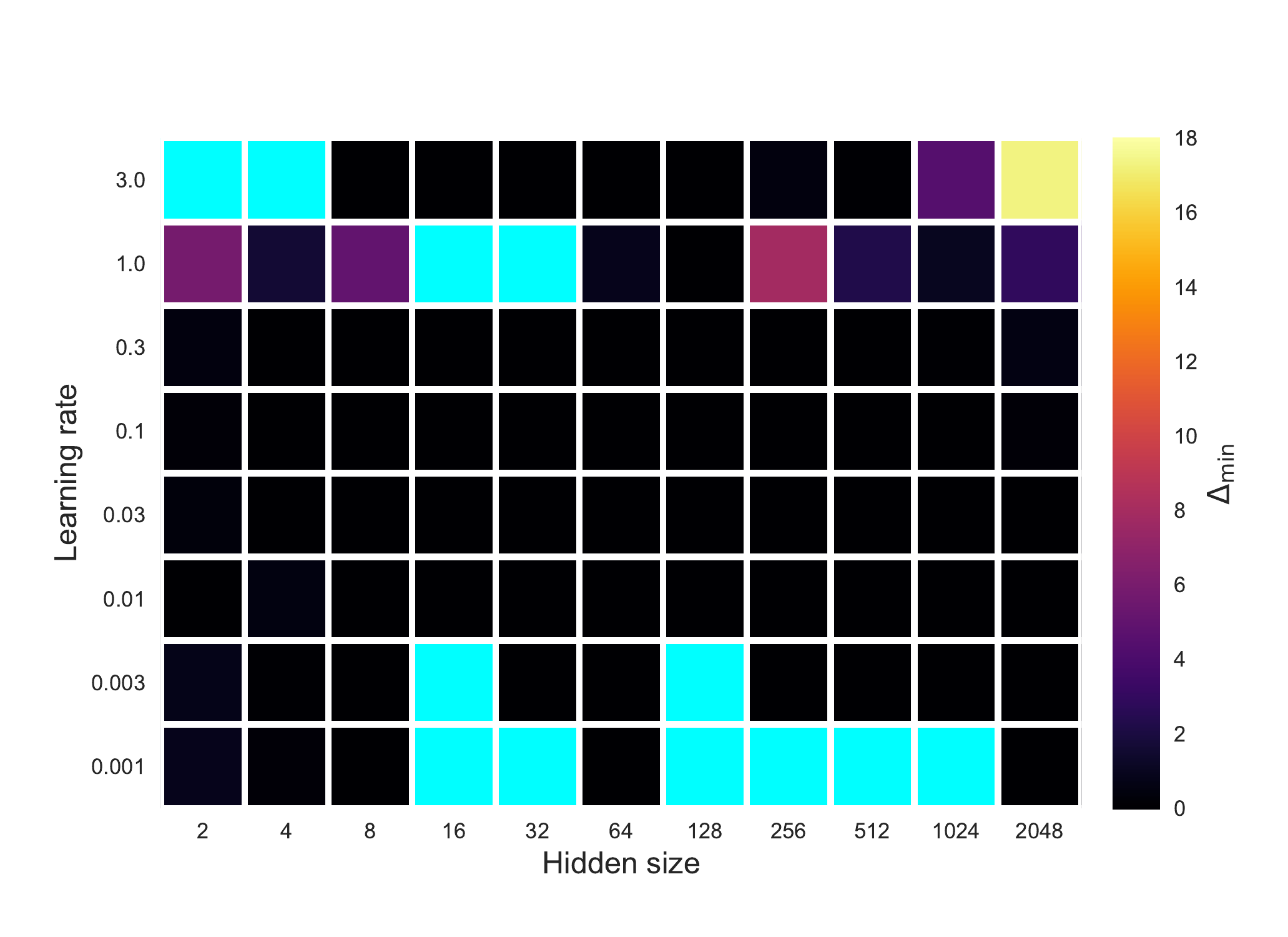}
    \includegraphics[height=5cm]{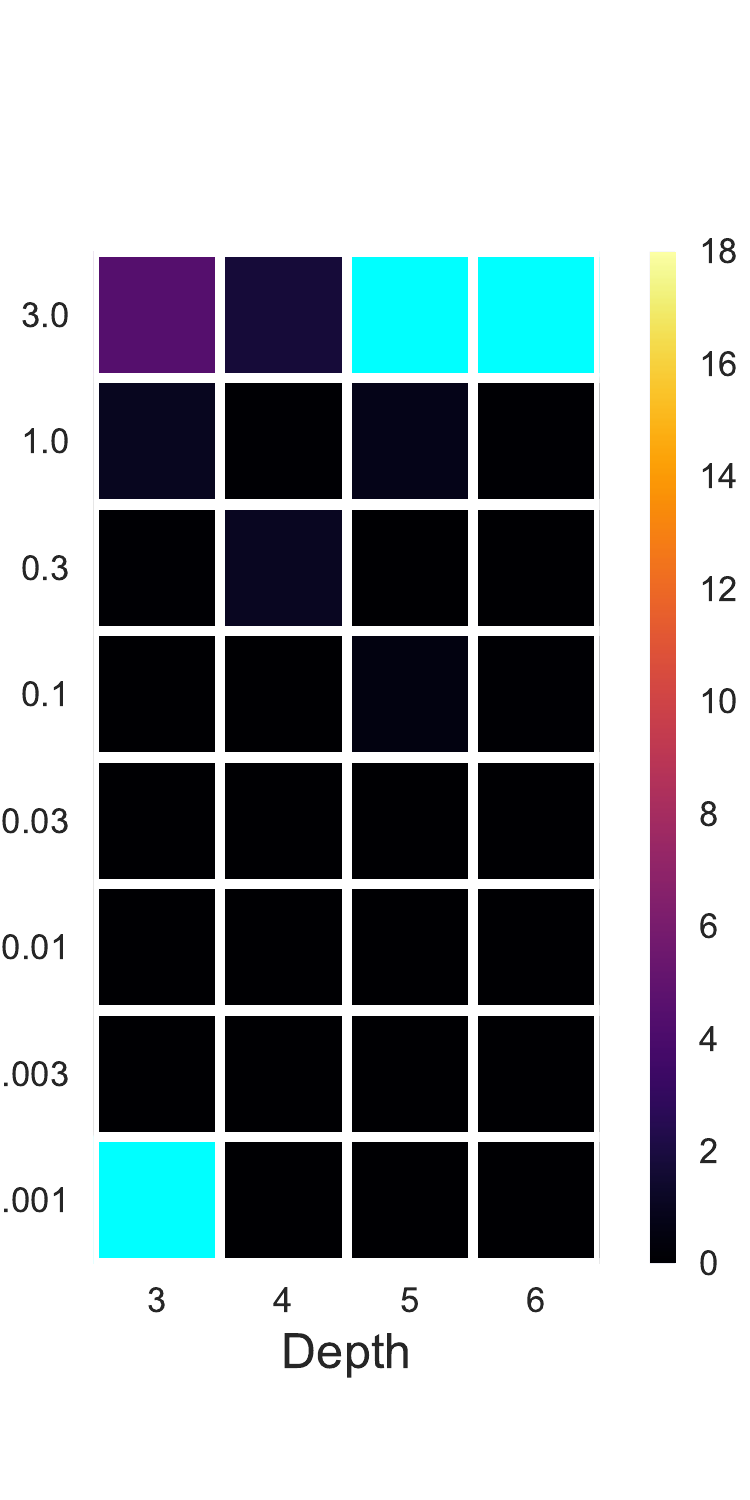}}
    \caption{Heatmaps of the average $\min \Delta$ as a function of the learning rate, hidden size and network depth for fully-connected networks trained on MNIST and FashionMNIST. On the left, depth 3 networks with varying hidden sizes are compared. On the right, networks with hidden size 1024 are compared over varying depth. {\color{cyan} \textbf{Cyan}} patches indicate that no model with the given configuration achieved a minimum training loss of 0.1.}
    \label{fig:mnist_delta_heatmaps}
\end{figure}

\subsection{Problem Difficulty}
We revisited the conclusion of \citet{goodfellow2014qualitatively} that the MLI property holds due to the relative ease of optimization. We explored this question on three fronts. First, we used a fixed network size and varied the number of data points in the dataset. Second, we used a fixed dataset size and varied the number of hidden units in a network of fixed depth (Figure~\ref{fig:mnist_delta_heatmaps}). And third, we varied random corruption of labels in the training dataset.

\label{app:problem-diff}
\paragraph{Dataset size.} We trained fully-connected networks on the Fashion-MNIST dataset using SGD with a learning rate of 0.1. The networks had a single hidden layer with 1000 hidden units, and we varied the dataset size from 10 up to the full size 60000. Figure~\ref{fig:vary_dsize} shows the linear interpolation trained on varying dataset sizes. We observed that even when the training dynamics are unstable and highly non-linear, the interpolation is still monotonically decreasing.

\begin{figure}
    \begin{minipage}{0.32\linewidth}
    \centering
    \includegraphics[width=\linewidth]{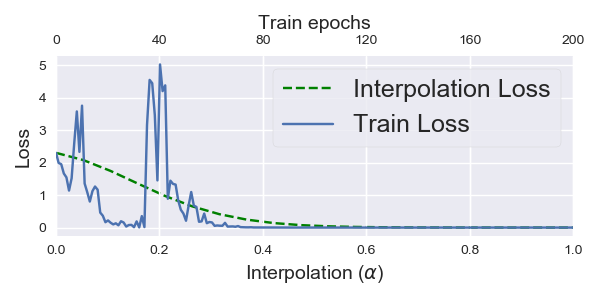}
    \end{minipage}\hfill%
    \begin{minipage}{0.32\linewidth}
    \centering
    \includegraphics[width=\linewidth]{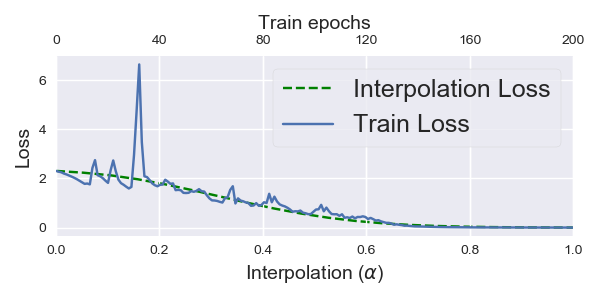}
    \end{minipage}\hfill%
    \begin{minipage}{0.32\linewidth}
    \centering
    \includegraphics[width=\linewidth]{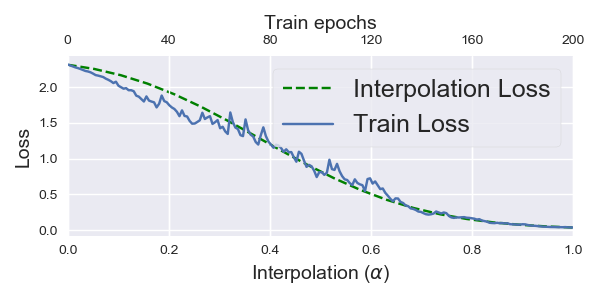}
    \end{minipage}
    \caption{Linear interpolations (green) for neural networks trained on varying dataset sizes (30, 300, 3000 from left-to-right), with loss during training overlaid (blue). Even when the training dynamics are unstable and highly non-linear, the interpolation produces a smooth monotonic curve.}
    \vspace{-0.4cm}
    \label{fig:vary_dsize}
\end{figure}

\paragraph{MLI vs.~label corruption.} When the dataset is sufficiently simple, the learning problem is easy and SGD consistently finds solutions with the MLI property. To explore this hypothesis, we trained neural networks with label corruption. We trained a neural network with two hidden layers each with 1024 units (more details can be found at Appendix~\ref{app:experiment-specific}). The labels were corrupted by uniformly sampling labels for some proportion of the data points. We varied the label corruption from 0\% to 100\% in 2.5\% intervals. We varied the proportion of label corruption from 0\% up to 100\%. At all levels of label corruption, the MLI property persisted. One possible explanation for this result follows from the fact that logit gradients cluster together by logit index --- even for inputs belonging to different true classes \citep{fort2019emergent}. This provides an explanation for gradient descent exploring a low dimensional subspace relative to the parameter space. Therefore, corrupting the label will not disrupt this clustering at initialization and, as empirically verified, is unlikely to prevent the MLI property from holding.

\subsection{Learning Dynamics}
\label{appendix-dynamics}

\citet{lewkowycz2020large} observed a region of critical large learning rates wherein gradient descent breaks out of high-curvature regions at initialization and explores regions of high-loss before settling in a low-loss region with lower curvature. We might expect that such trajectories lead to initialization-solution pairs that do not satisfy the MLI property. On one hand, in Figure~\ref{fig:vary_dsize}, we observed several runs where SGD is seen to overcome large barriers but the MLI property holds. However, in Figure~\ref{fig:nonmono_loss_landscape} we observe a projection of the loss landscape which aligns with the qualitative description of the catapult phase: a barrier in the loss, with SGD settling in a region of much lower curvature. Overall, we consider our findings inconclusive on this front.

\subsection{MLI on held-out data}

In this work, we are primarily concerned with better understanding of the interaction between the MLI property and the training loss. Therefore, all of the results that we have reported are based on statistics computed over the training set. However, the same observations also hold generally when evaluating using held-out data (up to overfitting effects). This was confirmed by \citet{goodfellow2014qualitatively}, and in this section, we provide a short qualitative study verifying this for the settings that we have studied.

\paragraph{Image reconstruction.} In Figure~\ref{fig:mnist_holdout_compare} (left two plots), we compare the loss interpolations on the training set and test set for two trained autoencoders. In the first plot, the network satisfies the MLI property but in the second it does not. In both cases, the test loss interpolation closely follows the training loss.

\begin{figure}
\begin{minipage}{0.5\linewidth}
\centering%
\includegraphics[width=\linewidth]{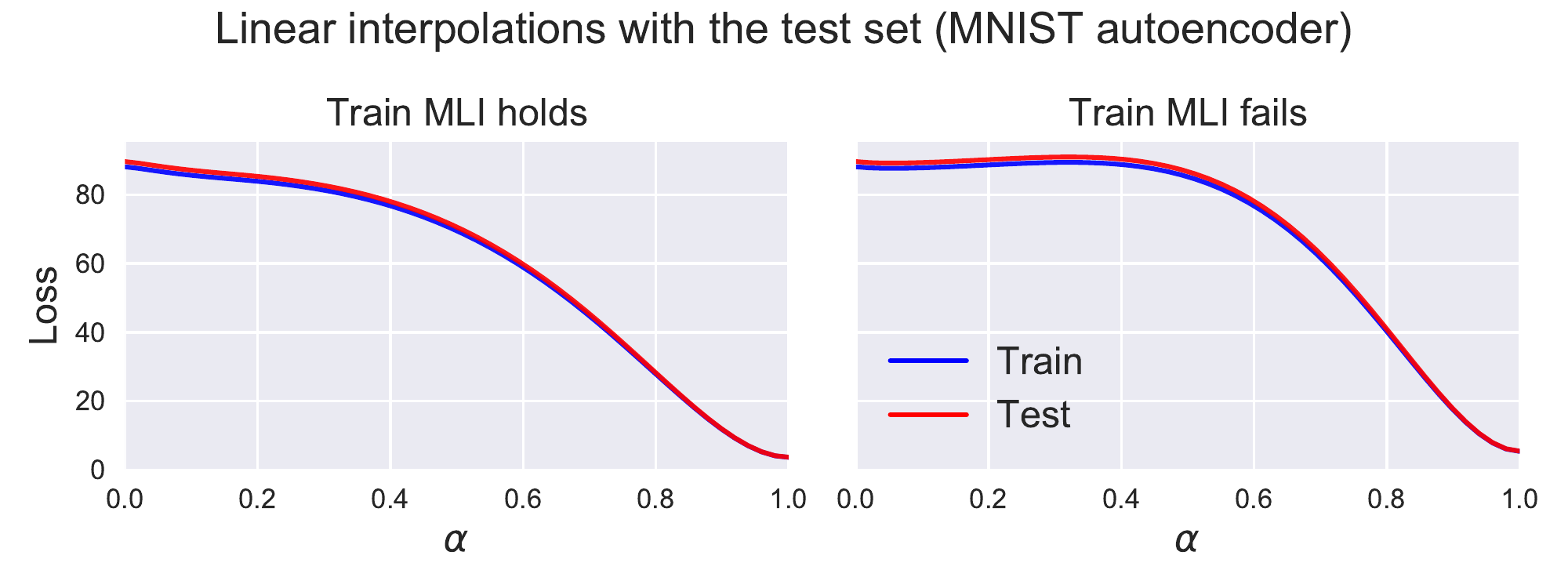}
\end{minipage}\hfill%
\begin{minipage}{0.5\linewidth}
\centering%
\includegraphics[width=\linewidth]{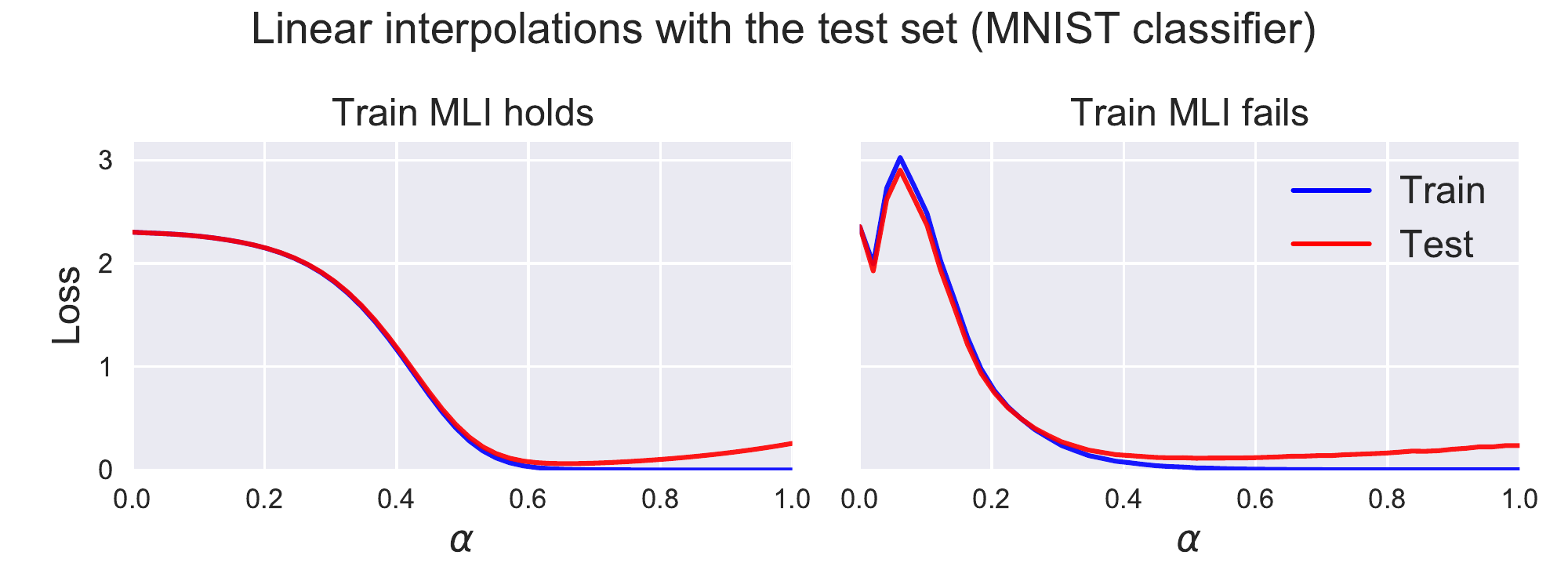}
\end{minipage}
\caption{Comparing loss interpolations on the train and test set. In the first and second plots, fully-connected autoencoders trained on MNIST are evaluated that do/don't satisfy the MLI property (respectively). The third and fourth plots display fully-connected MNIST classifiers.}
\label{fig:mnist_holdout_compare} 
\end{figure}

\paragraph{MNIST Classifiers.} The third and fourth plots in Figure~\ref{fig:mnist_holdout_compare} show the train and test loss interpolations for fully-connected MNIST classifiers. In this case, the test loss increases towards the end of the interpolation path while the training loss stays small. This happens because the network becomes over-confident in its predictions and pays a larger cost for misclassification on the test-set (even though the accuracy remains the same). This observed behaviour is one reason why we favour exploration of the training loss throughout our work. Despite this, we do still observe the test loss following the general shape of the training loss for most of the interpolation path.

\paragraph{CIFAR-10 \& CIFAR-100 Classifiers.} In Figure~\ref{fig:cifar_holdout_compare}, we compare the loss interpolations on the training set and test set for ResNets trained on CIFAR-10 and CIFAR-100. The first two plots show CIFAR-10 classifiers with the third and fourth plot showing CIFAR-100 classifiers. The first and third plots show networks that satisfy the MLI property on the training loss, while the second and fourth show networks that fail to satisfy the MLI property on the training loss. As with the MNIST classifiers, we observe that the test loss has a tendency to increase towards the end of the interpolation path (while following the overall trend of the training loss).
\begin{figure}
\begin{minipage}{0.5\linewidth}
\centering%
\includegraphics[width=\linewidth]{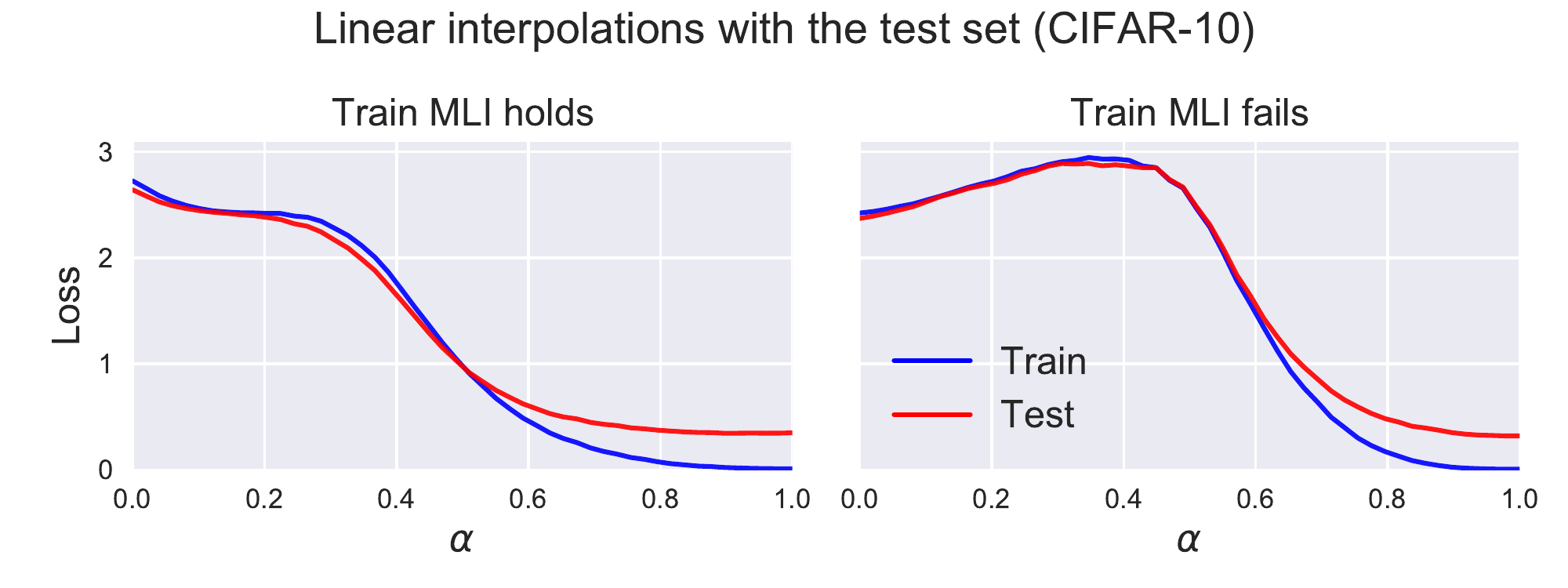}
\end{minipage}\hfill%
\begin{minipage}{0.5\linewidth}
\centering%
\includegraphics[width=\linewidth]{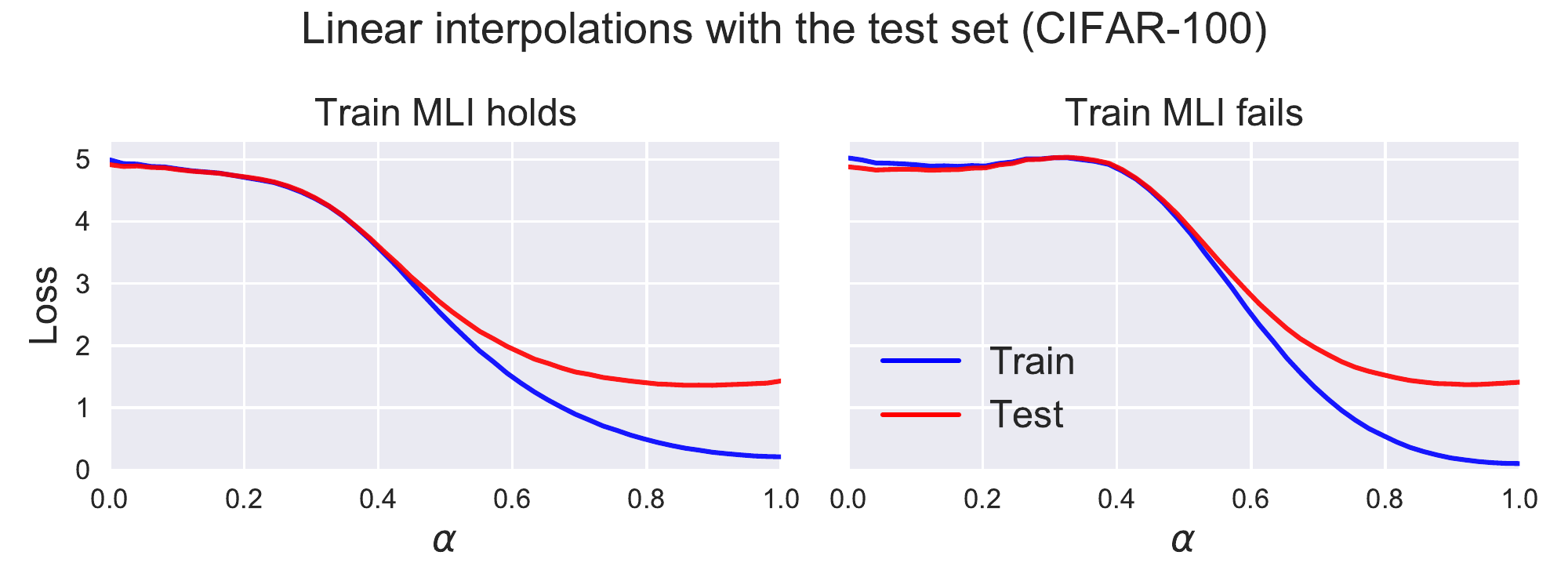}
\end{minipage}
\caption{Comparing loss interpolations on the train and test set. In the first and second plots, ResNets trained on CIFAR-10 are evaluated that do/don't satisfy the MLI property (respectively). The third and fourth plots display interpolation plots for ResNets trained on CIFAR-100.}
\label{fig:cifar_holdout_compare} 
\end{figure}

\section{Additional Theoretical Analysis}\label{app:additional_theory}
In this section, we present additional theoretical analysis of the MLI property.

\subsection{Wide neural networks}
\label{app:wide_nets}

In this section, we prove that sufficiently wide fully-connected networks satisfy the MLI property. To do so, we lean on prior analysis from \citet{lee2019wide}. We assume that the fully-connected network has the following layer sizes $d \rightarrow m \rightarrow \ldots m \rightarrow k$, with $m \rightarrow \infty$. We also assume our loss function is mean-squared error,
\[\calL(\btheta) = \frac{1}{2}\sum_{i=1}^n \Vert f_{\btheta}(\bx_i) - \by_i \Vert^2.\]
\paragraph{Assumptions.} We borrow the setting established by \citet{lee2019wide} that consists of four assumptions.
\begin{enumerate}
    \item The widths of the hidden layers are identical (as stated above).
    \item The neural tangent kernel, $\frac{1}{n}J(\btheta)^\top J(\btheta)$, is full-rank with finite singular values. I.e.,
    \[0 < \lambda_{\min}\left(\frac{1}{n}J(\btheta)^\top J(\btheta)\right) \leq \lambda_{\max}\left(\frac{1}{n}J(\btheta)^\top J(\btheta)\right) < \infty.\]
    Further, we define $\eta_{\textrm{critical}} := 2/(\lambda_{\min} + \lambda_{\max})$
    \item The training set $\{(\bx_i, \by_i)\}_{i=1}^n$ is contained in a compact set and contains no duplicate inputs.
    \item The activation function, $\phi$, in the network satisfies the following,
    \[ \vert \phi(0)\vert < \infty, \:\: \Vert \phi' \Vert_\infty < \infty, \:\: \sup_{\bx \neq \tilde{\bx}} \frac{\vert \phi'(\bx) - \phi'(\tilde{\bx})\vert}{\bx - \tilde{\bx}} < \infty\]
\end{enumerate}

\paragraph{Background.} We utilize two results from \citet{lee2019wide}. The first of which bounds the Jacobian matrix in Frobenius norm about initialization.
\begin{lemma}\label{lemma:locally_lipschitz_jacobian}(Locally Lipschitz Jacobian [Lemma~1 \citep{lee2019wide}])
Assume conditions 1-4 above. There is a $K > 0$ such that for every $C > 0$, with high probability over random initialization,
\begin{align*}
   \frac{1}{\sqrt{m}}\Vert J(\btheta) \Vert_F &\leq K, \\
   \frac{1}{\sqrt{m}}\Vert J(\btheta) - J(\btheta')\Vert_F &\leq K\Vert \btheta - \btheta'\Vert_2,
\end{align*}
for all $\btheta$ and $\btheta'$ such that $\Vert \btheta - \btheta_0\Vert \leq Cm^{-1/2}$ and $\Vert \btheta' - \btheta_0\Vert \leq Cm^{-1/2}$.
\end{lemma}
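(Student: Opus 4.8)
Since this is exactly Lemma~1 of \citet{lee2019wide}, the plan is to reproduce their argument; I outline its structure. Work in the NTK parameterization, so that (up to fixed scale factors) the pre-activations satisfy $h^{(\ell+1)} = \tfrac{1}{\sqrt m} W^{(\ell)} x^{(\ell)} + b^{(\ell)}$ with $x^{(\ell)} = \phi(h^{(\ell)})$, and the backpropagated signals $\delta^{(\ell)} := \partial f / \partial h^{(\ell)}$ satisfy the adjoint recursion $\delta^{(\ell)} = \tfrac{1}{\sqrt m}\,\mathrm{diag}(\phi'(h^{(\ell)}))\,(W^{(\ell)})^\top \delta^{(\ell+1)}$. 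The per-layer blocks of $J(\btheta)$ are (essentially) outer products $\delta^{(\ell+1)}(x^{(\ell)})^\top$ over the finitely many training inputs, so controlling $\tfrac{1}{\sqrt m}\Vert J(\btheta)\Vert_F$ reduces to $\ell_2$-control of the vectors $x^{(\ell)}$ and $\delta^{(\ell)}$, and controlling $\tfrac{1}{\sqrt m}\Vert J(\btheta) - J(\btheta')\Vert_F$ reduces to controlling how those vectors move under parameter perturbations.

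First I would establish the bounds \emph{at initialization}: each $W^{(\ell)}$ has i.i.d.\ Gaussian entries, so standard operator-norm concentration gives $\tfrac{1}{\sqrt m}\Vert W^{(\ell)}\Vert_{\mathrm{op}} \le 3$ (say) for all layers simultaneously with high probability, after a union bound over the finite depth. Using $\vert\phi(0)\vert<\infty$ and $\Vert\phi'\Vert_\infty<\infty$, an induction forward through the layers bounds $\Vert x^{(\ell)}\Vert_2$ and an induction backward bounds $\Vert\delta^{(\ell)}\Vert_2$, yielding $\tfrac{1}{\sqrt m}\Vert J(\btheta_0)\Vert_F \le K_0$ for a constant depending only on depth, $\phi$, and the (compact, duplicate-free) dataset.

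Next I would extend to the ball $\Vert\btheta-\btheta_0\Vert\le Cm^{-1/2}$. Each layer's weight perturbation has Frobenius, hence operator, norm at most $Cm^{-1/2}$, so $\tfrac{1}{\sqrt m}\Vert W^{(\ell)}\Vert_{\mathrm{op}}$ changes by at most $C/m = o(1)$; the operator-norm bounds therefore persist uniformly over the ball, and re-running the two inductions gives the first inequality with a slightly enlarged constant $K$. For the second (Lipschitz) inequality I would expand $J(\btheta)-J(\btheta')$ as a telescoping sum over layers, bounding the $\ell$-th term by a product of already-controlled norms times the perturbation of the relevant parameter block; the one new ingredient is the Lipschitz-derivative hypothesis on $\phi$, needed to bound the change in the $\mathrm{diag}(\phi'(h^{(\ell)}))$ factors by the change in $h^{(\ell)}$, which itself is Lipschitz in $\btheta$ by the previous step. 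Summing the finitely many terms gives $\tfrac{1}{\sqrt m}\Vert J(\btheta)-J(\btheta')\Vert_F \le K\Vert\btheta-\btheta'\Vert_2$.

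The main obstacle is bookkeeping: every estimate compounds multiplicatively through the forward and backward recursions, so one must verify that the $O(m^{-1/2})$ and $O(m^{-1})$ slack in the operator-norm estimates stays harmless as it propagates across all layers and that the constants remain $m$-independent. The Lipschitz-derivative assumption on $\phi$ (Assumption~4) is precisely what makes the telescoping bound in the last step go through, which is why it is included in the hypotheses; an activation such as ReLU, whose derivative is not Lipschitz, is correspondingly excluded.
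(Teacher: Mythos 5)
The paper does not prove this lemma at all: it is quoted verbatim as Lemma~1 of \citet{lee2019wide} and used as an imported black-box ingredient in the proof of Theorem~\ref{thm:inf_width_mli}. Your outline faithfully reconstructs the argument given in that reference (operator-norm concentration for the Gaussian weight matrices, forward/backward inductions bounding activations and backpropagated signals, stability of those bounds over the $Cm^{-1/2}$ ball, and a telescoping estimate using the Lipschitz-derivative assumption on $\phi$ for the second inequality), so it is consistent with the proof the paper implicitly relies on.
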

In words, Lemma~\ref{lemma:locally_lipschitz_jacobian} guarantees that the Frobenius norm of the Jacobian is close to initialization as width grows and that it does not vary too quickly. The second of these two constraints also guarantees that the norm of the network Hessian is bounded (by considering $\btheta$ and $\btheta'$ arbitrarily close).

The second result that we borrow provides a high-probability guarantee that infinitely wide neural networks find solutions near to their initialization (the lazy training regime \cite{chizat2018lazy}).
\begin{lemma}\label{lemma:small_weight_change}(Lazy training [Theorem~G.1 \citep{lee2019wide}])
Assume conditions 1-4 above. For all $\delta > 0$ and $\eta_0 < \eta_{\textrm{critical}}$, there exists $M \in \bbN$, $R_0 > 0$, and $K > 1$ such that for every $m > M$, with probability at least $1-\delta$ over random initialization, gradient descent with learning rate $\eta = \eta_0 / m$ applied for $T$ steps satisfies,
\[\Vert \btheta_T - \btheta_0 \Vert_2 \leq \frac{3KR_0}{\lambda_{\min}}m^{-1/2}.\]
\end{lemma}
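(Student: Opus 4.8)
The plan is to run the standard neural tangent kernel convergence argument on the gradient descent iterates $\btheta_{t+1} = \btheta_t - \eta\nabla\calL(\btheta_t)$ with $\eta = \eta_0/m$, tracking the dynamics in function space and bootstrapping to keep the trajectory inside the ball where Lemma~\ref{lemma:locally_lipschitz_jacobian} is valid. Let $r_t = f_{\btheta_t}(X) - Y$ collect the residuals, so $\calL(\btheta_t) = \tfrac12\Vert r_t\Vert^2$ and the step is $\btheta_{t+1} - \btheta_t = -\eta J(\btheta_t)^\top r_t$. A mean-value expansion of $f$ along this segment gives $r_{t+1} = r_t - \eta J(\btheta_t)J(\tilde{\btheta}_t)^\top r_t$ for some $\tilde{\btheta}_t$ between $\btheta_t$ and $\btheta_{t+1}$; replacing both Jacobians by $J(\btheta_0)$ at the cost of an $O(\Vert\btheta_t-\btheta_0\Vert)$ error (Lemma~\ref{lemma:locally_lipschitz_jacobian}) and using the spectral bounds of Assumption~2 on the empirical NTK, one gets $\Vert r_{t+1}\Vert \le (1-c)\Vert r_t\Vert$ for a constant $c = c(\eta_0,\lambda_{\min},\lambda_{\max}) > 0$ that does not depend on $m$, as long as $\btheta_t$ still lies within a radius-$Cm^{-1/2}$ ball about $\btheta_0$. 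Iterating gives geometric decay $\Vert r_t\Vert \le (1-c)^t\Vert r_0\Vert$.

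Given geometric decay of the residuals, the displacement is controlled by a telescoping estimate: $\Vert\btheta_T - \btheta_0\Vert \le \sum_{t=0}^{T-1}\eta\Vert J(\btheta_t)^\top r_t\Vert \le \eta\,(K\sqrt{m})\sum_{t\ge 0}\Vert r_t\Vert \le \frac{\eta_0 K\sqrt m}{m}\cdot\frac{\Vert r_0\Vert}{c} = \frac{\eta_0 K}{c}\,\Vert r_0\Vert\, m^{-1/2}$, where $\Vert J\Vert_{\mathrm{op}}\le\Vert J\Vert_F\le K\sqrt m$ again comes from Lemma~\ref{lemma:locally_lipschitz_jacobian}. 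Since at initialization $r_0 = f_{\btheta_0}(X) - Y$ is $O(1)$ with probability at least $1-\delta$ (this is where a concentration argument on the Gaussian-process-like initial outputs, together with Assumption~3 that the data lies in a compact set, enters), we may take $R_0$ to be a constant high-probability bound on $\Vert r_0\Vert$; folding $\eta_0 K/c$ and $R_0$ together and noting $c$ is of order $\eta_0\lambda_{\min}$ recovers the stated bound $\Vert\btheta_T-\btheta_0\Vert\le 3KR_0\lambda_{\min}^{-1}m^{-1/2}$.

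The step I expect to be the main obstacle is the circularity between the two estimates above: the contraction $\Vert r_{t+1}\Vert\le(1-c)\Vert r_t\Vert$ is only justified while $\btheta_t$ stays in the radius-$Cm^{-1/2}$ ball, yet the proof that it stays there relies on the contraction. I would close this with a discrete continuity (induction) argument on $t$: assuming $\Vert\btheta_s-\btheta_0\Vert\le Cm^{-1/2}$ for all $s\le t$, derive the one-step contraction, deduce $\Vert\btheta_{t+1}-\btheta_0\Vert \le \frac{\eta_0 K}{c}\Vert r_0\Vert m^{-1/2}$, and choose $M$ large and $C$ so that this is strictly less than $Cm^{-1/2}$, which closes the induction for all $m > M$. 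The delicate accounting is in absorbing the dropped error terms --- the $O(\Vert\btheta_t-\btheta_0\Vert^2)$ curvature contribution from the expansion of $f$ and the $J(\btheta_t)$-versus-$J(\tilde{\btheta}_t)$ gap --- into the contraction factor; this is exactly what forces $m$ to be large enough that these are dominated by $\eta_0\lambda_{\min}$, and carrying the $1-\delta$ event through the initialization-dependent constants $K$ and $R_0$ is the remaining bookkeeping.
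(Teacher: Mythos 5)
The paper does not actually prove this lemma: it is imported verbatim as Theorem~G.1 of \citet{lee2019wide}, so there is no in-paper proof to compare against, only the citation. Your reconstruction --- geometric decay of the function-space residual via the NTK spectral bounds, a telescoping bound on the displacement using $\Vert J\Vert_F \le K\sqrt{m}$ and $\eta = \eta_0/m$, and an induction to close the circularity between the one-step contraction and the requirement that the iterates remain in the $O(m^{-1/2})$ ball where Lemma~\ref{lemma:locally_lipschitz_jacobian} applies --- is essentially the argument given in that reference, and it is sound, with the stated constant arising from a contraction factor of order $\eta_0\lambda_{\min}/3$ and $R_0$ a high-probability bound on the initial residual norm.
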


\paragraph{MLI for infinite width networks.}
From the above, we can prove that in the limit of infinite width, gradient descent with a suitably small learning rate finds a solution that is linearly connected to the initialization.

Intuitively, this result holds as in a region near a minimum the objective is locally convex. As the width of the network grows, the minimum found by gradient descent becomes arbitrarily close to initialization and thus the linear interpolation is acting over a convex function.

For completeness, we first provide a simple proof that linear interpolations satisfy the MLI property in convex loss landscapes. The result itself follows from standard techniques presented in, for example, \citet{boyd2004convex}.
\begin{restatable}[Linearity and convexity gives MLI]{lemma}{linearconvexmli}\label{lemma:linear_convex_mli}
Let $\calL : \bbR^d \rightarrow \bbR$ be a convex, differentiable loss function. Further, let $\btheta^* \in \argmin \calL$. Then, for all $\btheta_0 \in \bbR^d$, $g(\alpha) := \calL(\btheta_0 + \alpha(\btheta^* - \btheta_0))$ is monotonically decreasing for $\alpha \in [0,1)$.
\end{restatable}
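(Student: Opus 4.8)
The plan is to exploit the fact that the restriction of a convex function to a line is again convex, together with the defining property of $\btheta^*$ as a global minimizer. First I would observe that $g(\alpha) := \calL(\btheta_0 + \alpha(\btheta^* - \btheta_0))$ is a convex function of the scalar $\alpha$ on $[0,1]$, being the composition of the convex map $\calL$ with the affine map $\alpha \mapsto \btheta_0 + \alpha(\btheta^* - \btheta_0)$. I would also record that $g(1) = \calL(\btheta^*) = \min_{\btheta} \calL(\btheta) \le g(\alpha)$ for every $\alpha \in [0,1]$, since $\btheta^*$ minimizes $\calL$ globally.

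Next, given arbitrary $\alpha_1 < \alpha_2$ in $[0,1)$, I would write $\alpha_2$ as a convex combination of $\alpha_1$ and $1$, namely $\alpha_2 = (1-t)\alpha_1 + t\cdot 1$ with $t = \frac{\alpha_2 - \alpha_1}{1 - \alpha_1} \in (0,1)$, and apply convexity of $g$: $g(\alpha_2) \le (1-t)\,g(\alpha_1) + t\,g(1) \le (1-t)\,g(\alpha_1) + t\,g(\alpha_1) = g(\alpha_1)$, where the second inequality uses $g(1) \le g(\alpha_1)$. This gives $g(\alpha_2) \le g(\alpha_1)$, i.e.\ $g$ is non-increasing on $[0,1)$, which is the sense of ``monotonically decreasing'' used elsewhere in the paper.

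Since differentiability is assumed, I could instead give a short gradient argument as an alternative: for $\alpha \in [0,1)$ set $\btheta_\alpha := \btheta_0 + \alpha(\btheta^* - \btheta_0)$, so that $\btheta^* - \btheta_0 = \frac{1}{1-\alpha}(\btheta^* - \btheta_\alpha)$ and hence $g'(\alpha) = \frac{1}{1-\alpha}\langle \nabla\calL(\btheta_\alpha),\, \btheta^* - \btheta_\alpha\rangle$; the first-order convexity inequality $\calL(\btheta^*) \ge \calL(\btheta_\alpha) + \langle \nabla\calL(\btheta_\alpha),\, \btheta^* - \btheta_\alpha\rangle$ together with $\calL(\btheta^*) \le \calL(\btheta_\alpha)$ forces $g'(\alpha) \le 0$, and integrating recovers monotonicity. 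I would present the first argument as the main proof, as it is marginally cleaner and avoids the $\frac{1}{1-\alpha}$ bookkeeping.

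There is no genuine obstacle here; the only points deserving a word of care are (i) making explicit which weak sense of ``decreasing'' is intended (the paper's $\Delta$-monotonicity and MLI definitions use $\calL(\alpha_2) \le \calL(\alpha_1)$, so non-increasing is exactly what is needed), and (ii) the restriction to $[0,1)$, which matters only for the differentiable form, to keep $\frac{1}{1-\alpha}$ finite, and is harmless since continuity of $g$ would extend the conclusion to the closed interval regardless.
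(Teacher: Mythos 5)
Your proposal is correct and matches the paper's approach: the paper's one-line proof is exactly your ``alternative'' first-order argument, applying the convexity inequality $g'(\alpha) \leq \frac{g(1)-g(\alpha)}{1-\alpha} \leq 0$ to the convex scalar function $g$ with $g(1)$ the global minimum. Your primary secant-inequality version is a trivially equivalent (and slightly more elementary, differentiability-free) rendering of the same idea, so there is nothing to flag.
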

\begin{proof}
We have that $g(\alpha)$ is also a convex, differentiable function. Therefore, using the first-order convexity condition on $g$,
\[g'(\alpha) \leq \frac{g(1) - g(\alpha)}{1 - \alpha} \leq 0\]
\end{proof}

We now proceed with the main result of this section.

\begin{restatable}[Wide networks satisfy the MLI Property]{theorem}{infwidthmli}\label{thm:inf_width_mli}
Assume conditions 1-4 above. For all $\delta > 0$ and $\eta_0 < \eta_{\textrm{critical}}$, there exists $M \in \bbN$ such that for every $m > M$, with probability at least $1-\delta$ over random initialization, gradient descent with learning rate $\eta = \eta_0 / m$ satisfies,
\[\calL(\btheta_{\alpha_2}) - \calL(\btheta_{\alpha_1}) \leq 0,\]
for all $\alpha_2 > \alpha_1 \in [0,1)$.
\end{restatable}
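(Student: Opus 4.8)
The plan is to pass to the function-space picture and then apply Theorem~\ref{thm:small_gauss_mse_mono}. Write $\btheta^* = \lim_{T\to\infty}\btheta_T$ for the converged solution; Lemma~\ref{lemma:small_weight_change} bounds $\Vert\btheta_T - \btheta_0\Vert_2$ uniformly in $T$ (w.h.p.), so this limit exists and satisfies $\Vert\btheta^* - \btheta_0\Vert_2 \leq \tfrac{3KR_0}{\lambda_{\min}}m^{-1/2}$, and the whole segment $\{\btheta_\alpha = \btheta_0 + \alpha(\btheta^* - \btheta_0) : \alpha\in[0,1]\}$ lies in a ball of that radius. Stack the outputs into $z(\alpha) = (f(\btheta_\alpha;\bx_i))_{i=1}^n \in \bbR^{nk}$ and $z^* = (\by_i)_{i=1}^n$, so that $\calL(\btheta_\alpha) = \tfrac12\Vert z(\alpha) - z^*\Vert_2^2$ (the $\tfrac12$ is immaterial for monotonicity). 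First I would note that for $m$ large a wide network interpolates the (duplicate-free, by condition~3) training set, so $f(\btheta^*;\bx_i) = \by_i$ and hence $z(1) = z^*$; together with $\calL(\btheta_0) > 0$ (true almost surely at random initialization), this is exactly the hypothesis of Theorem~\ref{thm:small_gauss_mse_mono}. So it suffices to show the Gauss length of $z$ is below $\pi/2$ with high probability.

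For the Gauss length I would use that the integrand of Definition~\ref{def:gausslength} equals $\kappa(\alpha)\Vert\dot z(\alpha)\Vert_2 \leq \Vert\ddot z(\alpha)\Vert_2 / \Vert\dot z(\alpha)\Vert_2$, where $\dot z(\alpha) = J(\btheta_\alpha)(\btheta^* - \btheta_0)$ and $\ddot z(\alpha)$ has $c$-th component $(\btheta^* - \btheta_0)^\top \nabla_\btheta^2 f_c(\btheta_\alpha;\bx_i)(\btheta^* - \btheta_0)$. The numerator is controlled by the near-linearity of wide networks: Lemma~\ref{lemma:locally_lipschitz_jacobian} bounds the network Hessian on the shrinking ball, and combined with $\Vert\btheta^* - \btheta_0\Vert_2 = O(m^{-1/2})$ this drives $\Vert\ddot z(\alpha)\Vert_2 \to 0$ uniformly in $\alpha$. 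For the denominator I would show $\Vert\dot z(\alpha)\Vert_2$ does not vanish relative to the numerator: along the short segment $J(\btheta_\alpha)$ is within $O(m^{-1/2})$ of $J(\btheta_0)$ by Lemma~\ref{lemma:locally_lipschitz_jacobian}, and the gradient-descent displacement $\btheta^* - \btheta_0 = -\eta\sum_t J(\btheta_t)^\top r(\btheta_t)$ (with $r(\btheta) = (f(\btheta;\bx_i)-\by_i)_i$) lies, up to $o(1)$ relative error, in the row space of $J(\btheta_0)$, on which condition~2 (full-rank NTK) gives $\Vert J(\btheta_0)v\Vert_2 \geq \sqrt{n\,\lambda_{\min}}\,\Vert v\Vert_2$. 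Hence the integrand is $O\!\big(\Vert\nabla_\btheta^2 f\Vert\,\Vert\btheta^* - \btheta_0\Vert_2\big) \to 0$, so the Gauss length tends to $0$ and in particular is $< \pi/2$ for all $m$ past some $M$. Theorem~\ref{thm:small_gauss_mse_mono} then gives monotone decrease of $\calL(\btheta_\alpha)$ on $(0,1)$, and continuity at $\alpha=0$ extends this to $[0,1)$, which is the claim.

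As an alternative route that more literally matches the ``locally convex near a minimum'' intuition stated in the text, one can instead show $g(\alpha) := \calL(\btheta_\alpha)$ is convex on $[0,1]$ for $m$ large: decompose $\nabla_\btheta^2\calL(\btheta_\alpha) = J(\btheta_\alpha)^\top J(\btheta_\alpha) + \sum_{i,c} r_{i,c}(\btheta_\alpha)\nabla_\btheta^2 f_c(\btheta_\alpha;\bx_i)$, observe the first term is positive semidefinite and, by the NTK argument above, strictly positive in the displacement direction, and bound the residual term via the vanishing network Hessian; since $\btheta^*$ is a global minimizer, $g(1) \leq g(\alpha)$, and a convex function whose minimum over $[0,1]$ is attained at the right endpoint is non-increasing there --- this is the first-order convexity estimate already used in Lemma~\ref{lemma:linear_convex_mli}.

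The main obstacle in either route is the uniform-in-$\alpha$ lower bound on $\Vert\dot z(\alpha)\Vert_2$ (equivalently, strict positivity of $J(\btheta_\alpha)^\top J(\btheta_\alpha)$ restricted to the displacement direction): it requires \emph{simultaneously} that the Jacobian barely moves along the segment and that the cumulative gradient-descent update does not drift out of the Jacobian's row space, both of which rest on the linearized-dynamics analysis of \citet{lee2019wide} rather than on Lemmas~\ref{lemma:locally_lipschitz_jacobian}--\ref{lemma:small_weight_change} alone. The other delicate point is tracking the width dependence of $\Vert\nabla_\btheta^2 f\Vert$ carefully enough that the ratio $\Vert\ddot z\Vert_2/\Vert\dot z\Vert_2$ (respectively, the residual term in the Hessian) is provably $o(1)$.
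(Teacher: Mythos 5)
Your proposal takes a genuinely different route from the paper. The paper's proof invokes neither Theorem~\ref{thm:small_gauss_mse_mono} nor a convexity argument for $\alpha \mapsto \calL(\btheta_\alpha)$. Instead it Taylor-expands $\calL(\btheta_\alpha)$ about $\btheta_0$ in weight space with a Lagrange remainder, and then uses the integral form of Taylor's theorem applied to $f$ itself to rewrite the first-order term $\nabla_\btheta\calL(\btheta_0)^\top(\btheta_T-\btheta_0)$ as $\tfrac{1}{n}\sum_i (f(\bx_i;\btheta_0)-\by_i)^\top(f(\bx_i;\btheta_T)-f(\bx_i;\btheta_0))$ plus a correction that is quadratic in $\btheta_T-\btheta_0$. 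The rewritten first-order term is nonpositive by convexity of the squared error in function space (for any finite $T$, and no matter how small $\Vert f(\btheta_T)-f(\btheta_0)\Vert$ is), while every remaining term carries a factor $(\btheta_T-\btheta_0)^\top(\cdot)(\btheta_T-\btheta_0)$ weighted by network Hessians and is controlled by combining Lemma~\ref{lemma:small_weight_change} ($\Vert\btheta_T-\btheta_0\Vert = O(m^{-1/2})$) with Lemma~\ref{lemma:locally_lipschitz_jacobian}. The decisive structural difference is that the paper's argument never needs a \emph{lower} bound on $\Vert J(\btheta_\alpha)(\btheta_T-\btheta_0)\Vert$, whereas both of your routes hinge on one.

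That lower bound is precisely the gap you flag, and it is a genuine one: uniform-in-$\alpha$ non-degeneracy of the Jacobian in the displacement direction does not follow from Lemmas~\ref{lemma:locally_lipschitz_jacobian}--\ref{lemma:small_weight_change} alone, and as written your argument defers it to the linearized-dynamics analysis of \citet{lee2019wide} rather than establishing it. A second issue specific to your first route: Theorem~\ref{thm:small_gauss_mse_mono} requires $\bz(1)=\bz^*$ exactly, but the theorem (via Lemma~\ref{lemma:small_weight_change}) concerns gradient descent run for finitely many steps, where the residual is small but nonzero. You must therefore either pass to $T\to\infty$ (which changes the statement being proved) or run the Gauss-length argument against the surrogate target given by the achieved final outputs and then separately show that monotonicity of the surrogate objective transfers to the true loss --- a transfer that is not automatic and that the paper's appendix only asserts ``aligns at the global minimum.'' Your second route (convexity of the restricted loss plus Lemma~\ref{lemma:linear_convex_mli}) is closer in spirit to the paper's informal motivation in Section~3.2, but it faces the same non-degeneracy requirement, since positive semidefiniteness of $J^\top J$ alone does not rule out a flat direction along the segment once the negative-definite residual-times-Hessian term is added. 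The paper's decomposition is worth internalizing because it trades this hard quantitative lower bound for the soft inequality $(f(\btheta_0)-\by)^\top(f(\btheta_T)-f(\btheta_0)) \leq \calL(\btheta_T)-\calL(\btheta_0) \leq 0$.
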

\begin{proof}
For brevity, we write $\Delta\btheta = \btheta_T - \btheta_0$, with $\btheta_\alpha = \btheta_0 + \alpha \Delta\btheta$. Our approach is to linearize the loss in function-space and show that all remaining terms are quadratic in $\Delta\btheta$ and so are dominated by the linear terms for a sufficiently wide network.

We begin by considering the Taylor series of $\calL(\btheta_\alpha)$ about $\btheta_0$, using the Lagrange form of the remainder,
\begin{align}
    \calL(\btheta_\alpha) &= \calL(\btheta_0) + \alpha \nabla_{\btheta}\calL(\btheta_0)^\top \Delta\btheta + \frac{1}{2}\alpha^2 \Delta\btheta^\top \nabla^2_{\btheta} \calL(\bx_i; \bar{\btheta}_\alpha) \Delta\btheta,\\
    &= \calL(\btheta_0) + \frac{\alpha}{2n}\sum_{i=1}^n(f(\bx_i; \btheta_0) - \by_i)^\top J(\bx_i; \btheta_0) \Delta\btheta + \frac{1}{2}\alpha^2 \Delta\btheta^\top \nabla^2_{\btheta} \calL(\bx_i; \bar{\btheta}_\alpha) \Delta\btheta,
\end{align}
for some $\bar{\btheta}_\alpha$ on the line $[\btheta_0, \btheta_\alpha]$. Now, noting that the Hessian of $f$ with respect to $\btheta$ is a third-order tensor, we can utilize the integral form of the Taylor expansion to write,
\begin{equation}
    \left(J(\bx_i; \btheta_0)\Delta\btheta\right)_j = f(\bx_i; \btheta_T)_j - f(\bx_i; \btheta_0)_j - \frac{1}{2}\Delta\btheta^\top\left(\int_0^1 \frac{\partial^2 f_j}{\partial \btheta^2}(\bx_i; \btheta_{\alpha'}) d\alpha'\right) \Delta\btheta,
\end{equation}
where the $j$ subscript notation indicates vector indexing. Collecting terms, we have
\begin{align*}
    \calL(\btheta_\alpha) - \calL(\btheta_0) =& \frac{1}{2n}\sum_{i=1}^n \Bigl[\alpha(f(\bx_i; \btheta_0) - \by_i)^\top (f(\btheta_T) - f(\btheta_0)) + \frac{1}{2}\alpha^2 \Delta\btheta^\top \nabla^2_{\btheta} \calL(\bx_i; \bar{\btheta}_\alpha) \Delta\btheta,\\
    & -\frac{1}{2}\alpha\sum_{j=1}^k (f(\bx_i; \btheta_0) - \by_i)_k \Delta\btheta^\top\left(\int_0^1 \frac{\partial^2 f_k}{\partial \btheta^2}(\bx_i; \btheta_{\alpha'}) d\alpha' \right) \Delta\btheta \Bigr].
\end{align*}
Now, noting that $\calL(\btheta_{\alpha_2}) - \calL(\btheta_{\alpha_1}) = \left(\calL(\btheta_{\alpha_2}) - \calL(\btheta_0)\right) - \left(\calL(\btheta_{\alpha_1}) - \calL(\btheta_0)\right)$, we have
\begin{align*}
    \calL(\btheta_{\alpha_2}) - \calL(\btheta_{\alpha_1}) =& \frac{1}{2n}\sum_{i=1}^n \Bigl[(\alpha_2 - \alpha_1)(f(\bx_i; \btheta_0) - \by_i)^\top (f(\btheta_T) - f(\btheta_0))\\
    & + \frac{1}{2}\Delta\btheta^\top\left(\alpha_2^2 \nabla^2_{\btheta} \calL(\bx_i; \bar{\btheta}_{\alpha_2})  - \alpha^2_1 \nabla^2_{\btheta} \calL(\bx_i; \bar{\btheta}_{\alpha_2}) \right)\Delta\btheta,\\
    & -\frac{1}{2}(\alpha_2 - \alpha_1)\sum_{j=1}^k (f(\bx_i; \btheta_0) - \by_i)_k \Delta\btheta^\top\left(\int_0^1 \frac{\partial^2 f_k}{\partial \btheta^2}(\bx_i; \btheta_{\alpha'}) d\alpha' \right) \Delta\btheta \Bigr].
\end{align*}
The first term in the sum is negative as $\calL$ is convex in $f$ (and $\alpha_2 > \alpha_1$). It remains to show that the other terms behave asymptotically like $\Vert\Delta\btheta\Vert^2$. First, notice that we can decompose the Hessian of the loss as follows,
\begin{equation}
    \nabla_{\btheta}^2\calL(\bx_i; \btheta) = J(\bx_i; \btheta)^\top J(\bx_i; \btheta) + \sum_{j=1}^k (f(\bx_i; \btheta) - \by_i)_j \frac{\partial^2 f_j}{\partial \btheta^2}(\bx_i; \btheta)
\end{equation}
Furthermore, by Lemma~\ref{lemma:small_weight_change}, there exists an $M' \in \bbN$ such that for all $m > M'$ we have $\Vert \Delta\btheta\Vert \leq O(m^{-1/2})$ with probability at least $1 - \delta$. Under this event, we can apply Lemma~\ref{lemma:locally_lipschitz_jacobian} to guarantee that the average Jacobian and Hessian norms are bounded about initialization:
\[\frac{1}{n}\sum_{i=1}^n \left\Vert J(\bx_i; \btheta) \right\Vert^2_F < \infty \:\:\:\textrm{ and }\:\:\: \frac{1}{n}\sum_{i=1}^n\sum_{j=1}^k\left\Vert \frac{\partial^2 f_j}{\partial \btheta^2}(\bx_i; \btheta) \right\Vert^2_F < \infty.\]
Therefore, there exists an $M \geq M'$, such that for all $m > M$ the negative first-order term dominates the second order terms. Under the $1-\delta$ probability event, this guarantees that the loss is monotonically decreasing along the linear interpolation.
\end{proof}

\subsection{A Noisy Quadratic Model}
\label{app:nqm}
The noisy quadratic model (NQM) \citep{schaul2013no, wu2018understanding, zhang2019algorithmic} serves as a useful guide for understanding the effects of stochasticity in asymptotic neural network training. Indeed, \citet{zhang2019algorithmic} demonstrate that the NQM makes predictions that are aligned with experimental results on deep neural networks. Using this model, we can provide an explanation for one possible cause of non-monotonicity: an inflection point of the interpolation curve with positive second derivative close to $\alpha=1$. Intuitively, we can imagine a bowl-shaped loss surface where the final parameters lies on the opposite side of the optima relative to the initialization. This non-monotonicity is likely to occur when training with smaller batch sizes and/or using larger (fixed) learning rates.

Let our loss function be as follows:
\def\rvtheta{{\boldsymbol{\theta}}}
\def\c{{\textbf{c}}}
\begin{align}
    \mathcal{L}(\rvtheta) = \frac{1}{2} \rvtheta^{\top} \mathbf{K} \rvtheta,
\end{align}
where $\rvtheta \in \mathbb{R}^d$ and $\mathbf{K} \in \mathbb{R}^{d \times d}$. The optimization algorithm receives stochastic gradients $\mathbf{K} \rvtheta + \mathbf{c}$, where $\mathbf{c} \sim \calN (\mathbf{0}, \mathbf{K})$. Consider the iterates $\{\rvtheta_{i}\}_{i=0}^{\top}$ produced by gradient descent. With a sufficiently small learning rate, the expected value of the iterate converges i.e. $\lim_{t \to \infty} \bbE[\mathcal{L}(\rvtheta_t)] = 0$. 

Also consider interpolating between arbitrary $\btheta_1$ and $\btheta_2$. The loss along the interpolation direction is $\mathcal{L}(\btheta_1 + \alpha (\btheta_2 - \btheta_1))$. We compute the derivative with respect to $\alpha$:
\begin{align}
    \frac{\partial \mathcal{L}}{\partial \alpha} (\btheta_1 + \alpha (\btheta_2 - \btheta_1))
    &= \frac{\partial}{\partial \alpha}  \left[\frac{1}{2}(\btheta_1 + \alpha (\btheta_2 - \btheta_1))^{\top} \mathbf{K} (\btheta_1 + \alpha (\btheta_2 - \btheta_1)) \right] \\ 
    &=  (\btheta_2 - \btheta_1)^{\top} \mathbf{K} (\btheta_1 + \alpha (\btheta_2 - \btheta_1)) 
\end{align}
Hence, the loss is monotonically decreasing if, for all $\alpha \in [0,1]$, 
\begin{align}
    (\btheta_2 - \btheta_1)^{\top} \mathbf{K} (\btheta_1 + \alpha (\btheta_2 - \btheta_1)) < 0
\end{align}
In the one dimension case, this equation is saying that interpolation is non-monotonic when $\btheta_1$ and $\btheta_2$ are on the opposite side of the minima. More generally, note that because $\frac{\partial \mathcal{L}}{\partial \alpha}$ is linear in $\alpha$, the interpolation is monotonically decreasing if and only if both of these conditions at the endpoints are satisfied:
\begin{align}
  (\btheta_2 - \btheta_1)^{\top} \mathbf{K} \btheta_1 &< 0 \\
  (\btheta_2 - \btheta_1)^{\top} \mathbf{K} \btheta_2 &< 0 
\end{align}
These two conditions correspond to a negative derivative with respect to $\alpha$ at $\btheta_1$ and $\btheta_2$. Since we choose a learning rate so that the loss decreases in expectation (and hence the derivative is anti-aligned with $\c_2 - \c_1$ at initialization), it suffices to check just the second condition.

We simulate learning in this model to measure the effect of stochasticity under varying learning rates on the MLI property. As in \citet{zhang2019algorithmic}, we use $\btheta_1 := \btheta \sim \calN(\mathbf{0}, \mathbf{I})$ and $\mathbf{K} = diag \{1, \frac{1}{2}, \frac{1}{3}, \dots, \frac{1}{d}\}$. As $t \to \infty$, the point $\btheta_2 := \btheta_T \sim \calN(\mathbf{0}, \eta \mathbf{K})$, where $\eta$ is the final learning rate and the random variable comes from the noise in the gradient. 
Through empirical simulations, we verify that this is approximately a symmetric distribution about $0$, so the probability we have monotonic interpolation is roughly $\frac{1}{2}$. This is empirically verified in Figure~\ref{fig:nqm}. A smaller learning rate means that the distribution of  $(\btheta_2 - \btheta_1)^{\top} \mathbf{K} \btheta_2$ has less variance. Because we discretize $\alpha$ when we check for MLI, we have $P((\btheta_2 - \btheta_1)^{\top} \mathbf{K} \c_2) < \epsilon)$ increases as the learning rate decreases for some small $\epsilon$. 

\begin{figure}[t]
    \begin{minipage}{0.49\linewidth}
    \centering
    \includegraphics[width=\linewidth]{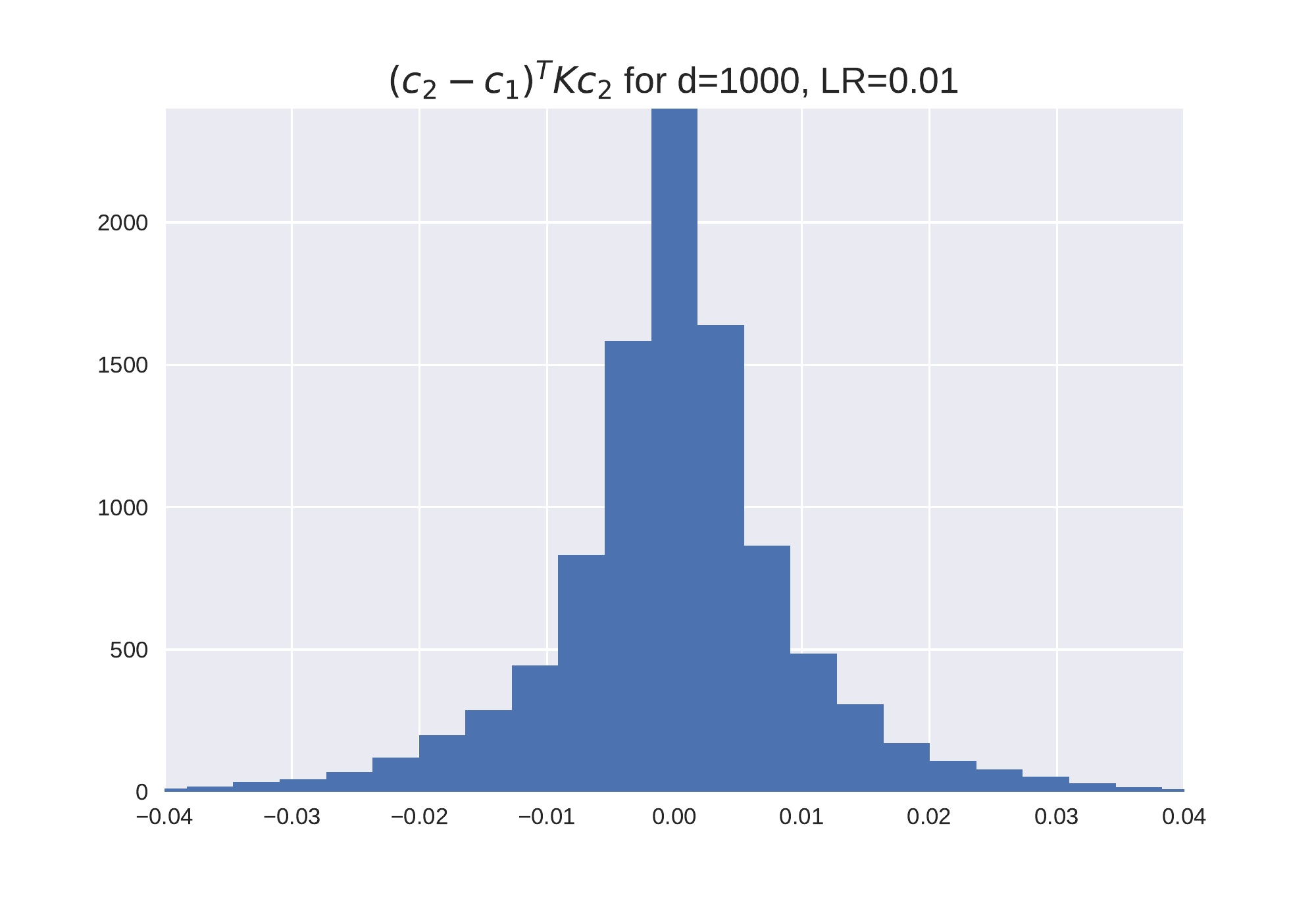}
    \end{minipage}\hfill%
    \begin{minipage}{0.49\linewidth}
    \centering
    \includegraphics[width=\linewidth]{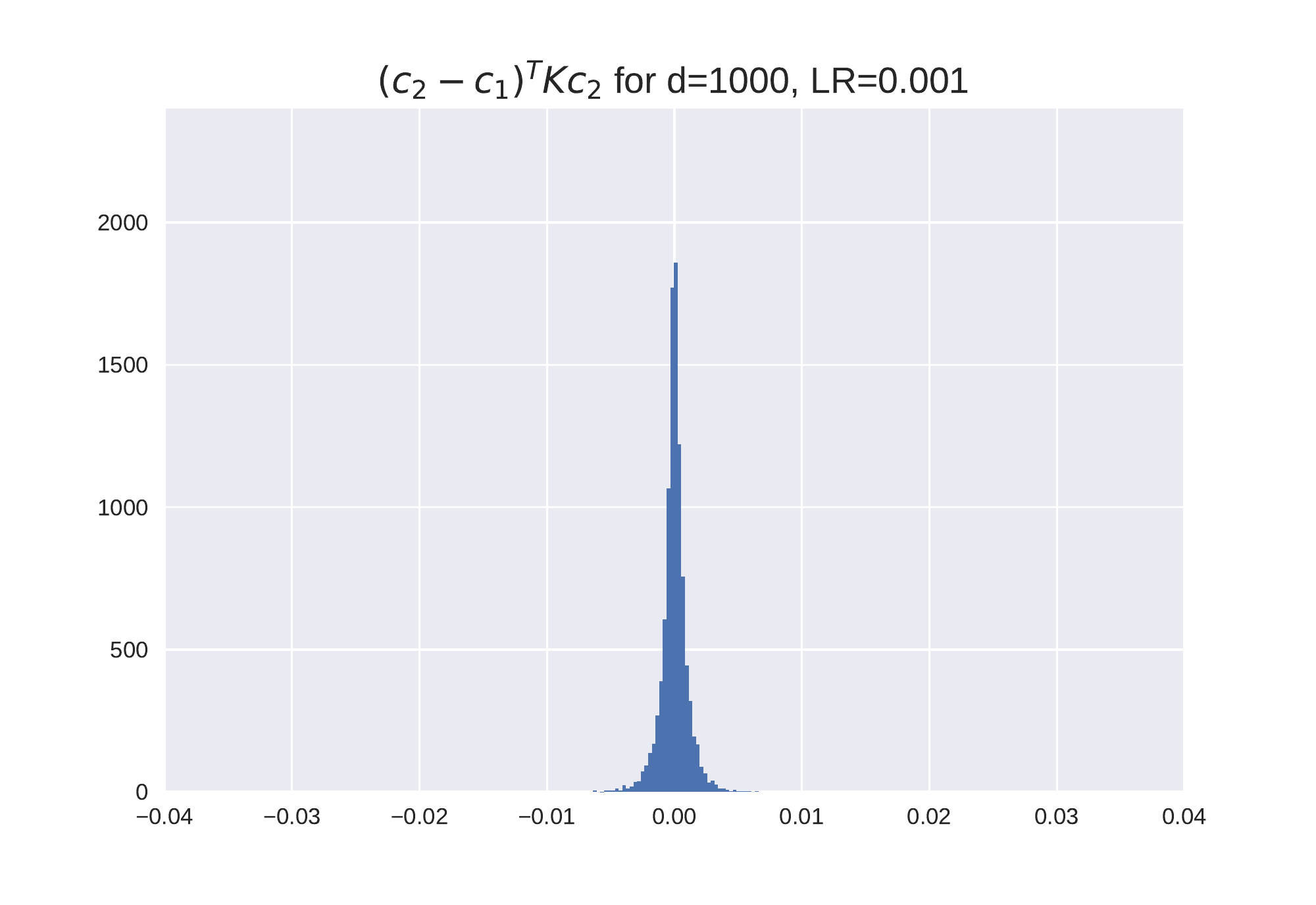}
    \end{minipage}
    \caption{For smaller learning rates, the standard deviation of the distribution goes down. Hence the probability that $P((\btheta_2 - \btheta_1)^{\top} \mathbf{K} \c_2) < \epsilon$ for some small $\epsilon$ goes up (indicating non-monotonicity from a inflection point near the optima that is hard to detect). We use an equal number of bins in both plots.}
    \vspace{-0.4cm}
    \label{fig:nqm}
\end{figure}

\end{document}